\title{Being Properly Improper}
\author{
Tyler Sypherd$^{\ddagger}$ \qquad Richard Nock$^{\dagger}$ \qquad Lalitha Sankar$^{\ddagger}$ \\\\
{\small $^\ddagger$School of Electrical, Computer, and Energy Engineering}\\{\small 
Arizona State University, Tempe, AZ 85281}\\{\small
\texttt{$\{$tsypherd,lsankar$\}$@asu.edu}}
\\
\\ {\small $^\dagger$Google Research} \\
{\small \texttt{richardnock@google.com}}
}
\date{}
\begin{document}
\thispagestyle{empty}
\maketitle

\begin{abstract}

Properness for supervised losses stipulates that the loss function shapes the learning algorithm towards the true posterior of the data generating distribution. 
Unfortunately, data in modern machine learning can be corrupted or \emph{twisted} in many ways. 
Hence, optimizing a proper loss function on twisted data could perilously lead the learning algorithm towards the twisted posterior, rather than to the desired clean posterior.
Many papers cope with specific twists (e.g., label/feature/adversarial noise), but there is a growing need for a unified and actionable understanding atop properness.
Our chief theoretical contribution is a generalization of the properness framework with a notion called \textit{twist-properness}, which delineates loss functions with the ability to ``untwist'' the twisted posterior into the clean posterior.
Notably, we show that a nontrivial extension of a loss function called $\alpha$-loss, which was first introduced in information theory, is twist-proper.
We study the twist-proper $\alpha$-loss under a novel boosting algorithm, called \pmboost, and provide formal and experimental results for this algorithm.
Our overarching practical conclusion is that the twist-proper $\alpha$-loss outperforms the proper $\log$-loss on several variants of twisted data.

\end{abstract}
\section{Introduction}

The loss function is a cornerstone of machine learning (ML).
The founding theory of properness for supervised losses stipulates that the loss function shapes the learning algorithm towards the true posterior~\cite{rwID}. 
Consequently, a model trained with a proper loss function will try to closely approximate the Bayes rule of the data generating distribution. 
%
Historically, properness draws its roots from classical work in normative economics for \textit{class probability estimation} (\cpe)~\cite{rwID,sEO,samAP}; some of the most famous losses in supervised learning are proper, e.g., log, square, Matusita, to name a few.
Unfortunately in many modern applications, data can be corrupted or \emph{twisted} in various ways (see Section~\ref{sec-related}); examples of twists include label noise, adversarial noise, and feature noise.
Thus, optimizing a proper loss function on twisted data could perilously lead the learning algorithm towards the Bayes rule of the twisted posterior, rather than to the desired clean posterior.
To ensure that a model trained with a proper loss function on twisted data properly generalizes to the clean distribution, a generalization of properness is clearly required. 
%
%

To this end, we propose the notion of \textit{twist-properness}.
In words, a loss function is twist-proper if and only if (iff), for any twist, there exist hyperparameter(s) of the loss which allow its minimizer to ``untwist'' the twisted posterior into the clean posterior.
Such a generalization would be less impactful without a solid contender loss, and we show that a nontrivial extension of $\alpha$-loss, which itself is an information-theoretic hyperparameterization of the $\log$-loss~\cite{aIT,lkspAT,sdskAT}, \textit{is} twist-proper and exhibits desirable properties for local and global (namely, fixed hyperparameter) twist corrections. 
Furthermore twist-properness is not vacuous, as we provide a counterexample that another (popular) generalization of the $\log$-loss, the focal loss~\cite{lgghdFL}, which was originally designed to solve specific twists, i.e., class imbalance, is \textit{not} twist-proper. 
In addition, we provide a proof that a loss which acts as a general ``wrapper'' of a loss, the Super Loss~\cite{cwrSA}, is also \textit{not} twist proper. 
One of our key takeaways is that twist-properness necessitates a certain nontrivial \textit{symmetry} of the loss, rather than merely a trivial extension of the hyperparameter(s).
%
%
%
%

Recently, $\alpha$-loss was practically implemented in logistic regression and in deep neural networks~\cite{sypherd2019journal}.
In both settings, it was shown to be more robust to symmetric label noise for fixed $\alpha > 1$ than the proper $\log$-loss ($\alpha = 1$), thereby providing a hint at the twist-properness of $\alpha$-loss.
In order to complement our theory of twist-properness and these recent results regarding the robustness of $\alpha$-loss, we also practically implement $\alpha$-loss in boosting. 
Boosting is imbued with the computational constraint that strong learning happens from ``weak updates'' in polynomial time, thus inducing substantial convergence rates~\cite{kvAI}.
Furthermore, boosting algorithms are known to suffer under label noise, particularly for convex losses~\cite{long2010random}.
Thus, boosting presents as an ideal choice to further practically investigate the twist-properness of $\alpha$-loss. 

In order to implement $\alpha$-loss in boosting, a popular route is to invert the canonical link of the loss which computes the weighting of the examples~\cite{fGFA,nnOT}. 
While this is feasible for the $\log$-loss (one gets the popular sigmoid function), it turns out to be nontrivial for $\alpha$-loss. 
We address this issue by providing the first (to the best of our knowledge) general boosting scheme (called \pmboost) for any loss which requires only an approximation of the inverse canonical link, depending on a parameter $\zeta \in [0,1]$ (the closer to $0$, the better the approximation), and gives boosting-compliant convergence, further meeting the general optimum number of calls to the weak learner. 
The cost of this approximation is only a factor $O(1/(1-\zeta)^2)$ in number of iterations. 
Since the implementation of boosting algorithms faces machine types approximations to store the weights, a downstream consequence of 
our analysis is that it also articulates how such practical implementations (using link approximations) can approach the idealized convergence rate~\cite{nwLO}. 
\section{Related work}\label{sec-related}

Studying data corruption in ML dates back to the 80s \cite[Section 4]{vLD}. Remarkably, the first twist models assumed very strong corruption, possibly coming from an adversary with unbounded computational resources, \textit{but} the data at hand was binary. 
Thus, because the feature space was as ``complex'' as the class space, the twist models lacked the unparalelled data complexity that we now face. 
Obtaining such twist models at scale with real world data has been a major problem in ML over the past decade for a number of reasons, not all of which are borne out of bad intent. 
Nevertheless, there have been several streams of recent research aimed at addressing specific twists. 

\textit{Label noise} is a twist which has recently drawn much attention and garnered many corrective attempts~\cite{prmnqMD,zsGC,znsLN,natarajan2013learning,long2010random,sypherd2019journal}.
Notably,~\cite{natarajan2013learning} theoretically study the presence of class conditional noise in binary classification. 
Their approach consists of augmenting proper loss functions with re-weighting coefficients, which is strictly dependent on the class conditional noise percentages, and hence requires knowledge of the noise proportions. 
As a byproduct of their analysis, they show that biased SVM and weighted logistic regression are provably noise-tolerant. 

Setting label noise aside, there exists a zoo of other twists and corrective attempts.
For instance, \textit{data augmentation} techniques, with vicinal risk minimization standing as a pioneer~\cite{cwbvVR}, seek to induce general robustness~\cite{zcdlMB}. 
In deep learning, \textit{adversarial robustness} attempts to address the brittleness of neural networks to targeted adversarial noise~\cite{szsbegfIP,mmstvTD,ahPR}. 
\textit{Data poisoning} twists in computer vision can be very sophisticated and require further investigation~\cite{tjhapjntSE}. 
\textit{Invariant risk minimization} aims at finding data representations yielding good classifiers but also invariant to ``environment changes'' ~\cite{abglIR}; relatedly, \textit{covariate shift} seeks to address changes between train and test, stemming from non-stationarity or bias in the data~\cite{zylsAO}. 
A recent trend has also emerged with correcting losses due to model \textit{confidence} issues~\cite{gpswOC,mksgtdCD,cwrSA}. 

Viewed more broadly, the abovementioned papers 
study arguably much different problems, but they tend to have a theme that goes substantially deeper than the superficial observation that they assume twisted data in some way: the core loss function  
is usually a \textit{proper} loss. 
Therefore, they tend to start from the premise of a loss that inevitably fits the (unwanted) twist, and correct it mostly with a regularizer informed with some prior knowledge of the twist, on a ``twist-by-twist'' basis. 
There has been some positive work in this ``loss + regularizer'' direction~\cite{awakRB,znsLN}, but we note that this does not fully address the underlying issue of properness perilously shaping the learning algorithm towards the twisted posterior. 

Lastly, our generalization of properness with twist-properness is partly inspired by recent work by~\cite{cvcsOF},
where they theoretically investigate the focal loss. 
Notably, they show that the focal loss is classification-calibrated, but not strictly proper. 
From their work, we also gather the implicit notion that hyperparameterized losses that generalize proper losses (e.g., focal loss or $\alpha$-loss generalizing $\log$-loss), which may represent a next step for loss functions in ML, need to be carefully understood from the standpoint of what their hyperparameterization trades-off from properness.

\section{Losses for class-probability estimation}

Our setting is that of losses for class probability estimation (\cpe) and our notations follow~\cite{rwCB,rwID}. 
Given a domain of observations $\mathcal{X}$, we wish to learn a classifier $h:\mathrm{dom}(h) = \mathcal{X}$ that predicts the label $\Y\in \mathcal{Y} \defeq \{-1,1\}$ (without loss of generality, we assume two classes or labels) associated with every instance of data drawn from $\mathcal{X}$. 
Traditionally, there are two kinds of outputs sought: one requires 
$\mathrm{Im}(h) = [0,1]$, in which case $h$ provides an estimate of $\pr[\Y =1 | \X]$, which is often called the Bayes posterior.
This is the framework of class probability estimation. The other kind of output requires $\mathrm{Im}(h) = \mathbb{R}$, but is usually completed by a mapping to $[0,1]$, e.g., via the softmax in deep learning. 
A loss for class probability estimation, $\loss : \mathcal{Y} \times [0,1]
\rightarrow \mathbb{R}$, has the general definition 
\begin{align} \label{eqpartialloss} 
\loss(y,u) & \defeq  \iver{y=1}\cdot \partialloss{1}(u) +
                     \iver{y=-1}\cdot \partialloss{-1}(u),
\end{align}
where $\iver{\cdot}$ is Iverson's bracket \cite{kTN}. Functions $\partialloss{1}, \partialloss{-1}$ are called \textit{partial} losses, minimally assumed to satisfy $\mathrm{dom}(\partialloss{1}) = \mathrm{dom}(\partialloss{-1}) = [0,1]$ and $|\partialloss{1}(u)| \ll \infty, |\partialloss{-1}(u)| \ll \infty, \forall u \in (0,1)$ to be useful for ML.
 Key additional properties of partial losses are:\\
    \noindent \textbf{(M)} Monotonicity: $\partialloss{1}, \partialloss{-1}$ are non-(increa/decrea)sing;\\
    \noindent \textbf{(D)} Differentiability: $\partialloss{1}$ and $\partialloss{-1}$ are  differentiable;\\
    \noindent \textbf{(S)} Symmetry: $\partialloss{1}(u) = \partialloss{-1}(1-u), \forall u\in [0,1]$.\\
  Commonly used proper losses such as log, square and Matusita all satisfy the above three assumptions. The pointwise conditional risk of local guess $u \in [0,1]$ with respect to a \textit{ground truth} $v \in [0,1]$ is  
\begin{eqnarray}
  \poirisk(u,v) & \defeq & \expect_{\Y \sim \bernoulli(v)}\left[\loss(\Y,u)\right] = v\cdot \partialloss{1}(u) + (1-v)\cdot \partialloss{-1}(u) \label{eqpoirisk},
\end{eqnarray}
where $\bernoulli(v)$ defines a Bernoulli distribution with $v$.\\
\noindent \textbf{Properness} $\poirisk(u,v)$ is the fundamental quantity that allows to distinguish proper losses: a loss is \textit{proper} iff for any ground truth $v \in [0,1]$, $\poirisk(v,v) = \inf_u \poirisk(u,v)$, and strictly proper iff $u=v$ is the sole minimiser~\cite{rwID}. The (pointwise) \textit{Bayes} risk is $\poibayesrisk(v) \defeq \inf_u \poirisk(u,v)$.\\
\noindent \textbf{Surrogate loss} Oftentimes, minimization occurs over the reals (e.g., boosting), hence it is useful to employ a surrogate to the $0$-$1$ loss~\cite{bjmCCj}.
~\cite{nnOT} showed that the outputs in $[0,1]$ and $\mathbb{R}$ can be related via convex duality of the losses. 
Let $g^{\star}(z) \defeq \sup_t \{ zt - g(t)\}$ denote the convex conjugate of $g$~\cite{bvCO}. The \textit{surrogate} $\surloss(\cdot)$ of $\poibayesrisk$ is thus given by
\begin{eqnarray}
\surloss(z) & \defeq & (-\poibayesrisk)^{\star}(-z), \forall z\in \mathbb{R}. \label{defsur}
\end{eqnarray}
For example, picking the log-loss as $\ell$ gives the binary entropy for $\poibayesrisk$ and the logistic loss for $F$.
Convex duality implies that predictions in $[0,1]$ and $\mathbb{R}$ are related via {the (canonical) \textit{link} of the loss, $(-\poibayesrisk)'$}~\cite{nwLO} where we use the notation $f'$ to denote the derivative of a function $f$ with respect to its argument.
In the sequel, we will see that boosting requires inverting the link of the loss, which we show is nontrivial for hyperparameterized losses, such as $\alpha$-loss.
Lastly, we provide summary properties of a \cpe~loss (not necessarily proper) and its surrogate, monotonicity being of primary importance. Some parts of the following Lemma are known in the literature (e.g., concavity in~\cite[Lemma 1]{aSR}), or are folklore. 
\begin{lemma}\label{lemUstar}
    $\forall \loss$ \cpe~loss, $\poibayesrisk$ is concave and continuous; $F$ is convex, continuous and non-increasing.
\end{lemma}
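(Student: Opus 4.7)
For the Bayes risk, I would start from its variational form $\underline{L}(v) = \inf_u L(u,v) = \inf_u\{v\,\ell_1(u) + (1-v)\,\ell_{-1}(u)\}$. For each fixed $u$, the map $v \mapsto L(u,v)$ is affine, so $\underline{L}$ is a pointwise infimum of affine functions, which is automatically concave and upper semicontinuous on $[0,1]$. Continuity on the open interval $(0,1)$ is then standard for concave functions taking finite values. To extend continuity to the endpoints, I would argue as follows at $v=0$ (the case $v=1$ is symmetric): monotonicity (M) forces $\underline{L}(0) = \inf_u \ell_{-1}(u)$; given any $\varepsilon>0$, pick $u_0 \in (0,1)$ with $\ell_{-1}(u_0) \le \underline{L}(0) + \varepsilon$, then $\underline{L}(v) \le v\,\ell_1(u_0) + (1-v)\,\ell_{-1}(u_0)$ has limit $\ell_{-1}(u_0)$ as $v\to 0^+$, giving $\limsup_{v\to 0^+}\underline{L}(v) \le \underline{L}(0)$; the matching $\liminf$ bound follows from concavity and the fact that $\underline{L}$ is bounded above on any compact subinterval of $[0,1]$ by using (M) with the finiteness hypothesis on $(0,1)$.

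For the surrogate, I would expand the definition: $F(z) = (-\underline{L})^{\star}(-z) = \sup_{t \in [0,1]}\{-zt + \underline{L}(t)\}$. For each fixed $t$, the map $z\mapsto -zt + \underline{L}(t)$ is affine in $z$, so $F$ is a pointwise supremum of affine functions, hence convex. Finiteness of $F$ on all of $\mathbb{R}$ follows because the supremand is bounded above uniformly in $t\in[0,1]$ by $|z| + \sup_{t\in[0,1]}\underline{L}(t)$, and this latter supremum is finite (use $\underline{L}(t) \le L(1/2, t) = \tfrac{1}{2}(\ell_1(1/2) + \ell_{-1}(1/2))$, which is finite by the basic assumption on partial losses at interior points). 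A convex function that is finite on all of $\mathbb{R}$ is automatically continuous, which handles the second property. Non-increasingness is then immediate: for $z_1 \le z_2$ and any $t \in [0,1] \subseteq \mathbb{R}_{\ge 0}$, $-z_1 t + \underline{L}(t) \ge -z_2 t + \underline{L}(t)$, and taking the supremum in $t$ preserves the inequality, yielding $F(z_1) \ge F(z_2)$.

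\textbf{Main obstacle.} The only genuinely delicate point is continuity of $\underline{L}$ at the endpoints of $[0,1]$: the inf-of-affine representation delivers upper semicontinuity for free, but lower semicontinuity at $v\in\{0,1\}$ is where one must invoke monotonicity (M) together with the extension of concavity up to the boundary. All the other claims (concavity, convexity, non-increasingness, continuity of $F$) are one-line arguments from the variational representations combined with standard convex-analytic facts, and do not rely on properness or on (S), which fits the statement that the lemma applies to any CPE loss.
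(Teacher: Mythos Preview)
Your proposal is correct and covers all the claims, but the route for continuity of $\poibayesrisk$ differs from the paper's. The paper proves continuity by a direct Lipschitz-type estimate: for $a,u\in(0,1)$ it picks $a^*\in\pseudob(a)$, $u^*\in\pseudob(u)$ and uses $\poibayesrisk(u)\le u\,\partialloss{1}(a^*)+(1-u)\,\partialloss{-1}(a^*)=\poibayesrisk(a)+(u-a)(\partialloss{1}(a^*)-\partialloss{-1}(a^*))$ (and the symmetric inequality) to obtain $|\poibayesrisk(a)-\poibayesrisk(u)|\le Z\,|a-u|$ with $Z$ finite. Your approach instead leans on the abstract representation ``infimum of affine functions'' to get concavity and upper semicontinuity in one stroke, and then invokes the standard fact that concave finite functions are continuous on the interior. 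The paper's argument buys a quantitative modulus of continuity (local Lipschitz), whereas yours is shorter and uses no information about the argmin set $\pseudob$. For monotonicity and continuity of $F$, your arguments and the paper's are essentially the same, though you spell out the ``finite convex on $\mathbb{R}\Rightarrow$ continuous'' step more explicitly.

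One small remark on your ``main obstacle'': you invoke \textbf{(M)} to handle the endpoints, but the lemma is stated for an arbitrary \cpe~loss without assuming \textbf{(M)}. In fact you do not need it: upper semicontinuity at $v\in\{0,1\}$ already follows from the inf-of-continuous representation, and \emph{lower} semicontinuity at the boundary follows from concavity alone (for $0<v<t$, $\poibayesrisk(v)\ge \tfrac{t-v}{t}\poibayesrisk(0)+\tfrac{v}{t}\poibayesrisk(t)$, let $v\to 0^+$), provided $\poibayesrisk(0)$ is finite. The paper's own proof also restricts attention to interior points and is no more careful about the boundary than you are, so this is not a gap relative to the paper.
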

A proof is provided for completeness in Appendix~\ref{app-proof-lemUstar}.
 

    \section{Twist-proper losses}\label{sec-imp}

With the classical setting of properness in hand, we now provide fundamental definitions of \textit{twists} and \textit{twist-properness}, and study the twist-properness of several hyperparameterized loss functions.
When it comes to correcting (or untwisting) twists, one needs a loss with the property that its minimizer in~\eqref{eqpoirisk} is different from the now twisted value $\tilde{v}$ \textit{and} recovers the ``hidden'' ground truth $v$. \\
\noindent \textbf{Bayes tilted estimates} We first characterize the minimizers of \eqref{eqpoirisk} when the \cpe~loss is not necessarily proper. 
We define the set-valued (pointwise) Bayes \textit{tilted estimate} $\pseudob$ as
\negspace
\begin{eqnarray} 
\pseudob(\tilde{v}) & \defeq & \arg\inf\limits_{u\in[0,1]} L(u,\tilde{v}). \label{defBTE}
\end{eqnarray}
\negspace
Ideally, we would like for $v \in \pseudob(\tilde{v})$, i.e., the Bayes tilted estimate $\pseudob(\tilde{v})$ untwists (with hyperparameter(s)) the twisted value $\tilde{v}$ and recovers the ground truth $v$.
However, it follows that if the loss is proper, $\tilde{v} \in \pseudob(\tilde{v})$ and, if strictly proper, $\pseudob(\tilde{v}) = \{\tilde{v}\}$. 
This formally highlights the limitation of proper loss functions in twisted settings, namely, the inability of a proper loss to untwist the twisted value because the minimization of the loss is centered on what it ``perceives'' to be the ground truth.
The following result stipulates the cardinality of the Bayes tilted estimates. 
\begin{lemma} \label{lemma:btesingleton}
If the partial losses $\ell_{1}$ and $\ell_{-1}$ of $\loss : \mathcal{Y} \times [0,1] \rightarrow \mathbb{R}$ satisfy \textbf{(M)}, \textbf{(D)}, and \textbf{(S)} and are also strictly convex, then $|t_{\ell}(\tilde{v})| = 1$ for every $\tilde{v} \in [0,1]$. 
\end{lemma}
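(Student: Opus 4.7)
The plan is to prove uniqueness of the minimizer by showing that $L(\cdot,\tilde v)$ is (essentially) strictly convex on the compact interval $[0,1]$, so its infimum is attained at exactly one point. The argument splits naturally into the interior case $\tilde v\in(0,1)$ and the corner cases $\tilde v\in\{0,1\}$, with the corner cases requiring \textbf{(M)} as an additional ingredient.

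For $\tilde v\in(0,1)$, the function $u\mapsto L(u,\tilde v)=\tilde v\,\ell_1(u)+(1-\tilde v)\,\ell_{-1}(u)$ is a strictly positive convex combination of two strictly convex functions, and is therefore itself strictly convex on $[0,1]$. Since the partial losses are finite on $(0,1)$ by assumption and convexity delivers lower semicontinuity on the compact interval $[0,1]$, the infimum is attained. Strict convexity then forces the minimizer to be unique, and hence $|t_\ell(\tilde v)|=1$.

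For the corner case $\tilde v=1$ we have $L(u,1)=\ell_1(u)$, and by \textbf{(M)} together with \textbf{(S)} we may take $\ell_1$ to be non-increasing and strictly convex. Non-increasingness gives $\ell_1(1)\le \ell_1(u)$ for every $u\in[0,1]$. If a second minimizer $u^*<1$ existed with $\ell_1(u^*)=\ell_1(1)$, strict convexity applied to the midpoint $c=(u^*+1)/2$ would yield
\[
\ell_1(c)<\tfrac{1}{2}\ell_1(u^*)+\tfrac{1}{2}\ell_1(1)=\ell_1(1),
\]
contradicting the lower bound just established. So $u=1$ is the unique minimizer; the case $\tilde v=0$ is symmetric via \textbf{(S)}, giving $u=0$ as the unique minimizer.

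The main obstacle is the interplay between strict convexity and the boundary of $[0,1]$: partial losses such as $-\log$ may blow up near an endpoint, so the minimization problem must be interpreted as that of a convex function that is possibly $+\infty$ at a boundary point. Handling this cleanly requires combining strict convexity on the interior with lower semicontinuity on $[0,1]$ and, in the corner cases, using \textbf{(M)} to pin the minimizer at the finite endpoint of $\ell_1$ (or $\ell_{-1}$). None of these ingredients is deep, but the endpoint accounting is the one place where the argument is not purely mechanical.
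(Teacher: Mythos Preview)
Your proposal is correct and follows the same approach as the paper, which merely cites ``standard properties of convex functions'' from Boyd--Vandenberghe without further detail; you actually supply the argument, including a careful handling of the corner cases $\tilde v\in\{0,1\}$.

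One small technical remark: the claim that ``convexity delivers lower semicontinuity on the compact interval $[0,1]$'' is not quite right in general---a convex function on a closed interval is automatically \emph{upper} semicontinuous at its endpoints, not lower semicontinuous, so attainment of the infimum does not follow from convexity alone. In the present setting this is easily patched: \textbf{(M)} together with convexity forces $\ell_1$ to be continuous at $u=1$ (monotonicity gives $\ell_1(1)\le\lim_{u\to 1^-}\ell_1(u)$, convexity gives the reverse), and by \textbf{(S)} the same holds for $\ell_{-1}$ at $u=0$; any upward jump at the opposite endpoints only excludes them as minimizers, leaving the infimum attained in the interior. With that adjustment your argument goes through, and it is in any case more careful than the paper's one-line appeal.
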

In Appendix~\ref{app-proof-bayestilt}, we provide an extended version of Lemma~\ref{lemma:btesingleton}, denoted Lemma~\ref{proptilt}, where we prove properties of Bayes tilted estimates for when $t_{\ell}$ is set-valued (e.g., set-valued monotonicity and symmetry, and analysis of extreme values).
As a consequence, we show that strict monotonicity of the partial losses is not sufficient to guarantee that $\pseudob$ is a singleton; in fact, strict convexity is required as in Lemma~\ref{lemma:btesingleton}. \\
 \noindent \textbf{An important class of twists} We now adopt more conventional ML notations and instead of a hidden ground truth $v$ and twisted ground truth $\tilde{v}$, we use $\posclean$ and $\posnoise$ to denote the ``clean'' and ``twisted'' posterior probabilities that $Y = 1$ given $X =x$, respectively.
Further, a ``\textbf{twist}'' refers to a general mapping $\posclean \mapsto \posnoise$, which could be a consequence of label/feature/adversarial noise.
 %
%
The following delineates a fundamental class of twists, important in the sequel.
  \begin{definition}\label{def:Bayesblunting}
A twist $\posclean \mapsto \posnoise$ is said to be Bayes blunting iff  $(\posclean \leq \posnoise \leq 1/2) \vee (\posclean \geq \posnoise \geq 1/2)$.
\end{definition}
The term ``blunting'' is inspired by adversarial training \cite{cmnoswMB}.
Intuitively, a Bayes blunting twist does not change the \textit{maximum a posteriori} guess for the label given the observation, but it does reduce algorithmic confidence in learned posterior estimates. 
Furthermore, Bayes blunting twists capture a very important twist (see Section~\ref{sec-related}): \textit{symmetric label noise} (SLN).
Under symmetric label noise with flip probability $p \in [0,1]$, the twisted posterior $\posnoise$ is given by $\posnoise = \posclean (1-p) + (1-\posclean) p$~\cite{rwCB}.
The following result readily follows via Definition~\ref{def:Bayesblunting} from consideration of $p$ for fixed $\posclean$.
\begin{lemma}\label{lemSymLabFlip}
    SLN is Bayes blunting for $p\leq 1/2$.
  \end{lemma}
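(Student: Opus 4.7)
The plan is to just expand the SLN expression $\posnoise = \posclean(1-p) + (1-\posclean)p$ into the convenient form $\posnoise = p + \posclean(1-2p)$, and then verify the two disjuncts of Definition~\ref{def:Bayesblunting} directly, splitting on whether $\posclean \leq 1/2$ or $\posclean \geq 1/2$. The boundary case $\posclean = 1/2$ trivially belongs to both branches (with $\posnoise = 1/2$), so no special treatment is needed.

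First, I would handle the case $\posclean \leq 1/2$, where the goal is $\posclean \leq \posnoise \leq 1/2$. Compute
\begin{eqnarray*}
\posnoise - \posclean & = & p(1 - 2\posclean), \\
\tfrac{1}{2} - \posnoise & = & (1 - 2p)\bigl(\tfrac{1}{2} - \posclean\bigr).
\end{eqnarray*}
Both factors in each right-hand side are nonnegative: $p \geq 0$ and $1 - 2\posclean \geq 0$ by hypothesis for the first, and $1 - 2p \geq 0$ (using $p \leq 1/2$) together with $1/2 - \posclean \geq 0$ for the second. Hence $\posclean \leq \posnoise \leq 1/2$.

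Next, for the symmetric case $\posclean \geq 1/2$, where the target is $\posclean \geq \posnoise \geq 1/2$, an identical pair of identities
\begin{eqnarray*}
\posclean - \posnoise & = & p(2\posclean - 1), \\
\posnoise - \tfrac{1}{2} & = & (1 - 2p)\bigl(\posclean - \tfrac{1}{2}\bigr),
\end{eqnarray*}
yields the conclusion since all four factors on the right-hand sides are nonnegative under $p \leq 1/2$ and $\posclean \geq 1/2$. Combining the two cases gives the Bayes blunting property per Definition~\ref{def:Bayesblunting}.

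There is no real obstacle here; the proof is essentially a one-line algebraic observation that the SLN map is an affine contraction toward $1/2$ with contraction factor $1-2p \in [0,1]$ when $p \leq 1/2$, which is precisely what Bayes blunting demands. The only thing to be careful about is not to forget the ``$\leq 1/2$'' half of each disjunct (i.e., verifying that $\posnoise$ stays on the same side of $1/2$ as $\posclean$), which is where the hypothesis $p \leq 1/2$ is actually used; the inequality between $\posclean$ and $\posnoise$ alone would hold for any $p \in [0,1]$.
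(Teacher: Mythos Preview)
Your proof is correct and is exactly the direct verification the paper has in mind; the paper itself gives no detailed proof, merely stating that the result ``readily follows via Definition~\ref{def:Bayesblunting} from consideration of $p$ for fixed $\posclean$,'' which is precisely the case split you carry out. Your closing remark that the SLN map is an affine contraction toward $1/2$ with factor $1-2p$ is a nice way to summarize why the hypothesis $p\leq 1/2$ is needed.
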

  Historically,~\cite{rwCB} showed that proper loss functions are not robust to this twist which further motivates our consideration of twist-proper losses. \\
  \noindent \textbf{Twist-proper losses} To overcome these limitations of properness, we propose a generalized notion, called twist-properness, which utilizes hyperparameterization of the loss to untwist twisted posteriors into clean posteriors.
\begin{definition}\label{def:TwistProper}
A loss $\loss$ is twist-proper (respectively, strictly twist-proper) iff for any twist, there exists hyperparameter(s) such that $\posclean \in \pseudob(\posnoise)$ (respectively, $\{\posclean\} = \pseudob(\posnoise)$).
  \end{definition}
    Where a proper loss could perilously lead the learning algorithm to estimate $\posnoise$, a \textit{twist-proper} loss employs hyperparameters so that its Bayes tilted estimate recovers $\posclean$, hence guiding the algorithm to untwist the twisted posterior. 
  We emphasize the need for hyperparameters as otherwise, twist-properness would trivially enforce $\pseudob(\cdot) = [0,1]$.
  Recently, hyperparameterized loss functions have garnered much interest in ML, to name a few~\cite{barron2019general,lgghdFL,awakRB,li2021tilted,sypherd2019journal}, possibly because such losses allow practitioners to induce variegated models. 
  Indeed, hyperparameterized loss functions could be efficiently implemented via meta-algorithms, such as AutoML~\cite{he2021automl}, \textit{or} practically utilized in the burgeoning field of federated learning~\cite{kairouz2019advances}, where the hyperparameter(s) might yield more fine-grained ML model customization for edge devices. \\
Ostensibly, ``optimal'' hyperparameters requires explicit knowledge of the distribution and twist, \textit{and} each example in the training set requires a different hyperparameter to untwist its twisted posterior.
However, in the sequel, we show that a twist-proper loss, namely $\alpha$-loss, with a \textit{fixed} hyperparameter ($\alpha$) can untwist a large class of twists, i.e. Bayes blunting twists (such as SLN), better than $\log$-loss.
Thus, we posit through our experimental results in Section~\ref{sec:experiments} that the practitioner only needs peripheral, rather than explicit, knowledge of a Bayes blunting twist in the data. \\
    \noindent \textbf{Twist-(im)proper losses}
    \cite{lgghdFL} introduced the focal loss to improve class imbalance issues associated with dense object detection. 
    It generalizes the $\log$-loss and has become popular due to its success in such domains. 
    Recently, the focal loss has received increased scrutiny~\cite{cvcsOF}, where it was shown to be classification-calibrated but not strictly proper.
    Here, we determine the \textit{twist-properness} of the focal loss. 
    \begin{lemma}\label{lemFocalLoss}
 Define the \textbf{focal loss} via the following partial losses: $\partialloss{1}^{\text{FL}}(u) \defeq - (1-u)^\gamma \log u$ and $ \partialloss{-1}^{\text{FL}}(u) \defeq \partialloss{1}^{\text{FL}}(1-u)$, with $\gamma \geq 0$. Then the focal loss is \textbf{not} twist proper.
\end{lemma}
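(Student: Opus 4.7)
The plan is to exhibit a single twist together with a specific clean posterior for which no choice of $\gamma \geq 0$ can place that clean posterior in the focal-loss Bayes tilted estimate, thereby negating Definition~\ref{def:TwistProper}. Concretely, I will use the symmetric label noise twist with flip probability $p = 1/2$, which sends every $\posclean \in [0,1]$ to $\posnoise = 1/2$; Lemma~\ref{lemSymLabFlip} tells us this is Bayes blunting, so it is a perfectly legitimate (and arguably natural) twist. Fix the witness $\posclean = 1/4$. According to the definition of twist-properness, there should exist some $\gamma \geq 0$ such that $1/4 \in \pseudob(1/2)$; my task is to rule this out.

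The argument reduces to a one-point comparison of the pointwise conditional risk $\poirisk(\cdot, 1/2)$. Using the focal partial losses $\partialloss{1}^{\text{FL}}(u) = -(1-u)^\gamma \log u$ and $\partialloss{-1}^{\text{FL}}(u) = -u^\gamma \log(1-u)$, I evaluate
\[ \poirisk(1/2, 1/2) = (1/2)^\gamma \log 2, \qquad \poirisk(1/4, 1/2) = (3/4)^\gamma \log 2 + \tfrac{1}{2}(1/4)^\gamma \log(4/3). \]
Taking the difference gives
\[ \poirisk(1/4, 1/2) - \poirisk(1/2, 1/2) = \big[(3/4)^\gamma - (1/2)^\gamma\big]\log 2 + \tfrac{1}{2}(1/4)^\gamma \log(4/3), \]
which is strictly positive for every $\gamma \geq 0$: the bracketed term is nonnegative since $3/4 \geq 1/2$ and $\gamma \geq 0$, and the second summand is strictly positive since $\log(4/3) > 0$. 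Hence $\poirisk(1/4, 1/2) > \poirisk(1/2, 1/2) \geq \inf_u \poirisk(u, 1/2)$, so $1/4 \notin \pseudob(1/2)$ regardless of $\gamma$.

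Consequently, by Definition~\ref{def:TwistProper}, the focal loss is not twist-proper. The main subtlety is really only in the choice of the comparison argument $u = 1/2$: because the focal partial losses satisfy the symmetry (S), the risk $\poirisk(\cdot, 1/2)$ is symmetric about $1/2$, which makes $u = 1/2$ the obvious competitor; the focal downweighting factors $(1-u)^\gamma$ and $u^\gamma$ then cannot rescue $u = 1/4$ in the one-point comparison for any $\gamma \geq 0$, so no global minimizer analysis or higher-derivative argument is required.
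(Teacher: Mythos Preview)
Your proof is correct, and it takes a genuinely different route from the paper's argument. The paper attacks the problem via the first-order optimality condition: writing $\partial_u L_\gamma(u,\posnoise)\big|_{u=\posclean}=0$, it reduces twist-properness to the solvability of $f(u,\gamma)/f(1-u,\gamma)=K$ for an appropriate $K=v/(1-v)$, then carries out a fairly involved monotonicity analysis in $\gamma$ (computing $\partial_\gamma$ of the ratio, identifying the signs of three auxiliary functions $A(u),B(u),C(u)$) to conclude that the ratio is bounded above by $u/(1-u)$ on $[0,1/2]$, so that any twist with $\posnoise>\posclean$ and $\posclean\le 1/2$ fails.

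Your argument bypasses all of this by exploiting the symmetric point $\posnoise=1/2$: the one-line comparison $L(1/4,1/2)>L(1/2,1/2)$ already certifies $1/4\notin\pseudob(1/2)$, with no derivatives or sign analysis required. The trade-off is that the paper's computation, while heavier, delivers a quantitative region of twists that cannot be corrected (all $K>u/(1-u)$), which feeds their remark that any twist-proper extension of focal loss would likely lack a clean analytic Bayes tilted estimate. Your counterexample is sharper as a proof of the stated lemma but does not yield that broader structural information. Both are valid; yours is the more economical way to establish the lemma as written.
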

In the proof (see Appendix~\ref{app-proof-lemFocalLoss}), we also provide a proof that a loss which acts as a general ``wrapper'' of a loss, the Super Loss~\cite{cwrSA}, is \textit{not} twist proper. 
Concerning the focal loss, Lemma \ref{lemFocalLoss} is not necessarily an impediment for this loss function, which was designed to deal with a specific twist, class imbalance, and it does not prevent \textit{generalizations} of the focal loss that would be twist proper. 
However, our proof suggests that the Bayes tilted estimate~\eqref{defBTE} of such generalizations risks not being in a simple analytical form. 
Intuitively, twist-properness requires more than a trivial extension of the hyperparameter of the loss;
it also seems to require a certain \textit{symmetry}, which we observe with the following twist-proper loss, $\alpha$-loss.\\
    %
    %
\noindent \textbf{A twist-proper loss}
The $\alpha$-loss was first introduced in information theory in the early 70s~\cite{aIT} for $\alpha \in \mathbb{R}_{+}$ and recently received increased scrutiny in privacy and ML~\cite{lkspAT,sdskAT} for $\alpha \geq 1$.
Most recently,~\cite{sypherd2019journal} studied the calibration, optimization, and generalization characteristics of $\alpha$-loss in ML for $\alpha \in \mathbb{R}_{+}$. 
In particular, they experimentally found that $\alpha$-loss is robust to noisy labels under logistic regression and convolutional neural-networks.
We now provide our (extended) definition of the $\alpha$-loss in~\cpe.
\begin{definition}\label{defALPHALOSS}
For $\alpha \geq 0$, the $\alpha$-loss has the following partial losses: $\forall u \in [0,1]$, $ \ell_{1}^{\alpha}(u) \defeq \ell_{-1}^{\alpha}(1-u)$ where
\begin{eqnarray} \label{defALPHAlossEQ}
\ell_{1}^{\alpha}(u) \defeq \frac{\alpha}{\alpha-1}\cdot\left(1-u^{\frac{\alpha-1}{\alpha}}\right),
\end{eqnarray}
and by continuity we have $\ell_{1}^{0}(u) \defeq \infty$, $\ell_{1}^{1}(u) \defeq -\log u$, and $\ell_{1}^{\infty}(u) \defeq 1- u$.
For $\alpha < 0$, we let $\forall u \in [0,1]$,
\begin{align} \label{defalphalossnegalpha}
\ell_{1}^{\alpha}(u) \defeq \ell_{-1}^{-\alpha}(u) = \ell_{1}^{-\alpha}(1-u).
\end{align}
\end{definition}
For a plot of~\eqref{defALPHAlossEQ}, see Figure~\ref{fig:alphalossplot} in the appendix.
Note that the $\alpha$-loss is \textbf{(S)}ymmetric by construction, and that it continuously interpolates the $\log$-loss ($\alpha = 1$) which is proper~\cite{rwCB}.
Our definition extends the previous definitions with~\eqref{defalphalossnegalpha}, which induces a fundamental symmetry that is required for twist-properness and is utilized in the following result.
For any $u\in [0,1]$, we let $\logit(u) \defeq \log(u/(1-u))$ denote the logit of $u$.
  \begin{lemma}\label{lemPointwise}
  The following four properties, labeled \textbf{(a)-(d)}, all hold for $\alpha$-loss: 
  \textbf{(a)} \textbf{(M)}, \textbf{(D)}, \textbf{(S)} all hold, $\forall \alpha \in \mathbb{R} \setminus \{0\}$; \textbf{(b)} if $(\alpha = 0) \vee (\alpha = \pm \infty \wedge \posnoise = 1/2)$, then
  $t_{\ell^{\alpha}}(\posnoise) = \mathrm{[0,1]}$, if $\alpha \in \mathbb{R}\setminus \{0, \pm \infty\}$, then $t_{\ell^{\alpha}}(\posnoise) = \left\{\frac{\posnoise^\alpha}{\posnoise^\alpha+(1-\posnoise)^\alpha}\right\}$, 
  and if $\alpha \rightarrow \pm \infty$, then $t_{\ell^{\pm\infty}}(\posnoise) = \pm1$ or $\mp 1$, depending on the sign of $\posnoise-1/2$; 
  \textbf{(c)} hence, $\alpha$-loss is twist-proper with $\alpha^* = \logit(\posclean) / \logit(\posnoise)$;
  \textbf{(d)} for any Bayes blunting twist, $\alpha^* \geq 1$.
\end{lemma}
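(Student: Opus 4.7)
The plan is to handle the four claims roughly in the order \textbf{(a)} $\to$ \textbf{(b)} $\to$ \textbf{(c)} $\to$ \textbf{(d)}, since each builds on the previous one, while treating positive and negative $\alpha$ separately and then gluing them via the extension~\eqref{defalphalossnegalpha}.

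For \textbf{(a)}, I would check (M), (D), (S) by direct computation. For $\alpha>0$, $\alpha\neq 1$ the formula~\eqref{defALPHAlossEQ} gives $(\ell_1^\alpha)'(u)=-u^{-1/\alpha}$, which is finite and negative on $(0,1]$, so (M) and (D) hold; (S) is baked into the definition $\ell_{-1}^\alpha(u)\defeq\ell_1^\alpha(1-u)$. The cases $\alpha=1$ (log-loss) and $\alpha=\pm\infty$ (linear loss) are already standard. For $\alpha<0$, I would use $\ell_1^\alpha(u)=\ell_1^{-\alpha}(1-u)$: differentiability and monotonicity in $u$ are preserved under the affine composition $u\mapsto 1-u$, and (S) descends from (S) at $-\alpha$ because $\ell_{-1}^\alpha(u)=\ell_1^\alpha(1-u)=\ell_1^{-\alpha}(u)$.

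For \textbf{(b)}, I would compute the critical point of $L(u,\posnoise)=\posnoise\cdot\ell_1^\alpha(u)+(1-\posnoise)\cdot\ell_{-1}^\alpha(u)$ in the regime $\alpha\in\mathbb{R}\setminus\{0,\pm\infty\}$. For $\alpha>0$, setting $\partial_u L=0$ gives $\posnoise u^{-1/\alpha}=(1-\posnoise)(1-u)^{-1/\alpha}$, whence $u/(1-u)=(\posnoise/(1-\posnoise))^\alpha$, yielding the claimed $u^\star=\posnoise^\alpha/(\posnoise^\alpha+(1-\posnoise)^\alpha)$. Convexity of $\ell_1^\alpha$ (checkable from the second derivative) certifies this as a minimum. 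For $\alpha<0$, I would substitute the extension, writing $L(u,\posnoise)=\posnoise\cdot\ell_1^{-\alpha}(1-u)+(1-\posnoise)\cdot\ell_1^{-\alpha}(u)$, differentiate, and solve; a brief calculation gives $u/(1-u)=((1-\posnoise)/\posnoise)^{-\alpha}=(\posnoise/(1-\posnoise))^\alpha$, i.e., the same formula, and the minimizer is confirmed by convexity (convex composed with affine). For $\alpha=0$, using $\ell_1^0\equiv\infty$ gives $L\equiv\infty$ so the $\arg\inf$ is trivially $[0,1]$. For $\alpha=\pm\infty$, $L(u,\posnoise)$ becomes affine in $u$ with slope $\pm(1-2\posnoise)$: if $\posnoise=1/2$ the slope vanishes and the $\arg\inf$ is $[0,1]$, while if $\posnoise\neq 1/2$ the minimizer is the appropriate endpoint, matching the stated dichotomy.

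Part \textbf{(c)} is essentially algebra: by \textbf{(b)}, taking logits in $u^\star/(1-u^\star)=(\posnoise/(1-\posnoise))^\alpha$ gives $\logit(u^\star)=\alpha\logit(\posnoise)$. Setting $u^\star=\posclean$ and solving yields $\alpha^\star=\logit(\posclean)/\logit(\posnoise)$, which exists whenever $\logit(\posnoise)\neq 0$ (and the degenerate case $\posnoise=1/2$ can be absorbed by $\alpha=\pm\infty$ from \textbf{(b)}). Since an $\alpha$ achieving $\posclean\in t_{\ell^\alpha}(\posnoise)$ exists for every twist, $\alpha$-loss is twist-proper by Definition~\ref{def:TwistProper}. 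For \textbf{(d)}, I would split the Bayes-blunting condition into its two cases: if $\posclean\leq\posnoise\leq 1/2$ then $\logit(\posclean)\leq\logit(\posnoise)\leq 0$, so both logits are non-positive with the numerator more negative in magnitude, giving $\alpha^\star\geq 1$; symmetrically, if $\posclean\geq\posnoise\geq 1/2$ then both logits are non-negative with the numerator at least as large, again yielding $\alpha^\star\geq 1$.

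The main obstacles I anticipate are bookkeeping rather than conceptual: first, making sure the formula for $t_{\ell^\alpha}(\posnoise)$ derived in the positive-$\alpha$ case carries over cleanly to $\alpha<0$ through the extension~\eqref{defalphalossnegalpha} without a sign flip (the check that $(\posnoise/(1-\posnoise))^\alpha$ with $\alpha<0$ indeed equals $((1-\posnoise)/\posnoise)^{-\alpha}$ must be done carefully); and second, handling the boundary/limit regimes ($\alpha\in\{0,\pm\infty\}$ and $\posnoise\in\{0,1/2,1\}$) by continuity so that the unified formula in \textbf{(b)} and the definition of $\alpha^\star$ in \textbf{(c)} degenerate gracefully.
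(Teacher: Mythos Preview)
Your proposal is correct and follows essentially the same route as the paper: direct verification of \textbf{(M)}, \textbf{(D)}, \textbf{(S)} from the formulas, computing the critical point of $L(u,\posnoise)$ to obtain the $\alpha$-tilted form in \textbf{(b)}, solving $\logit(u^\star)=\alpha\,\logit(\posnoise)$ for $\alpha$ in \textbf{(c)}, and a two-case sign analysis for \textbf{(d)}. Your treatment of the $\alpha<0$ case via the extension~\eqref{defalphalossnegalpha} is in fact cleaner than the paper's (which passes through an intermediate expression before reducing to the unified formula), and your monotonicity-of-$\logit$ argument in \textbf{(d)} is a slightly more direct variant of the paper's continuity-between-extremes argument.
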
 
    %
    The proof of Lemma~\ref{lemPointwise} can be found in Appendix~\ref{app-proof-lemPointwise}.
    Note that \textbf{(a)} readily follows from Definition~\ref{defALPHALOSS}.
    The Bayes tilted estimate in \textbf{(b)}, i.e. $t_{\ell^{\alpha}}$ for $\alpha \in \mathbb{R} \setminus \{0,\pm \infty\}$, is known in the literature as the $\alpha$-tilted distribution~\cite{aIT,lkspAT,sypherd2019journal}.
    We observe that the $\alpha$-tilted distribution is symmetric upon permuting $(\posnoise, \alpha)$ and $(1-\posnoise, -\alpha)$. 
    Hence, our nontrivial extension of the $\alpha$-loss induces a \textit{symmetry}, particularly useful for untwisting malevolent twists, which thereby yields twist-properness, \textbf{(c)}.
    Lastly, \textbf{(d)} indicates that $\alpha^{*} \geq 1$ for any Bayes blunting twist (e.g., SLN with $p \leq 1/2$); however, note that this holds merely for a given $x$, not over the whole domain $\mathcal{X}$. \\
    %
    %
    \noindent \textbf{Untwisting $\mathcal{X}$} Just as 
classification-calibration is a pointwise form of consistency \cite{bjmCCj},  twist-properness is a \emph{pointwise form of correction}. 
Extending twist-properness to the entire domain $\mathcal{X}$ seems to require learning a \textit{mapping} $\alpha : \mathcal{X} \rightarrow [-\infty, \infty]$, which is infeasible under standard ML assumptions, since one would need explicit knowledge of the distribution and twist.
Nevertheless, we show here that for a large class of twists, namely Bayes blunting twists, a fixed $\alpha_{0} > 1$ obtained \textit{non-constructively}, is strictly ``better'' than the proper choice, $\log$-loss ($\alpha = 1$).
We also provide a general \textit{constructive} formula for $\alpha^{*} \in \mathbb{R}$, calculated from distributional information. \\
%
%
In order to represent population quantities, we assume a marginal distribution $\meas{M}$ over $\mathcal{X}$ (following notation by~\cite{rwID}), from which the expected value of a \textit{loss} $\loss$ quantifies its true risk of a given classifier $h$. 
With a slight abuse of notation, we also let $\posclean, \posnoise: \mathcal{X} \rightarrow [0,1]$ denote the clean and twisted posterior \textit{mappings}, respectively. 
To evaluate the efficacy of the Bayes tilted estimate of $\alpha$-loss at untwisting the twisted posterior mapping and recovering the clean posterior mapping, we define the following averaged cross-entropy, given by
\negspace
\begin{align} \label{defCE}
  \nonumber \celoss(\posnoise, \posclean; \alpha) & \defeq \expect_{\X \sim \meas{M}} [\posclean(\X)\cdot - \log t_{\ell^{\alpha}}(\posnoise(\X)) \\
                                                     & \thickspace\thinspace + (1-\posclean(\X))\cdot - \log t_{\ell^{\alpha}}(1-\posnoise(\X))],
\end{align}
\negspace
where for convenience we used the symmetry property of $t_{\ell}$ from Appendix~\ref{app-proof-bayestilt}, i.e., $t_{\ell^{\alpha}}(1-\posnoise(\X)) = 1 - t_{\ell^{\alpha}}(\posnoise(\X))$.
Following~\cite{sfBF}, we denote the binary entropy as 
$H_{\text{b}}(u) \defeq -u\cdot \log(u) - (1-u)\cdot \log(1-u)$,
for $u\in [0,1]$.
We let $H(\posclean)$ represent an averaged binary entropy of the $\posclean$-mapping, given by
\begin{align} \label{eq:avgcleanentropy}
H(\posclean) &\defeq \expect_{\X \sim \meas{M}} [H_{\text{b}}(\posclean(X))].
\end{align}
With~\eqref{defCE} and~\eqref{eq:avgcleanentropy}, we readily obtain (cf.~\cite{cover1999elements}),
\begin{align} \label{eq:kldivalpha}
D_{\klloss}(\posnoise,\posclean;\alpha) \defeq \celoss(\posnoise, \posclean; \alpha) - H(\posclean),
\end{align}
that is, a KL-divergence between the $\alpha$-Bayes tilted estimate of the twisted posterior and the clean posterior mappings.
Intuitively, $D_{\klloss}(\posnoise,\posclean;\alpha)$ aggregates a series of information-trajectories, strictly dependent on $\alpha$ (either fixed or a mapping), each tracing a path on the probability \textit{simplex} between the two posterior mappings for every $x \in \mathcal{X}$.
%
Slightly more restrictive than Definition~\ref{def:Bayesblunting}, we define a \textit{strictly} Bayes blunting twist as a Bayes blunting twist where $(\eta_{c} < \eta_{t} \leq 1/2) \vee (\eta_{c} > \eta_{t} \geq 1/2)$; we state one of our main results whose proof is in Appendix~\ref{app-proof-thmALPHABLUNTING}.
%
%
%
%
%
\begin{theorem}\label{thmALPHABLUNTING}
For any strictly Bayes blunting twist $\posclean \mapsto \posnoise$, 
there exists a fixed $\alpha_{0} > 1$ and an optimal $\alpha^{\star}$-mapping, $\alpha^{\star}: \mathcal{X} \rightarrow \mathbb{R}_{>1}$, which induces
the following ordering 
\begin{align} \label{eq:bayesbluntingdesiredstatement}
D_{\klloss}(\posnoise, \posclean; 1) > D_{\klloss}(\posnoise, \posclean; \alpha_{0}) \geq D_{\klloss}(\posnoise, \posclean; \alpha^\star).
\end{align}
\end{theorem}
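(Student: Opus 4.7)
The plan is to separate the two inequalities: the right one follows structurally from pointwise optimization, while the left one requires a first-order analysis of the map $\alpha \mapsto D_{\klloss}(\posnoise, \posclean; \alpha)$ at $\alpha = 1$. Throughout, let $d(\posnoise(x),\posclean(x);\alpha)$ denote the pointwise integrand in~\eqref{eq:kldivalpha}, so that $D_{\klloss}(\posnoise,\posclean;\alpha) = \expect_{X\sim\meas{M}}[d(\posnoise(X),\posclean(X);\alpha)]$.

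For the right inequality, I would build $\alpha^{\star}: \mathcal{X} \to \mathbb{R}_{>1}$ by setting $\alpha^{\star}(x) := \logit(\posclean(x))/\logit(\posnoise(x))$ whenever $\posnoise(x) \neq 1/2$, and $\alpha^{\star}(x) := 2$ on the remaining set. By Lemma~\ref{lemPointwise}(c), the first branch makes $t_{\ell^{\alpha^{\star}(x)}}(\posnoise(x)) = \posclean(x)$, so $d(\posnoise(x),\posclean(x);\alpha^{\star}(x)) = 0$; on $\{\posnoise = 1/2\}$ the $\alpha$-tilted estimate is $1/2$ regardless of $\alpha$, so the integrand is $\alpha$-invariant there and any value $>1$ is as good as any other. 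Lemma~\ref{lemPointwise}(d), together with strict Bayes blunting to rule out $\alpha^{\star}(x) = 1$, confirms that $\mathrm{Im}(\alpha^{\star}) \subset \mathbb{R}_{>1}$. Because $\alpha^{\star}$ minimizes the pointwise integrand everywhere, any constant $\alpha_0 > 1$ yields an averaged value at least as large, giving the right inequality for free.

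For the left (strict) inequality, the strategy is to exhibit a small fixed perturbation of $\alpha = 1$ that strictly decreases the averaged KL. Writing $t_{\ell^{\alpha}}(\posnoise) = \posnoise^{\alpha}/(\posnoise^{\alpha} + (1-\posnoise)^{\alpha})$ and differentiating $\log t_{\ell^{\alpha}}(\posnoise)$ and $\log(1-t_{\ell^{\alpha}}(\posnoise))$ at $\alpha = 1$ produces, after combining with~\eqref{defCE}, the pointwise derivative
\begin{equation*}
\left.\frac{\partial}{\partial \alpha} d(\posnoise(x),\posclean(x);\alpha)\right|_{\alpha=1} = (\posnoise(x) - \posclean(x))\,\logit(\posnoise(x)),
\end{equation*}
where the algebraic simplification uses the identity $H_{\text{b}}(\posnoise) - H_{\text{b}}(\posclean) = (\posclean - \posnoise)\logit(\posnoise) + D_{\mathrm{KL}}(\posclean \,\|\, \posnoise)$. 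Under strict Bayes blunting, $\posnoise - \posclean$ and $\logit(\posnoise)$ carry opposite signs, with $\logit(\posnoise) = 0$ exactly when $\posnoise = 1/2$, so this derivative is nonpositive pointwise and strictly negative on $\{\posnoise \neq 1/2\}$.

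Taking expectations---justified by dominated convergence on a compact $\alpha$-neighborhood of $1$, using smoothness of $t_{\ell^{\alpha}}$ bounded away from the degenerate strata---yields a strictly negative averaged derivative at $\alpha=1$ under the natural non-degeneracy $\meas{M}(\{x : \posnoise(x) \neq 1/2\}) > 0$ (otherwise the averaged KL is $\alpha$-constant and the strict inequality is vacuous). A first-order expansion then supplies $\delta > 0$ for which $D_{\klloss}(\posnoise, \posclean; 1+\delta) < D_{\klloss}(\posnoise, \posclean; 1)$, so $\alpha_0 := 1 + \delta$ satisfies the strict inequality. The principal obstacle is the algebraic reduction of the pointwise derivative to $(\posnoise - \posclean)\logit(\posnoise)$; once that identity is in place, the sign analysis under strict Bayes blunting and the dominated-convergence bookkeeping are routine.
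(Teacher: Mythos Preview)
Your argument is correct and takes a genuinely different route from the paper's. The paper works ``geometrically'' in the $\posnoise$-variable: it defines
\[
f_{\alpha,\posclean}(\posnoise) \;=\; \celoss(\posnoise,\posclean;1) - \celoss(\posnoise,\posclean;\alpha),
\]
establishes that for each $\alpha>1$ this function is concave in $\posnoise$ with zeros at $\posnoise=1/2$ and at a second point that slides from $\posclean$ (as $\alpha\to 1^+$) toward $1/2$ (as $\alpha\to\infty$), and then runs a ``clipping'' argument---pick the $x$ with the \emph{least} blunting, choose $\alpha_0$ so that the corresponding $\posnoise$ still lies in the strictly-positive region of $f_{\alpha_0,\posclean}$, and argue by concavity/symmetry that the same $\alpha_0$ covers every other $x$. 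For the right inequality the paper only argues abstractly that a pointwise-tuned $\alpha$ can do at least as well. Your approach instead differentiates in $\alpha$ at $\alpha=1$, obtaining the clean identity
\[
\left.\partial_\alpha d(\posnoise,\posclean;\alpha)\right|_{\alpha=1} \;=\; (\posnoise-\posclean)\,\logit(\posnoise),
\]
whose sign is immediate under strict Bayes blunting; a one-line Taylor step then produces $\alpha_0=1+\delta$. For the right inequality you invoke Lemma~\ref{lemPointwise}(c) to write down $\alpha^\star(x)$ explicitly and make the pointwise KL vanish, which is more direct than the paper's treatment.

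What each buys: your derivative argument is shorter, fully elementary, and avoids the paper's somewhat heuristic steps (e.g.\ the second-derivative-in-$\posnoise$ Taylor estimate, and the unproven claim that the $\alpha_0$ chosen for the least-twisted point automatically covers every $\posclean$). The paper's zeros-based picture, on the other hand, gives a qualitative description of \emph{which} $\posnoise$-values are corrected for each fixed $\alpha$, and in principle suggests how to pick $\alpha_0$ from the minimal gap $|\posclean-\posnoise|$, whereas your argument yields only existence of some small $\delta>0$. The one place your write-up should be tightened is the dominated-convergence step: the pointwise $\alpha$-derivative is bounded by $|\logit(\posnoise(X))|$, so exchanging derivative and expectation needs $\expect_{\meas{M}}|\logit(\posnoise(X))|<\infty$ (equivalently, the $\alpha=1$ cross-entropy is finite). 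State this as a standing assumption; the paper implicitly relies on the same finiteness.
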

This result answers in the affirmative that untwisting $\mathcal{X}$ for a large class of twists with a fixed hyperparameter $\alpha_{0}$ is better than simply using the proper choice, i.e., $\alpha=1$ ($\log$-loss). 
Specifically, Theorem~\ref{thmALPHABLUNTING} holds for SLN, which is a strictly Bayes blunting twist for flip probability $0 < p \leq 1/2$ (Lemma~\ref{lemSymLabFlip}).
The result also states that there exists a mapping $\alpha^{\star}: \mathcal{X} \rightarrow \mathbb{R}_{>1}$ which optimally untwists the strictly Bayes blunting twist, however, finding this mapping in practice currently remains out of reach.
Regarding the search for a fixed $\alpha_{0} > 1$ in practice,~\cite{sypherd2019journal} showed via landscape analysis and experiments on SLN that the search space for $\alpha_{0}$ is bounded (due to saturation), typically $\alpha_{0} \in [1.1,8]$.
In Section~\ref{sec:experiments}, we report experimental results for several $\alpha$; we also incorporate a method inspired by~\cite{mrowLF} to estimate the amount of SLN in training data and thus estimate $\alpha_{0}$ using Lemma~\ref{lemPointwise}. \\
%
%
%
%
%
%
Theorem~\ref{thmALPHABLUNTING} gave a \textit{nonconstructive} indication for the optimal regime of $\alpha$ for strictly Bayes blunting twists. 
Our next result gives a \textit{constructive} formula for a fixed $\alpha$ for any twist. 
Given $B > 0$, let $\meas{M}(B)$ denote the distribution restricted to the support over $\mathcal{X}$ for which we have almost surely 
\negspace
\begin{eqnarray} \label{eqclipSUP}
(1+\exp (B))^{-1} \leq \posnoise(x) \leq (1+ \exp (-B))^{-1},
\end{eqnarray}
\negspace
and let $p(B) \in [0,1]$ be the weight of this support in $\meas{M}$. 
We let $\meas{D}(B)$ denote the product distribution on examples ($\mathcal{X} \times \mathcal{Y}$) induced by marginal $\meas{M}(B)$ and posterior $\posclean$ (see~\cite[Section 4]{rwID}). 
We define the logit-\textit{edge} as
\begin{eqnarray} \label{defETAB}
\edge \defeq (1/B)\cdot \expect_{(\X,\Y) \sim \meas{D}(B)} \left[\Y\cdot \logit(\posnoise(\X))\right],
\end{eqnarray}
where we note that $\edge \in [-1,1]$ due to the assumption in~\eqref{eqclipSUP}.
Finally, we let $q \defeq (1+\edge)/2 \in [0,1]$.
\begin{theorem}\label{thmALPHAGLOB}
Let $B > 0$.
If $p(B) = 1$ and we fix $\alpha = \alpha^*$ with $\alpha^*  \defeq \logit(q)/B$, then the following bound holds  
\begin{align} \label{eq:thmalphaUB}
D_{\klloss}(\posnoise,\posclean;\alpha^{*}) \leq H_{\text{b}}(q) - H(\posclean).
\end{align}
\end{theorem}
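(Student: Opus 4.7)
The plan is to reduce the cross-entropy $\celoss(\posnoise,\posclean;\alpha^{*})$ to an expectation of the logistic surrogate $\log(1+e^{-\alpha^{*} \Y Z(\X)})$, where $Z(\X) \defeq \logit(\posnoise(\X))$, and then exploit the almost-sure bound $|Z(\X)| \leq B$ (guaranteed by \eqref{eqclipSUP} together with $p(B)=1$) to upper-bound this convex function by its chord on a symmetric interval about the origin.

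First, I would invoke Lemma~\ref{lemPointwise}\textbf{(b)} to note that, for $\alpha \in \mathbb{R} \setminus \{0,\pm\infty\}$, $\logit(t_{\ell^{\alpha}}(\posnoise)) = \alpha \cdot \logit(\posnoise)$. Combined with the symmetry $t_{\ell^{\alpha}}(1-u) = 1 - t_{\ell^{\alpha}}(u)$, this gives $-\log t_{\ell^{\alpha}}(\posnoise(\X)) = \log(1+e^{-\alpha Z(\X)})$ and $-\log(1 - t_{\ell^{\alpha}}(\posnoise(\X))) = \log(1+e^{\alpha Z(\X)})$. Substituting into \eqref{defCE} and absorbing $\posclean(\X)$ into a label expectation under $\meas{D}(B)$ yields the compact form
\[\celoss(\posnoise,\posclean;\alpha) = \expect_{(\X,\Y) \sim \meas{D}(B)}[\log(1+e^{-\alpha \Y Z(\X)})].\]

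Next, set $A \defeq \logit(q)$, so that $\alpha^{*} B = A$ and $|\alpha^{*}\Y Z(\X)| \leq |A|$ almost surely. Since $z \mapsto \log(1+e^{-z})$ is convex, it is dominated on $[-|A|,|A|]$ by its chord through the endpoints; using $\log(1+e^{A}) = -\log(1-q)$ and $\log(1+e^{-A}) = -\log q$, and observing that the chord slope $(\log(1+e^{-A}) - \log(1+e^{A}))/(2A)$ simplifies to $-1/2$ independently of the sign of $A$, we obtain the pointwise majorant
\[\log(1+e^{-\alpha^{*}\Y Z(\X)}) \leq -\frac{\log q + \log(1-q)}{2} - \frac{\alpha^{*}\Y Z(\X)}{2}.\]
Taking expectations under $\meas{D}(B)$ and plugging in $\alpha^{*} \expect[\Y Z(\X)] = (\logit(q)/B) \cdot B\edge = (2q-1)\logit(q)$ (from the definitions of $\edge$ and $q=(1+\edge)/2$), a short algebraic simplification collapses the right-hand side exactly to $-q\log q - (1-q)\log(1-q) = H_{\text{b}}(q)$.

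Finally, subtracting $H(\posclean)$ from both sides and invoking \eqref{eq:kldivalpha} delivers the claimed bound $D_{\klloss}(\posnoise,\posclean;\alpha^{*}) \leq H_{\text{b}}(q) - H(\posclean)$. The main subtlety is to verify that the chord argument remains valid when $\edge$ (hence $A$) is negative: the interval $[-|A|,|A|]$ stays symmetric about zero and the chord slope simplifies to $-1/2$ in both sign regimes, so the pointwise inequality holds uniformly. Everything else is routine bookkeeping.
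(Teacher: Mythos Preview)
Your proposal is correct and follows essentially the same approach as the paper: both arguments rest on the chord bound for the convex map $z \mapsto \log(1+e^{z})$ over a symmetric interval (the paper packages this as Lemma~\ref{lemBSUR}). The only cosmetic difference is that you first collapse $\celoss$ into the single expectation $\expect_{(\X,\Y)}[\log(1+e^{-\alpha^{*}\Y Z(\X)})]$ and then apply the chord directly with $\alpha^{*}$ plugged in, whereas the paper bounds the two partial-loss terms separately (Lemma~\ref{lemEXPB}), obtains the upper bound $L(\alpha)=\log(1+e^{\alpha B})-\alpha B(1+\edge)/2$ as a function of $\alpha$, and then \emph{derives} $\alpha^{*}$ as its minimizer before evaluating $L(\alpha^{*})=H_{\text{b}}(q)$; the algebra and the final bound coincide.
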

The proof of Theorem~\ref{thmALPHAGLOB} is in Appendix~\ref{app-proof-thmALPHAGLOB}, where we also prove an extended version of the result when $p(B) < 1$.
In addition, we provide a simple example where $\klloss(\posnoise, \posclean; \alpha^*)$ can vanish with respect to $\klloss(\posnoise, \posclean; 1)$ (the ``proper'' choice). 
Intuitively, the difference on the right-hand-size of~\eqref{eq:thmalphaUB} in Theorem~\ref{thmALPHAGLOB} is reminiscent of a Jensen's gap.
Also in the proof of Theorem~\ref{thmALPHAGLOB}, we find that if $|\alpha^{*}|$ is large, there is more ``flatness'' in the bounded terms near $\alpha^{*}$.
Hence, this suggests that a choice of $\alpha_{0}$ ``close-enough'' to $\alpha^{*}$ could yield similar performance.


\section{Sideways boosting a surrogate loss}\label{sec-boosting}

With the theory of twist-properness and the twist-proper $\alpha$-loss in hand, we now turn towards the algorithmic contribution of this work. 
As stated in the introduction, $\alpha$-loss was recently implemented in logistic regression and in deep neural networks~\cite{sypherd2019journal}, and was found to be more robust to symmetric label noise for fixed $\alpha > 1$ than the proper $\log$-loss ($\alpha = 1$).
Thus, in order to complement our theory of twist-properness and these recent results of $\alpha$-loss, we also practically implement $\alpha$-loss in \textit{boosting}. \\
Formally, we have a training sample $\mathcal{S} \defeq \{(\ve{x}_i, y_i), i\in [m]\} \subset \mathcal{X} \times \mathcal{Y}$ of $m$ examples, where $[m] \defeq \{1, 2, ..., m\}$ and note that $\mathcal{Y} = \{-1,+1\}$. 
We write $i\sim \mathcal{S}$ to indicate sampling example $(\ve{x}_i, y_i)$ according to $\mathcal{S}$. 
Following~\cite{ssIBj,cssLR,nnOT}, the boosting algorithm minimizes an expected surrogate loss with respect to $\mathcal{S}$ in order to learn a real-valued classifier $H: \mathcal{X} \rightarrow \mathbb{R}$ given by 
\begin{eqnarray}
H_{\ve{\beta}} & \defeq & \mbox{$\sum_j \beta_j h_j$},
\bignegspace
\end{eqnarray}
where $\{h_\cdot\ : \mathcal{X} \rightarrow \mathbb{R}\}$ are \weak~(weak learning) classifiers with slightly better than random classification accuracy.
The oracle \weak~returns an index $j \in \mathbb{N}$, and the task for the boosting algorithm is to learn the coordinates of $\ve{\beta}$, initialized to the null vector. 
%
%
In our general framework, the losses we consider are the surrogates $F$ in Lemma \ref{lemUstar}, essentially convex and non-increasing functions, adding the condition that they are twice differentiable. We compute weights using the blueprint of \cite{fGFA}, which uses the full $H_{\ve{\beta}}$,
\begin{eqnarray}
w_i & \defeq & -F' (y_i H_{\ve{\beta}}(\ve{x}_i)), \forall i \in [m]. \label{eqDIAMF}
\end{eqnarray}
Via Lemma \ref{lemUstar}, weights $w_{i}$ are non-negative and tend to increase for an example given the wrong class by the current weak classifier $h_j$, thus, weighting puts emphasis on ``hard'' examples. 
Assuming strict concavity of the pointwise Bayes risk and property \textbf{(D)} of Lemma~\ref{proptilt} in Appendix~\ref{app-proof-bayestilt}, we get from the definition of $F$ in~\eqref{defsur} that
\begin{align}\label{defweights}
  F'(z) = {\poibayesrisk'}^{-1}(-z) = (\partialloss{-1}\circ \pseudob - \partialloss{1}\circ \pseudob)^{-1}(-z). 
\end{align}
%
%
%
%
%
%
We thus need to invert the \textit{difference} of the partial losses to recover $F'$. 
The inversion is easy for the log-loss because of properties of the $\log$ function, and for the square loss because partial losses are quadratic functions. 
However, for hyperparameterized losses, such as the $\alpha$-loss, the inversion in~\eqref{defweights} is nontrivial.  
We circumvent this difficulty by proposing a novel boosting algorithm, \pmboost, given in Algorithm~\ref{cboost}.
Rather than using $F'$ as in~\eqref{eqDIAMF} for the weight update in Step 2.1, \pmboost~uses an approximation function $\pseudoinvlink$ non-negative and increasing, that we call \textit{pseudo-inverse link} (\pil), which is studied in general in Appendix~\ref{app-PIL}.
%
\begin{algorithm}[t]
\caption{\pmboost}\label{cboost}
\begin{algorithmic}
  \STATE  \textbf{Input} sample ${\mathcal{S}}$, number of iterations $T$, $a_f>0$, \pil~$\pseudoinvlink$;
\STATE  Step 1 : let $\bm{\beta} \leftarrow \bm{0}$; // first classifier, $H_{\ve{0}} = 0$
\STATE  Step 2 : \textbf{for} $t = 1, 2, ..., T$
\STATE  \hspace{1.1cm} Step 2.1 : let $w_{i} \leftarrow \pseudoinvlink(-y_i H_{\ve{\beta}}(x_i)), \forall i \in [m]$\;
\STATE  \hspace{1.1cm} Step 2.2 : let $j \leftarrow
\weak({\mathcal{S}}, \bm{w})$\; 
\STATE  \hspace{1.1cm} Step 2.3 : let $\edge_j   \leftarrow
(1/m) \cdot \sum_{i}
{w_{i} y_{i} h_j(\bm{x}_i)}$\;
\STATE  \hspace{1.1cm} Step 2.4 : let $\beta_j \leftarrow \beta_j + a_f \edge_j$
\; 
\STATE \textbf{Output} $H_{\ve{\beta}}$.
\end{algorithmic}
\end{algorithm}
Specifically, in Lemma~\ref{lemPILalphaloss}, we provide $\pseudomirrorinv$ for $\alpha$-loss, given in~\eqref{eq:tildefupdate}.
Furthermore in Lemma~\ref{edgeALPHA}, we show that there exists $K > 0$ such that, for almost all $z \in \mathbb{R}$, $|(\pseudomirrorinv - (-\poibayesrisk')^{-1})(z)| \lesssim K/\alpha$. 
We now theoretically analyze \pmboost, and we make two classical assumptions on \weak~\cite{ssIBj,nwLO}.
\begin{assumption} \textbf{(R)}\label{boundH}
 The weak classifiers have bounded range: $\exists M>0$ such that $|h_j(x_i)| \leq M, \forall j$.
\end{assumption}
    Let $\tilde{\edge}_j  \defeq   m \cdot \edge_j / (\ve{1}^\top \ve{w}_j) \in [-M, M]$ be the normalized edge of the $j$-th weak classifier, where with a slight abuse of notation of~\eqref{defETAB}, $\edge_j$ is the (unnormalized) edge (Step 2.3). 
\begin{assumption} \textbf{(WLA)}\label{defWLA}
  The weak classifiers are not random: $\exists \upgamma > 0$ such that $|\tilde{\edge}_j| \geq \upgamma \cdot M, \forall j$.
\end{assumption}
Since we employ $\pseudoinvlink$ instead of $F'$ in~\pmboost, we need two more functional assumptions on the first- and second-order derivatives of $F$. 
The \textit{edge discrepancy} of a function $F$ on weak classifier $h_j$ at iteration $t$ is given by
\begin{eqnarray}
  \Delta_j(F) \defeq \left|\expect_{i\sim \mathcal{S}} \left[ y_i h_j(\ve{x}_i) F'(y_i H_{\ve{\beta}}(\ve{x}_i))\right] - \edge_j\right|,\negspace
\end{eqnarray}
which is the absolute difference of the edge using (the derivative of) $\surloss$ vs. using \pmboost's $\pseudoinvlink$ (implicit in $\edge_j$).
\begin{assumption}\textbf{(O1, O2)}\label{defLT}
$\exists \zeta , \pi \in [0,1)$ such that:
  \begin{itemize}[left= -0.2cm]
  \item [] \textbf{(O1)} the edge discrepancy is bounded $\forall t$: $\Delta_j(\surloss) \leq \zeta \cdot \edge_j$, where $j$ is returned by \weak~at iteration $t$;
    \item [] \textbf{(O2)} the curvature of $\surloss$ is bounded: $\surlossprime^* \defeq \sup_{z} \surloss''(z) \leq (1-\zeta)(1+\pi)/(a_f  M^2)$.
  \end{itemize}
\end{assumption}
Note that \textbf{(O2)} is quite mild for specific sets of functions, e.g., proper canonical losses are Lipschitz~\cite{rwCB}, so \textbf{(O2)} can in general be ensured by a simple renormalization of the loss. 
On the other hand, \textbf{(O1)} becomes progressively harder to ensure as the number of iterations increases because the choices of the \weak~will tend to collapse.
Fortunately in practice (see Section~\ref{sec:experiments}), this phenomenon is not limiting to effective boosting.
Let $\tilde{w}_t \defeq \ve{1}^\top \ve{w}_t$, the total weight at iteration $t$ in \pmboost.
  \begin{theorem}\label{genBOOST}
    Suppose \textbf{(R, WLA)} hold on \weak~and \textbf{(O1, O2)} hold on $\surloss$, for each iteration of \pmboost. Denote $Q(\surloss) \defeq  2 \surlossprime^*/(\upgamma^2 (1-\zeta)^2(1-\pi^2))$. The following holds:
    \begin{itemize}[left= -0.2cm]
        \item on the risk defined by $\surloss$: $\forall z^* \in \mathbb{R}, \forall T>0$, if we observe $\sum_{t=0}^{T} \tilde{w}^2_t \geq Q(\surloss)\cdot (\surlossprime(0) - \surlossprime(z^*))$, then 
     \begin{eqnarray} \label{eq:boostsurrogate}
 \expect_{i\sim \mathcal{S}}\left[\surlossprime(y_i H_{\ve{\beta}}(x_i))\right] & \leq & \surlossprime(z^*).
      \end{eqnarray}
      \item on edge distribution: $\forall \theta \geq 0, \forall \varepsilon \in [0,1], \forall T > 0$, letting $F_{\varepsilon,\theta} \defeq (1-\varepsilon) \inf \surlossprime + \varepsilon \surlossprime(\theta)$, if the number of iterations satisfiees $T \geq \frac{1}{\varepsilon^2} \cdot \frac{Q(\surloss)\cdot (\surlossprime(0) - F_{\varepsilon,\theta})}{\pseudoinvlink^2(-\theta)}$, then
\begin{eqnarray} \label{eq:boostmargin}
    \pr_{i\sim \mathcal{S}}\left[y_i H_{\ve{\beta}}(x_i) \leq \theta\right] & \leq & \varepsilon.
  \end{eqnarray}
\end{itemize}
\end{theorem}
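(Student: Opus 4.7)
The plan is to (i) obtain a per-iteration lower bound on the decrease of $\expect_{i\sim\mathcal{S}}[\surloss(y_i H_{\ve{\beta}_t}(\ve{x}_i))]$ that is proportional to $\tilde{w}_t^2$; (ii) telescope this bound to obtain the first bullet; and (iii) conclude the second bullet via Markov's inequality applied to $\pseudoinvlink(-y_i H_{\ve{\beta}_T}(\ve{x}_i))$.

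For (i), observe that Step 2.4 of \pmboost~modifies only coordinate $j_t$ of $\ve{\beta}$, so $y_i H_{\ve{\beta}_{t+1}}(\ve{x}_i) - y_i H_{\ve{\beta}_t}(\ve{x}_i) = a_f \edge_{j_t} y_i h_{j_t}(\ve{x}_i)$. A second-order Taylor expansion of $\surloss$, combined with assumption (R) giving $|y_i h_{j_t}(\ve{x}_i)| \leq M$ and (O2) giving $\surloss'' \leq \surlossprime^*$, yields
\begin{align*}
\surloss(y_i H_{\ve{\beta}_{t+1}}(\ve{x}_i)) \leq \surloss(y_i H_{\ve{\beta}_t}(\ve{x}_i)) + a_f \edge_{j_t} y_i h_{j_t}(\ve{x}_i)\, \surloss'(y_i H_{\ve{\beta}_t}(\ve{x}_i)) + \tfrac{1}{2} a_f^2 \edge_{j_t}^2 M^2 \surlossprime^*.
\end{align*}
Taking $\expect_{i\sim\mathcal{S}}[\cdot]$, (O1) replaces the expectation of the first-order term by $-(1-\zeta)\edge_{j_t}^2$ (up to the sign convention identifying $\pseudoinvlink$ as an approximation of $-\surloss'$), while the ceiling $a_f M^2 \surlossprime^* \leq (1-\zeta)(1+\pi)$ from (O2) controls the quadratic remainder by $\tfrac{1}{2}a_f(1-\zeta)(1+\pi)\edge_{j_t}^2$. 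Subtracting, the net decrease per iteration is at least $\tfrac{1}{2}a_f(1-\zeta)(1-\pi)\edge_{j_t}^2$.

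For (ii), (WLA) together with $\edge_{j_t} = \tilde{\edge}_{j_t}\tilde{w}_t/m$ gives $\edge_{j_t}^2 \geq \upgamma^2 M^2 \tilde{w}_t^2/m^2$. Summing the per-iteration decrease from $t=0$ to $T-1$ with $H_{\ve{\beta}_0}\equiv 0$ telescopes to
\begin{align*}
\surloss(0) - \expect_{i\sim\mathcal{S}}[\surloss(y_i H_{\ve{\beta}_T}(\ve{x}_i))] \geq \frac{a_f(1-\zeta)(1-\pi)\upgamma^2 M^2}{2 m^2}\sum_{t=0}^{T-1}\tilde{w}_t^2.
\end{align*}
Choosing the largest admissible step, $a_f M^2 \surlossprime^* = (1-\zeta)(1+\pi)$, the right-hand pre-factor simplifies to $1/(m^2 Q(\surloss))$ with $Q(\surloss) = 2\surlossprime^*/(\upgamma^2(1-\zeta)^2(1-\pi^2))$ exactly as in the theorem; rearranging produces the first bullet.

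For (iii), non-negativity and monotonicity of $\pseudoinvlink$ imply $\{y_i H_{\ve{\beta}_T}(\ve{x}_i)\leq\theta\} = \{\pseudoinvlink(-y_i H_{\ve{\beta}_T}(\ve{x}_i))\geq\pseudoinvlink(-\theta)\}$, so Markov's inequality gives
\begin{align*}
\pr_{i\sim\mathcal{S}}[y_i H_{\ve{\beta}_T}(\ve{x}_i)\leq\theta] \leq \frac{\expect_{i\sim\mathcal{S}}[\pseudoinvlink(-y_i H_{\ve{\beta}_T}(\ve{x}_i))]}{\pseudoinvlink(-\theta)} = \frac{\tilde{w}_T/m}{\pseudoinvlink(-\theta)}.
\end{align*}
To force this at most $\varepsilon$, apply the first bullet with the convex-combination target $F_{\varepsilon,\theta}$ (engineered precisely so the Markov ratio collapses to $\varepsilon$), and read off the required $T \geq (1/\varepsilon^2)Q(\surloss)(\surlossprime(0) - F_{\varepsilon,\theta})/\pseudoinvlink^2(-\theta)$ directly from the first bullet's hypothesis.

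The main obstacle is the careful bookkeeping of the approximation error introduced by using $\pseudoinvlink$ rather than $-\surloss'$ in Step 2.1 of \pmboost: the $\zeta$-slack from (O1) degrades the first-order gain and must be absorbed by the $(1+\pi)$-slack in (O2) without wiping out the per-iteration decrease. The factor $1/((1-\zeta)^2(1-\pi^2))$ in $Q(\surloss)$ is exactly this trade-off, and delivers the $O(1/(1-\zeta)^2)$ iteration penalty advertised in the introduction.
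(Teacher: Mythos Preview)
Your treatment of parts (i)--(ii) matches the paper's Step~1 closely: second-order Taylor expansion, \textbf{(O1)} to control the first-order term, \textbf{(O2)} and \textbf{(R)} to bound the remainder, \textbf{(WLA)} to normalize the edge, and a telescoping sum. That is fine.

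Part (iii), however, has a genuine gap. Your Markov step gives $\pr_{i\sim\mathcal{S}}[y_i H_{\ve{\beta}_T}(\ve{x}_i)\le\theta]\le \tilde w_T/(m\,\pseudoinvlink(-\theta))$, but you then write ``apply the first bullet with $F_{\varepsilon,\theta}$ \dots so the Markov ratio collapses to $\varepsilon$.'' The first bullet controls $\expect_{i\sim\mathcal{S}}[\surloss(y_i H_{\ve{\beta}_T}(\ve{x}_i))]$, not $\tilde w_T$; there is no direct link between the two, and $F_{\varepsilon,\theta}$ is \emph{not} engineered to bound the Markov ratio. What is missing are two separate ingredients the paper supplies:
\begin{enumerate}
\item A \emph{contraposition via monotonicity of $\surloss$}: if $\pr[y_iH\le\theta]>\varepsilon$, then because $\surloss$ is non-increasing, $\expect[\surloss(y_iH)]>(1-\varepsilon)\inf\surloss+\varepsilon\surloss(\theta)=F_{\varepsilon,\theta}$. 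This is the reason $F_{\varepsilon,\theta}$ has its form, and it is what converts the first bullet's conclusion into a margin bound. Your Markov inequality does not play this role.
\item A \emph{denseness case split} to turn the hypothesis on $\sum_t\tilde w_t^2$ into one on $T$. Set $\kappa=\varepsilon\,\pseudoinvlink(-\theta)$. Either some $\tilde w_t<\kappa$, in which case your Markov bound (the paper's Lemma used here) already gives $\pr\le\varepsilon$ at that iteration; or $\tilde w_t\ge\kappa$ for every $t\le T$, whence $\sum_{t}\tilde w_t^2\ge T\kappa^2=T\varepsilon^2\pseudoinvlink^2(-\theta)$, which triggers the first bullet's hypothesis as soon as $T\ge Q(\surloss)(\surloss(0)-F_{\varepsilon,\theta})/(\varepsilon^2\pseudoinvlink^2(-\theta))$, and item~1 then yields $\pr\le\varepsilon$.
\end{enumerate}
Your proposal has one half of item~2 (the Markov inequality is precisely the ``not dense'' case) but neither the case split nor item~1. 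Without the case split you cannot lower-bound $\sum_t\tilde w_t^2$ by $T\cdot(\text{anything})$, and without item~1 you cannot extract the margin statement from $\expect[\surloss]\le F_{\varepsilon,\theta}$ in the ``always dense'' branch.
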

  Thus, Theorem \ref{genBOOST} gives boosting compliant convergence on training, and the synthesis of~\eqref{eq:boostsurrogate} and~\eqref{eq:boostmargin} provides a very strong convergence guarantee. 
  When classical assumptions about the loss of interest are satisfied, such as it being Lipschitz (ensured for proper canonical losses~\cite{rwCB}), there is a natural extension to generalization following standard approaches~\cite{bmRA,sfblBT}. 
  See Appendix~\ref{app-proof-genBOOST} for the proof of Theorem~\ref{genBOOST}, and for additional remarks regarding its \textit{optimality} and further application to addressing discrepancies due to \textit{machine type approximations}.
     
     \begin{table}[t]
  \begin{center}
    \resizebox{\columnwidth}{!}{
\begin{tabular}{cccccc}
                        &               &               &           &                   & \\
\multirow{1}{*}{Dataset} &  \multirow{1}{*}{Algorithm}             &                \multicolumn{4}{c}{Feature Noise Twister}                               \\ \cline{3-6} 
            &      & $p=$ $0$           & $0.15$            & $0.25$                    & $0.5$   \\ \hhline{======}
            & AdaBoost & $1.00 \pm 0.00$  & $0.99 \pm 0.01$ & $0.97 \pm 0.01$  & $0.88 \pm 0.02$  \\ \cline{2-6} 
            & us ($\alpha = 1.1$) & $1.00 \pm 0.00$  & $1.00 \pm 0.00$ & $0.99 \pm 0.01$ & $0.91 \pm 0.02$         \\ \cline{2-6} 
            & us ($\alpha = 2.0$) & $1.00 \pm 0.00$  & $1.00 \pm 0.00$ & $\mathbf{1.00 \pm 0.00}$ & $0.91 \pm 0.03$           \\ \cline{2-6} 
\multirow{-3}{*}{xd6}  & us ($\alpha = 4.0$) & $1.00 \pm 0.00$  & $1.00 \pm 0.00$ & $1.00 \pm 0.00$ & $\mathbf{0.96 \pm 0.02}$ \\ \cline{2-6} 
            & XGBoost & $1.00 \pm 0.00$  & $0.97 \pm 0.02$ & $0.96 \pm 0.01$ & $0.83 \pm 0.03$           \\ 
\end{tabular}
}
\caption{Accuracies of AdaBoost, \pmboost~(for $\alpha \in \{1.1,2,4\}$), and XGBoost on the xd6 dataset affected by the feature noise twister with the flipping probability $p = \{0,0.15,0.25,0.5\}$. All three algorithms were trained with depth $3$ regression trees. For each value of $\alpha$, we set $a_{f} = 8$. Note that the xd6 dataset is perfectly classified (when there is no twist) by a Boolean formula on the features, given in~\cite{buntine1992further}, which explains the performance when $p = 0$. 
}
\label{table:xd6featurenoise}
\end{center}
     \bignegspace
     \bignegspace
\end{table}

\section{Experiments} \label{sec:experiments}

\begin{figure*}[h]
  \begin{center}
\includegraphics[scale=.5]{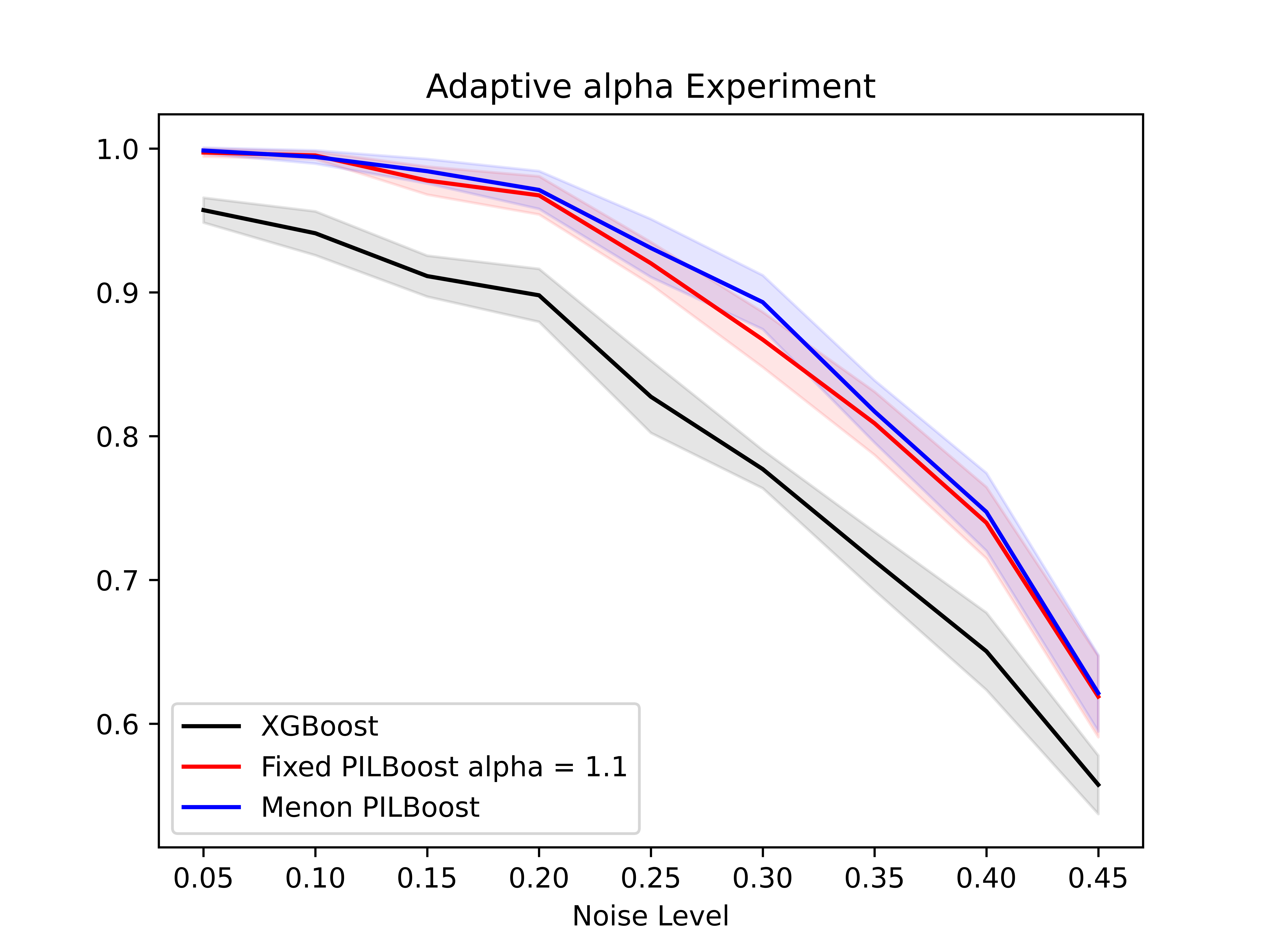}
\end{center}
\caption{Adaptive $\alpha$ experiment on the xd6 dataset with depth 3 regression trees. 
Solid curves correspond to mean classification accuracy and shaded areas are the associated $95\%$ confidence intervals obtained from a t-test.
For each label noise value, we train three algorithms: 1) vanilla XGBoost; 2) \pmboost~with fixed $\alpha = 1.1$; 3) and, an adaptive $\alpha$ \pmboost~(we refer to as Menon \pmboost). 
For details regarding Menon \pmboost, refer to Class Noise Twister in the main body. 
The result suggests that a fixed value of $\alpha = 1.1$ in \pmboost~is good, but approximating $\alpha_{0}$ does induce slightly better model performance. 
For general twists, we suggest this heuristic (or some variant) as inspired by~\cite{mrowLF} could be used to learn $\alpha_{0}$.
Further experimental consideration is given in Appendix~\ref{sec:adaptivealphaexperiments}.
}
  \label{fig:ConfusionMatrix1} 
\vspace{-0.5cm}
\end{figure*}

\begin{figure*}[h]
\begin{center}
\begin{tabular}{ccc}
\hspace{-0.3cm} \includegraphics[trim=20bp 15bp 30bp 15bp,clip,width=0.32\linewidth]{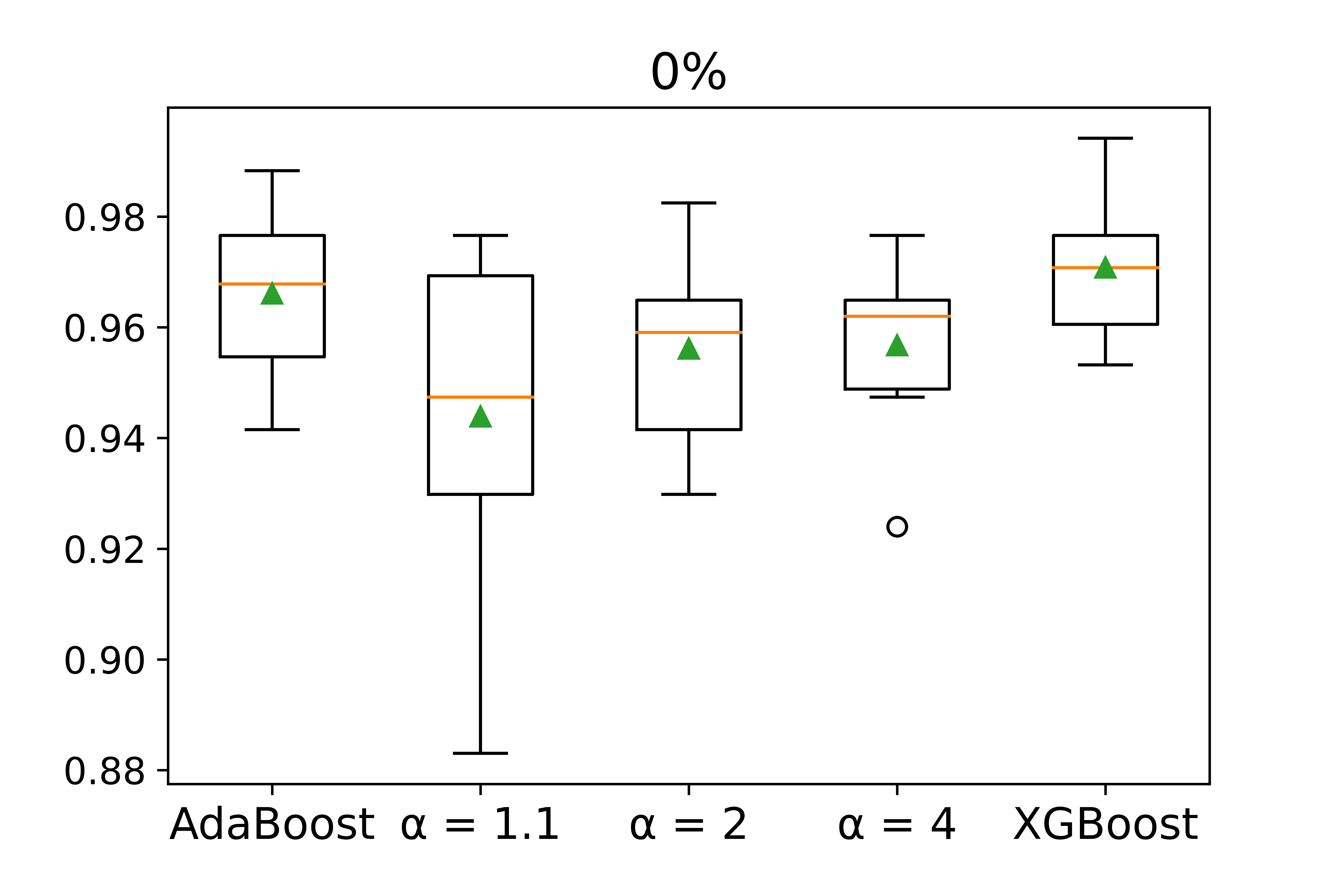}
  \hspace{-0.3cm} & \hspace{-0.2cm} \includegraphics[trim=20bp 15bp 30bp 15bp,clip,width=0.32\linewidth]{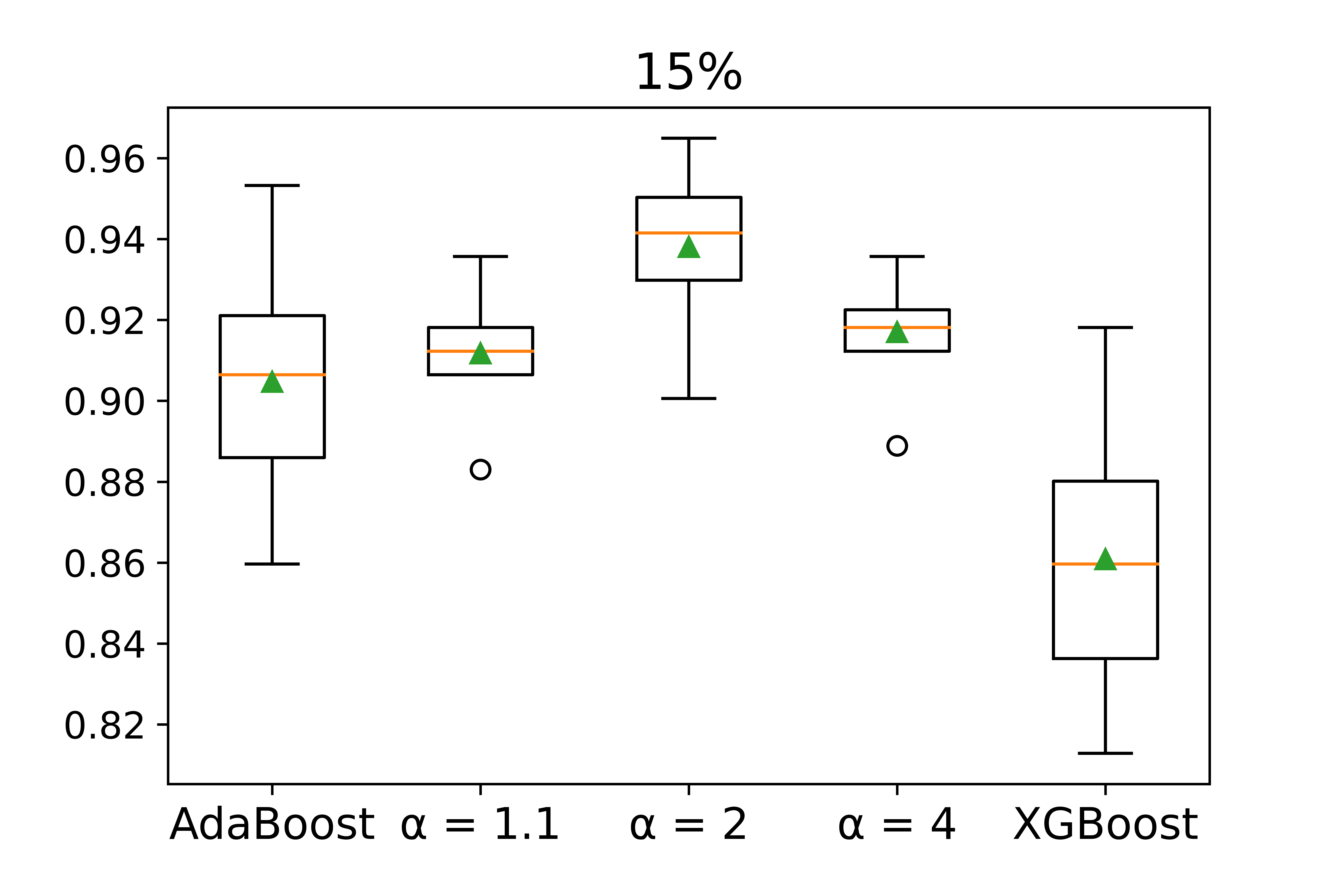}
  & \hspace{-0.3cm} \includegraphics[trim=20bp 15bp 30bp 15bp,clip,width=0.32\linewidth]{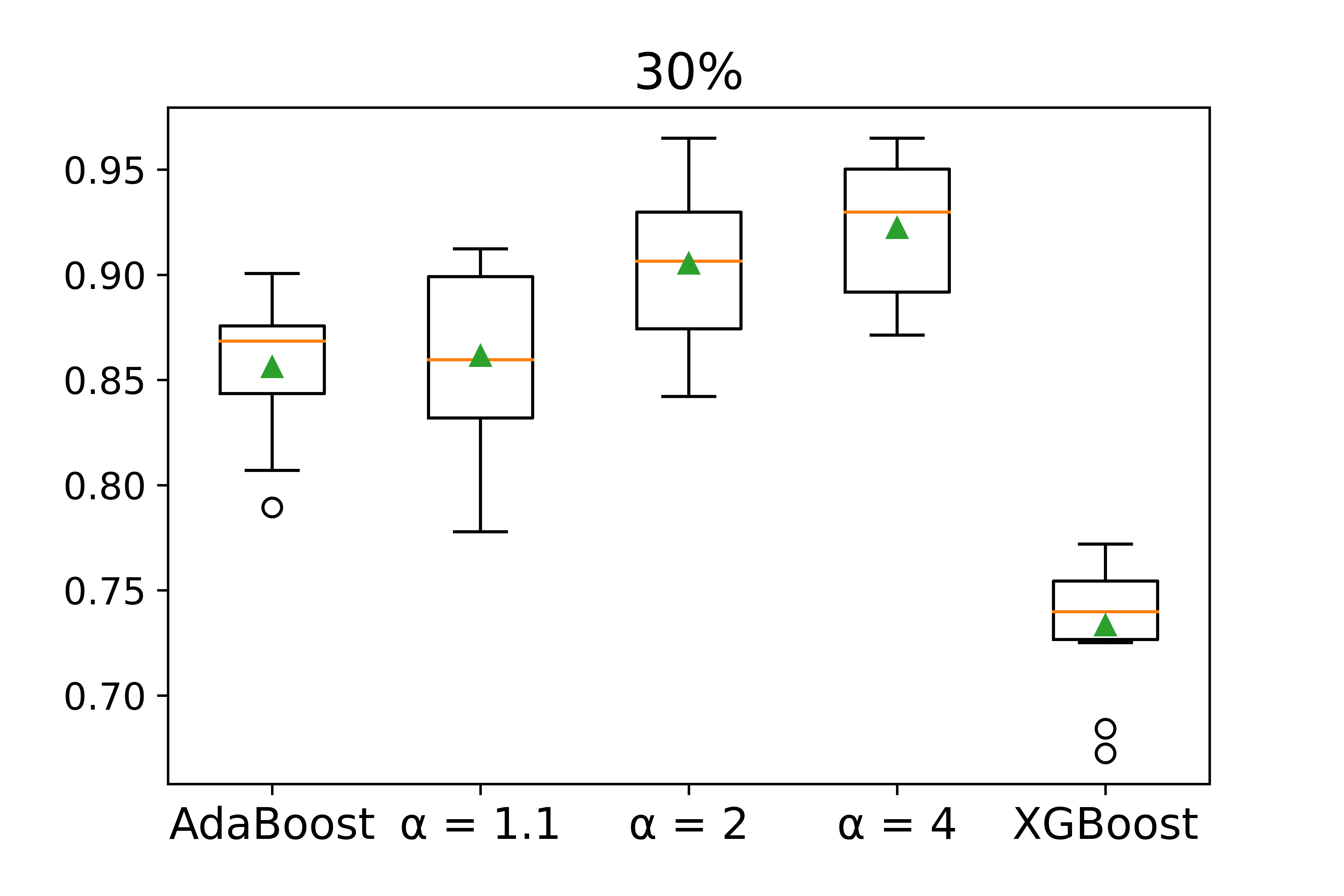}
\end{tabular}
\end{center}
\vspace{-0.4cm}
\caption{Box and whisker plots reporting the accuracy of AdaBoost, \pmboost~(for $\alpha \in \{1.1,2,4\}$), and XGBoost on the cancer dataset affected by the class noise twister with $0\%$, $15\%$, and $30\%$ twist. 
Note that the orange line is the median, the green triangle is the mean, the box is the interquartile range, and the circles outside of the whiskers are outliers.
All three algorithms were trained with decision stumps (depth $1$ regression trees). For $\alpha = 1.1, 2,$ and $4$, we set $a_{f} = 7, 2,$ and $4$, 
respectively. Numeric values corresponding to the box and whisker plots are provided in Table~\ref{table:cancerclassnoise} in Section~\ref{sec:additionalclassnoise}. We find that \pmboost~has gains over AdaBoost and XGBoost when there is twist present, and $\alpha^{*}$ (of our set) increases as the amount of twist increases, which follows theoretical intuition (Lemma~\ref{lemPointwise}).}
  \label{fig:breastcancerlabelnoise}
\vspace{-0.5cm}
\end{figure*}

\begin{figure}[h]
\begin{center}
\begin{tabular}{cc}
\hspace{-0.3cm} 
\includegraphics[trim=0bp 0bp 30bp 15bp,clip,width=0.35\linewidth]{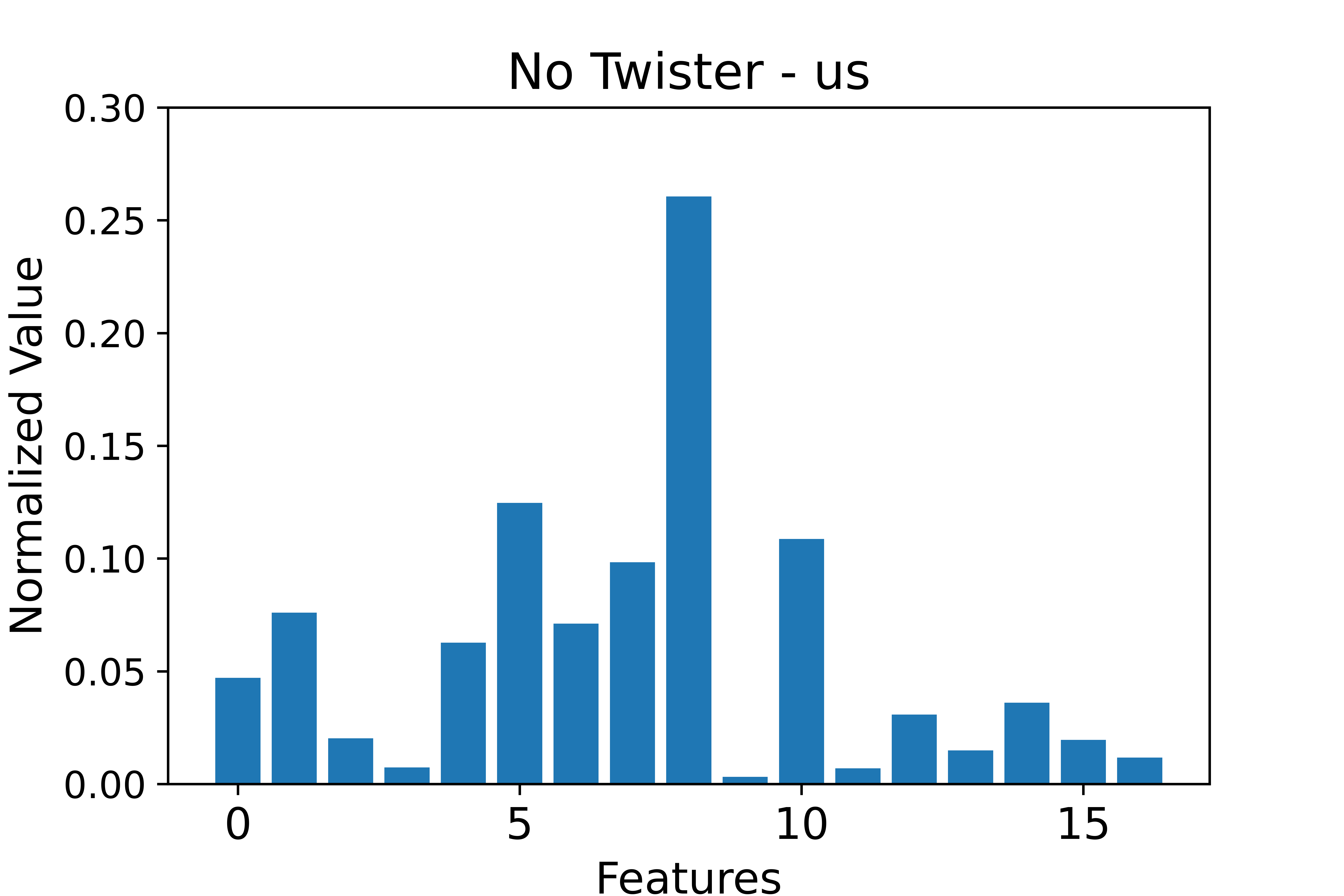}
\hspace{-0.3cm} & \hspace{-0.3cm} \includegraphics[trim=0bp 0bp 30bp 15bp,clip,width=0.35\linewidth]{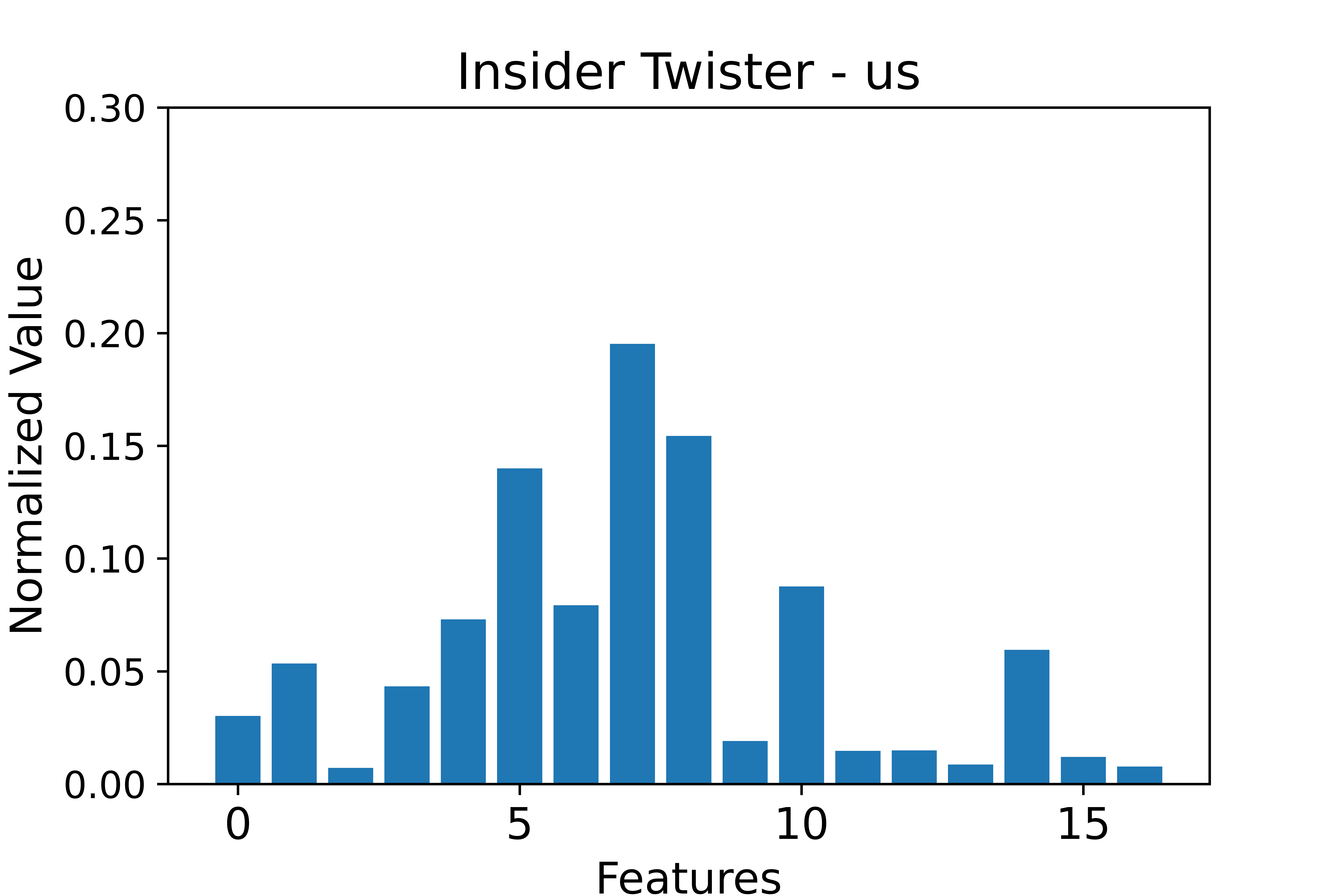}
                  \hspace{-0.3cm} \\
  \hspace{-0.2cm} \includegraphics[trim=0bp 0bp 30bp 15bp,clip,width=0.35\linewidth]{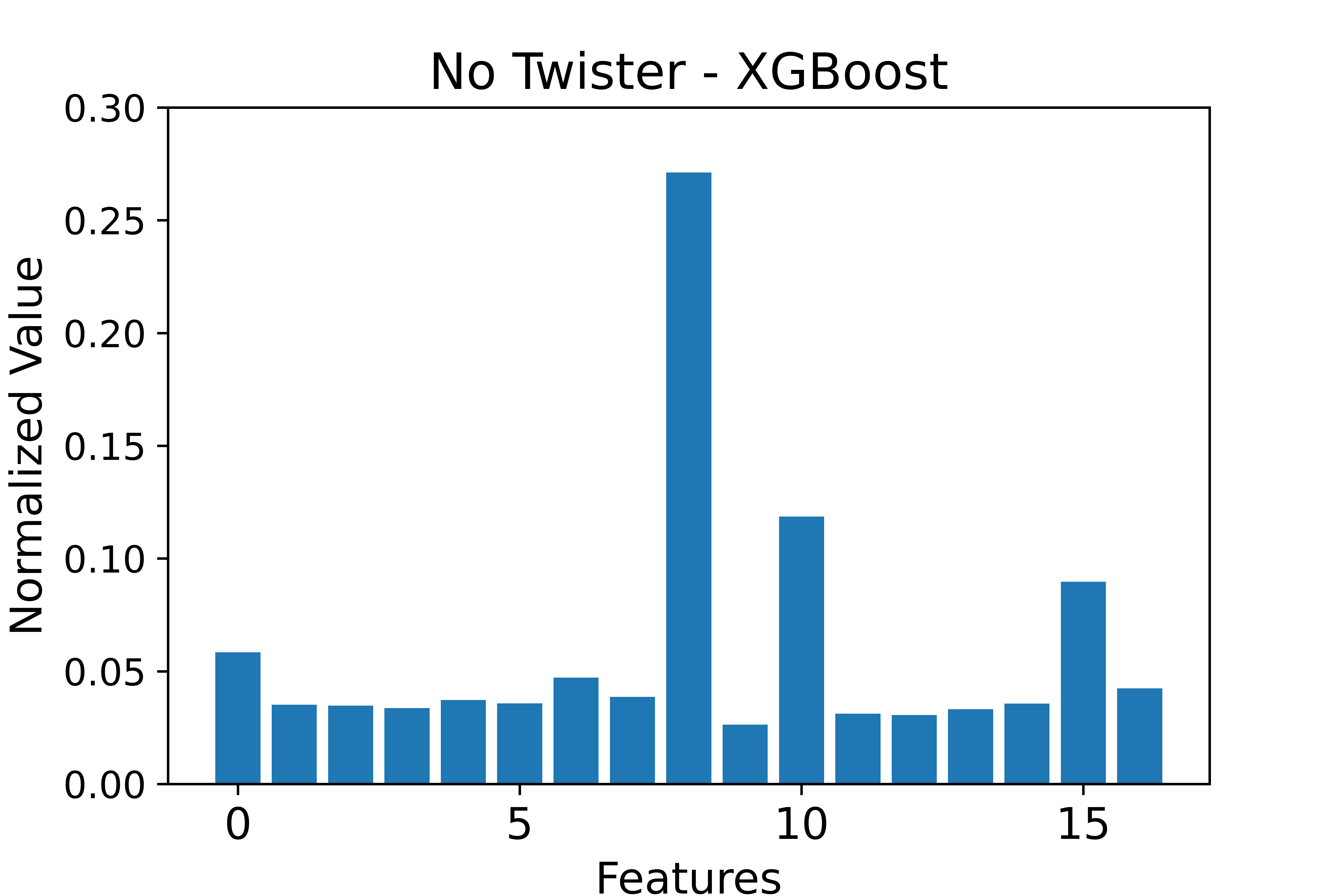}
\hspace{-0.3cm} & \hspace{-0.3cm} \includegraphics[trim=0bp 0bp 30bp 15bp,clip,width=0.35\linewidth]{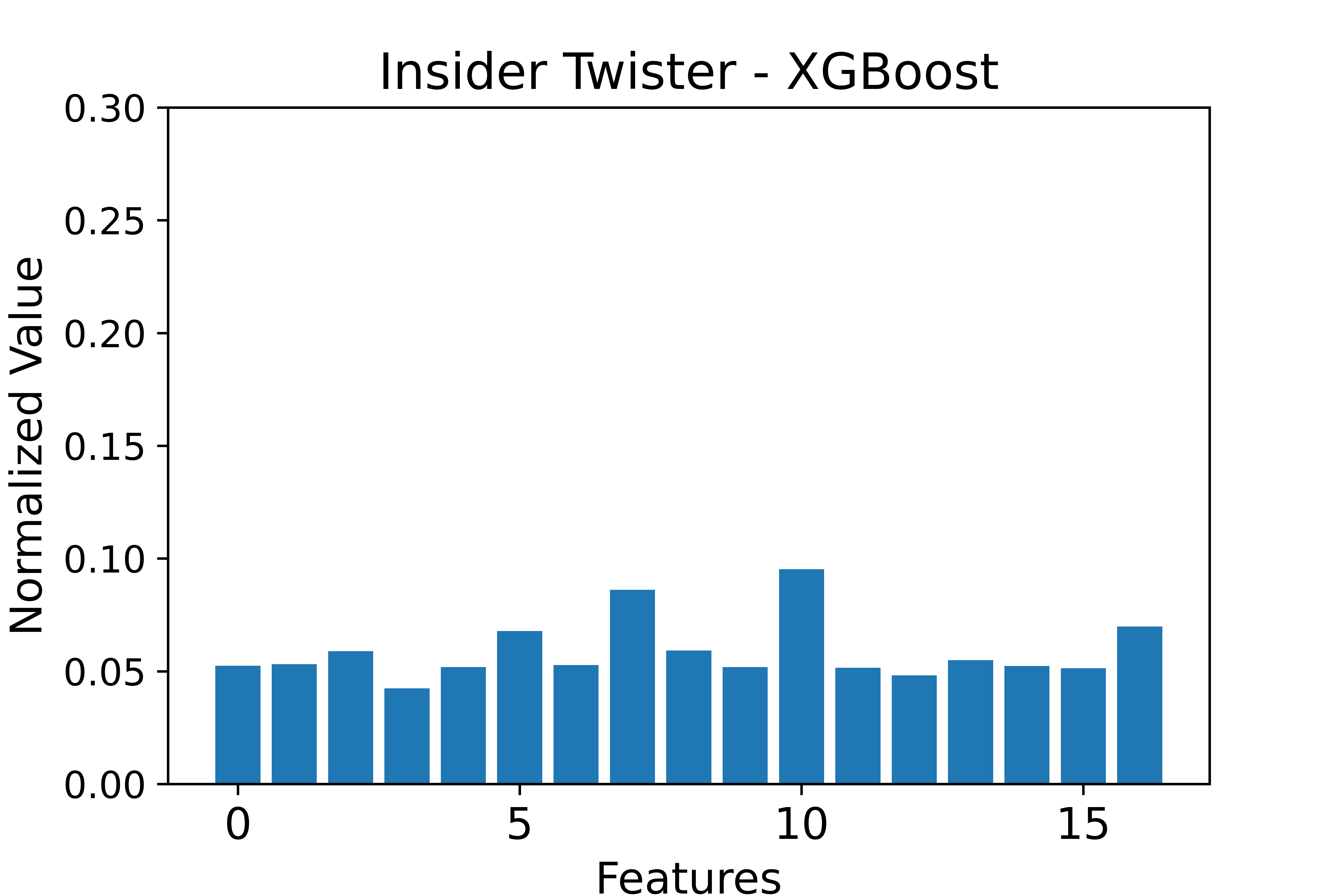}
%
\end{tabular}
\end{center}
\caption{Normalized feature importance profiles for \pmboost~with $\alpha = 1.1$ and $a_{f} = 7$ (\textit{top}) and for XGBoost (\textit{bottom}) on the online shoppers intention dataset (both for depth $3$ trees) with and without the insider twister. 
We find that the insider twister significantly perturbs the feature importance of XGBoost as evidenced in the plot (\textit{far right}), and hence significantly reduces the inferential capacity of the learned model.
More details can be found in Insider Twister (main body) and Appendix~\ref{sec:additionalinsidertwister}.
}
  \label{fig:onlineshoppersinsidertwisterfeatures}
\end{figure}

We provide experimental results on \pmboost~(for $\alpha \in \{1.1, 2, 4\}$) and compare with AdaBoost~\cite{freund1997decision} and XGBoost~\cite{chen2016xgboost} on four binary classification datasets, namely, cancer~\cite{breastcancerdataset}, xd6~\cite{buntine1992further}, diabetes~\cite{smith1988using}, and online shoppers intention~\cite{sakar2019real}.
We performed $10$ runs per algorithm with randomization over the train/test split and the twisters.
All experiments use regression decision trees (of varying depths $1$-$3$) in order to align with XGBoost.
Hyperparameters of XGBoost were kept to default to maintain the fairest comparison between the three algorithms; for more of these experimental details, please refer to Appendix~\ref{sec:parametersxgboost}. 
In order to demonstrate the \textit{twist-properness} of $\alpha$-loss as implemented in \pmboost, we corrupt the training examples of these datasets according to three different (malicious) twisters.\\ 
\noindent \textbf{Class Noise Twister} (all datasets): This twister is equivalent to SLN in the training sample. 
Results on this twister for the cancer dataset are presented in Figure~\ref{fig:breastcancerlabelnoise} and see Appendix~\ref{sec:additionalclassnoise} for further results. 
In general, we find that \pmboost~is more robust to the Class Noise Twister than AdaBoost and XGBoost, and we find that $\alpha^{*}$ increases as the amount of twist increases, which complies with our theory (Lemma~\ref{lemPointwise} and Theorem~\ref{thmALPHABLUNTING}).
We also present an \textit{adaptive $\alpha$ experiment} in Figure~\ref{fig:ConfusionMatrix1}.
We denote the adaptive method Menon \pmboost, since we take inspiration from~\cite{mrowLF}, where they show that one can estimate the level of label noise (see their Appendix D.1) from the minimum and maximum posterior values. 
Using a single decision tree classifier with $O(\log(m))$ leaves and $O(\sqrt{m})$ samples per leaf ($m \approx 681$ examples for xd6 dataset with $70/30$ train/test-split), and information gain as the splitting criterion, we estimate the minimum and maximum posterior values directly from the training data with local counts of number of samples classified such that $Y = 1$ at each leaf. 
Once we obtain $\posmin$ and $\posmax$ in this way, we estimate the symmetric noise value $p \in [0,1]$ with the geometric mean $p = \sqrt{\posmin(1-\posmax)}$.
Finally, to estimate $\alpha_{0}$ for each noise level, we apply the formula in Lemma~\ref{lemPointwise}\textbf{(c)} and the SLN formula given just before Lemma~\ref{lemSymLabFlip} where we estimate $\posclean$ with the average posterior from the decision tree classifier. 

\noindent \textbf{Feature Noise Twister} (xd6 dataset): This twister perturbs the training sample by randomly flipping features. More precisely, for each training example, the example is selected if $\text{Ber}(p_{1})$ returns $1$. Then, for each selected training example, and for each feature independently, the feature is flipped (the features of xd6 are Booleans) to the other symbol if $\text{Ber}(p_{2})$ also returns $1$. 
Results on this twister are presented in Table~\ref{table:xd6featurenoise} where $p_{1} = p_{2} = p$.
In general, we find that \pmboost~is more robust to the Feature Noise Twister than AdaBoost and XGBoost, and we find that $\alpha^{*}$ increases as the amount of twist increases.

\noindent \textbf{Insider Twister} (online shoppers intention dataset): This twister assumes more knowledge about the model than the previous two twisters. 
In essence, the insider twister adds noise to a few of the most informative features for predicting the class.
Specifically for the online shoppers intention dataset, the insider twister adds noise to feature $8$ (\textit{page values} - numeric type with range in $[-250,435]$), feature $10$ (\textit{month}), and feature $15$ (\textit{visitor type} - ternary alphabet).
For \textit{page values}, the insider twister adds i.i.d. $\mathcal{N}(0,60)$ to the entries; for both \textit{month} and \textit{visitor type}, the insider twister independently increments (with probability $1/2$) the symbol according to their respective alphabets such that about $50\%$ of each of these features are perturbed.
Results on this twister are presented in Figure~\ref{fig:onlineshoppersinsidertwisterfeatures} and further discussion in Appendix~\ref{sec:additionalinsidertwister} (Figure~\ref{fig:averagedresultsinsidertwister}); post-twister, the feature importance profile of XGBoost is almost uniform, displaying damages to the algorithm's discriminative abilities (Figure~\ref{fig:onlineshoppersinsidertwisterfeatures}, right), while the feature importance profile of \pmboost~is much less perturbed overall.

\section{Conclusion}\label{sec-conc}

In this work, we have introduced a generalization of the properness framework via the notion of \textit{twist-properness}, which allows delineating loss functions with the ability to ``untwist'' the twisted posterior into the clean posterior.
Notably, we have shown in Lemma~\ref{lemPointwise} that a nontrivial extension of a loss function called $\alpha$-loss, first introduced in information theory, is twist-proper.
For a large class of twists (Definition~\ref{def:Bayesblunting}), we have shown in Theorem~\ref{thmALPHABLUNTING} that while a pointwise estimation of $\alpha$ is optimal, a fixed choice of $\alpha_{0}>1$ readily outperforms the proper choice ($\log$-loss).
We then studied the twist-proper $\alpha$-loss under a novel boosting algorithm, called \pmboost, which uses the pseudo-inverse link for weight updates, for which we proved convergence guarantees in Theorem~\ref{genBOOST}.
Lastly, we have presented experimental results for \pmboost~optimizing $\alpha$-loss on several twists, and
also a method inspired by~\cite{mrowLF} to estimate $\alpha_{0}$ using \textit{only} training data in Figure~\ref{fig:ConfusionMatrix1}. 
A possible next step for this line of work would be to design losses that are both twist-proper \textit{and} algorithmically convenient.

\bibliographystyle{plain}
\bibliography{bibgen}

\begin{appendices}

\section{Proofs and Further Theoretical Results}

\subsection{Proof of Lemma \ref{lemUstar}}\label{app-proof-lemUstar}

We study $U \defeq (-\poibayesrisk)^\star$, which is convex by definition, and show that it is non-decreasing.
Monotonicity follows from the non-negativity of the argument of the partial losses and the definition of the convex conjugate: suppose $z'\geq z$
 and let $u^* \in \arg\sup_u z u +\poibayesrisk(u)$. We have
\begin{eqnarray}
  U(z') & \defeq & \sup_{u\in [0,1]} z'u +\poibayesrisk(u) \\
  & = & \sup_{u\in [0,1]} (z'-z)u + zu +\poibayesrisk(u) \\
        & \geq & (z'-z)u^* + zu^* +\poibayesrisk(u^*) \\
        & = & (z'-z)u^* + U(z)\\
  & \geq & U(z),
\end{eqnarray}
which completes the proof that $U$ is non-decreasing and therefore $F(z)\defeq U(-z)$ non-increasing.

Concavity of $\poibayesrisk$ follows from definition. We show continuity of $\poibayesrisk$, the continuity of $F$ then following from the definition of the convex conjugate $F$ \cite{bvCO}. Let $a, u \in (0,1)$, let $u^* \in \pseudob(u), a^*\in \pseudob(a)$. We get:
    \begin{eqnarray}
      \poibayesrisk(u) & \defeq & u \partialloss{1}(u^*) + (1-u) \partialloss{-1}(u^*)\\
                      & \leq & u \partialloss{1}(a^*) + (1-u) \partialloss{-1}(a^*)\\
      & & = \poibayesrisk(a) + (u-a) ( \partialloss{1}(a^*) - \partialloss{-1}(a^*)),
    \end{eqnarray}
    (the inequality holds since otherwise $u^* \not\in \pseudob(u)$) Permuting the roles of $u$ and $a$, we also get
    \begin{eqnarray}
      \poibayesrisk(a) & \leq & \poibayesrisk(u) + (a-u) ( \partialloss{1}(u^*) - \partialloss{-1}(u^*)),
      \end{eqnarray}
      from which we get 
      \begin{eqnarray}
        |\poibayesrisk(a)-\poibayesrisk(u)| & \leq & Z \cdot |a-u|,\label{contua}
      \end{eqnarray}
      with $Z\defeq \max_{v\in \{a,u\}} \sup |\partialloss{1}(\pseudob(v))-\partialloss{-1}(\pseudob(v))|$ (where we use set differences if $\pseudob$s are not singletons). Since $Z\ll \infty$, \eqref{contua} is enough to show the continuity of $\poibayesrisk$ (we have by assumption $\mathrm{dom}(\poibayesrisk) = [0,1]$). 
      
      \subsection{Bayes Tilted Estimates}\label{app-proof-bayestilt}

The proof of Lemma~\ref{lemma:btesingleton} readily follows from Definition~\ref{defBTE} and standard properties of convex functions, see e.g.,~\cite{bvCO}.

Below, we also provide analysis of the properties of Bayes tilted estimates for more general losses which induce set-valued functions. 
Following convention, we denote the set valued inequality $A \leq B$, such that, $\forall a\in A, \exists b \in B, a\leq b$ and the set-valued (Minkowski) difference $A-B \defeq \{a-b: a\in A, b\in B\}$.

  \begin{lemma}\label{proptilt}
    The following properties of $\pseudob$ follow from assumptions M, D or S on partial losses:\\
    \noindent \textbf{(M)} implies set-valued monotonicity: $\forall u_1<u_3\in [0,1]$, we have $\pseudob(u_1) \leq  \pseudob(u_3)$ and $\pseudob(u_1) \cap \pseudob(u_3) \subseteq \pseudob(u_2), \forall u_2 \in (u_1, u_3)$;\\
  \noindent \textbf{(D)} and $\pseudob$ differentiable imply monotonicity: $\forall u \in [0,1]$, $\partialloss{1}'(\pseudob(u)) \leq \partialloss{-1}'(\pseudob(u)) \Leftrightarrow  \pseudob'(u) \geq 0$;\\
    \noindent  \textbf{(S)} implies set-valued symmetry: $\pseudob(1-u) = \{1\} - \pseudob(u), \forall u\in [0,1]$; \\
    \textbf{(E)} Extreme values: $\partialloss{1}(1) = \partialloss{-1}(0) = 0$, $\partialloss{1}([0,1]) \subseteq \mathbb{R}_+, \partialloss{-1}([0,1]) \subseteq \mathbb{R}_+$. Further, this implies properness on extreme values, as $0 \in \pseudob(0), 1 \in \pseudob(1)$.
\end{lemma}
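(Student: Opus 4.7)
The plan is to dispatch the four parts in turn, each reducing to a short manipulation of the pointwise conditional risk $L(u,v) = v\cdot \partialloss{1}(u) + (1-v)\cdot \partialloss{-1}(u)$, exploiting the fact that $L$ is linear in its second argument $v$ and that $\pseudob(v)$ is exactly the set of minimizers of $L(\cdot,v)$ over $[0,1]$.

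For \textbf{(M)}, I would pick $a\in \pseudob(u_1), b\in \pseudob(u_3)$ with $u_1<u_3$ and write the two optimality inequalities $L(a,u_1)\leq L(b,u_1)$ and $L(b,u_3)\leq L(a,u_3)$. Adding them collapses the common $\partialloss{\pm 1}$ terms and yields $(u_3-u_1)\cdot\bigl[(\partialloss{1}(b)-\partialloss{1}(a))-(\partialloss{-1}(b)-\partialloss{-1}(a))\bigr]\leq 0$. Since \textbf{(M)} forces $\partialloss{1}$ non-increasing and $\partialloss{-1}$ non-decreasing, if one assumed $a>b$ each bracketed difference would be non-negative, a contradiction; hence $a\leq b$, which is exactly the set-valued inequality. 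For the intersection statement, write $u_2=\lambda u_1+(1-\lambda)u_3$ with $\lambda\in(0,1)$ and exploit linearity in $v$ to get $L(\cdot,u_2)=\lambda L(\cdot,u_1)+(1-\lambda)L(\cdot,u_3)$; any point $p$ minimizing both summands therefore minimizes the convex combination, giving $p\in \pseudob(u_2)$.

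For \textbf{(D)}, at an interior minimizer the first-order condition reads $u\cdot\partialloss{1}'(\pseudob(u))+(1-u)\cdot\partialloss{-1}'(\pseudob(u))=0$. Implicit differentiation in $u$ gives a relation where $\pseudob'(u)$ is multiplied by the second-variation term $u\cdot\partialloss{1}''(\pseudob(u))+(1-u)\cdot\partialloss{-1}''(\pseudob(u))$, which is nonnegative at a minimum; the sign of $\pseudob'(u)$ is thus opposite to that of $\partialloss{1}'(\pseudob(u))-\partialloss{-1}'(\pseudob(u))$, which is the stated equivalence. For \textbf{(S)}, I would substitute the symmetry $\partialloss{1}(v)=\partialloss{-1}(1-v)$ directly into $L(v,1-u)$, factor, and recognize $L(v,1-u)=L(1-v,u)$; taking $\arg\inf$ on both sides and changing variables $w=1-v$ yields $\pseudob(1-u)=\{1\}-\pseudob(u)$ as Minkowski difference.

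For \textbf{(E)}, monotonicity from \textbf{(M)} on $[0,1]$ localizes the infima of $\partialloss{1}$ and $\partialloss{-1}$ at $u=1$ and $u=0$ respectively; normalizing each partial loss by subtracting its infimum (a harmless shift that does not alter any $\pseudob$) forces $\partialloss{1}(1)=\partialloss{-1}(0)=0$ and hence nonnegativity on $[0,1]$. Evaluating $L(u,0)=\partialloss{-1}(u)$ shows its minimum is at $u=0$, so $0\in\pseudob(0)$; a symmetric computation gives $1\in\pseudob(1)$. The main obstacle I anticipate is the careful bookkeeping in \textbf{(M)}: the relation $A\leq B$ quantifies over elements of two (possibly non-singleton) sets, and one has to verify that the strict/non-strict case in the monotonicity of $\partialloss{\pm 1}$ does not admit pathological elements of $\pseudob(u_1)$ lying strictly above elements of $\pseudob(u_3)$. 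This is where the argument must be run uniformly over $a,b$ rather than on a pair of representatives, and where the set-valued convention introduced just before the statement of Lemma~\ref{proptilt} is essential.
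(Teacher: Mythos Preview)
Your handling of \textbf{(S)} and \textbf{(E)} matches the paper almost exactly. For \textbf{(M)} your ``add the two optimality inequalities'' device is actually cleaner than the paper's argument, which detours through an element of $\pseudob(u_1)\cap\pseudob(u_3)$. The only hole is the one you flag yourself: with merely non-strict monotonicity of $\partialloss{1},\partialloss{-1}$, the case $a>b$ does not yield a contradiction but only forces $\partialloss{1}(a)=\partialloss{1}(b)$ and $\partialloss{-1}(a)=\partialloss{-1}(b)$, whence $L(a,\cdot)\equiv L(b,\cdot)$ and so $a\in\pseudob(u_3)$ already; the set-valued inequality (which only asks that \emph{some} element of $\pseudob(u_3)$ dominate $a$) then holds with witness $a$ itself. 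So your argument is complete once you finish that sentence.

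The one place where you genuinely diverge from the paper, and pay a small price, is \textbf{(D)}. Your implicit differentiation of the first-order condition produces the factor $u\,\partialloss{1}''(\pseudob(u))+(1-u)\,\partialloss{-1}''(\pseudob(u))$, which requires the partial losses to be \emph{twice} differentiable---something \textbf{(D)} does not grant. The paper avoids this by computing instead the derivatives of $\poibayesrisk$: from the first-order condition one gets $\poibayesrisk'(u)=\partialloss{1}(\pseudob(u))-\partialloss{-1}(\pseudob(u))$, and differentiating once more (using only $\partialloss{1}',\partialloss{-1}',\pseudob'$) gives $\poibayesrisk''(u)=\pseudob'(u)\cdot\bigl(\partialloss{1}'(\pseudob(u))-\partialloss{-1}'(\pseudob(u))\bigr)$. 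Concavity of $\poibayesrisk$ from Lemma~\ref{lemUstar} then forces this product to be $\leq 0$, which is exactly the sign relation you want. This route needs no second derivatives of the partial losses and ties the equivalence directly to the concavity of the Bayes risk.
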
 


\noindent \textbf{Case (M)} -- Suppose $\pseudob(a) \cap \pseudob(a') \neq \emptyset$ for some $a\neq a'$ and let $v^* \in \pseudob(a) \cap \pseudob(a')$. 
It means $\forall v\in [0,1]$,
    \begin{eqnarray}
a \partialloss{1}(v^*) + (1-a) \partialloss{-1}(v^*) & \leq & a \partialloss{1}(v) + (1-a) \partialloss{-1}(v),\label{eq000}\\
a' \partialloss{1}(v^*) + (1-a') \partialloss{-1}(v^*) & \leq & a' \partialloss{1}(v) + (1-a') \partialloss{-1}(v),\label{eq001}
    \end{eqnarray}
    and so $\forall \delta \in [0,1]$, if we let $a_\delta \defeq a + \delta (a'-a)$, a $1-\delta, \delta$ convex combination of both inequalities yields $\forall v\in [0,1]$,
    \begin{eqnarray}
a_\delta \partialloss{1}(v^*) + (1-a_\delta) \partialloss{-1}(v^*) & \leq &  a_\delta  \partialloss{1}(v) + (1-a_\delta ) \partialloss{-1}(v), \forall v\in [0,1],
    \end{eqnarray}
    which implies $v^* \in \pseudob(a_\delta)$ and shows the right part of \textbf{Case (M)}.\\
To show show the left part of \textbf{Case (M)}; we add to \eqref{eq000} and \eqref{eq001} we now add the inequality:
    \begin{eqnarray}
a \partialloss{1}(v^\circ) + (1-a) \partialloss{-1}(v^\circ) & \leq & a \partialloss{1}(v) + (1-a) \partialloss{-1}(v),\label{eq002}
    \end{eqnarray}
    with therefore $v^\circ\in \pseudob(a)$, implying $a \partialloss{1}(v^\circ) + (1-a) \partialloss{-1}(v^\circ) = a \partialloss{1}(v^*) + (1-a) \partialloss{-1}(v^*)$ as otherwise one of $v^\circ, v^*$ would not be in $\pseudob(a)$. We then get
    \begin{eqnarray}
      a' \partialloss{1}(v^\circ) + (1-a') \partialloss{-1}(v^\circ) & = & a \partialloss{1}(v^\circ) + (1-a) \partialloss{-1}(v^\circ) + (a'-a)\cdot (\partialloss{1}(v^\circ) -\partialloss{-1}(v^\circ) )\nonumber\\
                                                                     & = & a \partialloss{1}(v^*) + (1-a) \partialloss{-1}(v^*) + (a'-a)\cdot (\partialloss{1}(v^\circ) -\partialloss{-1}(v^\circ) ) \nonumber\\
        & = & a' \partialloss{1}(v^*) + (1-a') \partialloss{-1}(v^*) + (a'-a)\cdot \Delta,\label{eq003}
    \end{eqnarray}
    with $\Delta \defeq \partialloss{1}(v^\circ) -\partialloss{-1}(v^\circ) - (\partialloss{1}(v^*) - \partialloss{-1}(v^*))$. Considering \eqref{eq003}, we deduce from \eqref{eq001} that to have $v^\circ \in \pseudob(a')$, we equivalently need $(a'-a)\cdot \Delta \leq 0$. We also know by assumption that $\partialloss{1}$ is non-increasing and $\partialloss{-1}$ is non-decreasing, so $g(u) \defeq \partialloss{1}(u) -\partialloss{-1}(u)$ is non-increasing. We thus have $(a'-a)\cdot \Delta \leq 0$ iff one of the two possibilities hold:
    \begin{itemize}
\item $a' \geq a$ and $v^\circ \geq v^*$, or
\item $a' \leq a$ and $v^\circ \leq v^*$,
\end{itemize}
which shows the right part of \textbf{Case (M)}.\\

\noindent \textbf{Case (E)} -- we have $\poibayesrisk(0) = \inf_{v\in[0,1]} \partialloss{-1}(v) = 0$ for $v=0$, hence $0 \in \pseudob(0)$. Similarly, $\poibayesrisk(1) = \inf_{v\in[0,1]} \partialloss{1}(v) = 0$ for $v=1$, hence $1 \in \pseudob(1)$.\\

\noindent \textbf{Case (D)} -- we have
    \begin{eqnarray}
      \frac{\mathrm{d} }{\mathrm{d} u}  \poibayesrisk(u) & = & \partialloss{1}(\pseudob(u)) + u \partialloss{1}'(\pseudob(u)) \pseudob'(u) - \partialloss{-1}(\pseudob(u)) +  (1-u) \partialloss{-1}'(\pseudob(u)) \pseudob'(u) \nonumber\\
      & = & \partialloss{1}(\pseudob(u)) - \partialloss{-1}(\pseudob(u)) + \pseudob'(u)\cdot ( u \partialloss{1}'(\pseudob(u)) +  (1-u) \partialloss{-1}'(\pseudob(u)))  \label{eqDERL},
    \end{eqnarray}
    but since $v = \pseudob(u)$ is the solution to \eqref{eq000} it satisfies $u \partialloss{1}'(\pseudob(u)) + (1-u) \partialloss{-1}'(\pseudob(u)) = 0$ 
    , so that \eqref{eqDERL} simplifies to
    \begin{eqnarray}
\frac{\mathrm{d} }{\mathrm{d} u}  \poibayesrisk(u) & = & \partialloss{1}(\pseudob(u)) - \partialloss{-1}(\pseudob(u)),
    \end{eqnarray}
    and since $\poibayesrisk$ is concave and the partial losses are differentiable,
    \begin{eqnarray}
\frac{\mathrm{d}^2 }{\mathrm{d} u^2}  \poibayesrisk(u) & = & \pseudob'(u) \cdot (\partialloss{1}'(\pseudob(u)) - \partialloss{-1}'(\pseudob(u))) \leq 0, \forall u \label{eqDERL2},
    \end{eqnarray}
    which proves the statement of the Lemma.\\
    
\noindent \textbf{Case (S)} -- Suppose $v^* \in \pseudob(a)$, which implies
    \begin{eqnarray}
      a \partialloss{1}(v^*) + (1-a) \partialloss{-1}(v^*) & \leq & a \partialloss{1}(v) + (1-a) \partialloss{-1}(v), \forall v\in[0,1].\label{eq00l}
    \end{eqnarray}
    We also note that since symmetry holds, $a \partialloss{1}(v^*) + (1-a) \partialloss{-1}(v^*) = (1-a) \partialloss{1}(1-v^*) + a \partialloss{-1}(1-v^*)$, which implies because of \eqref{eq00l} $1-v^* \in \pseudob(1-a)$. \\

\noindent \textbf{Remark}: even if we assume the partial losses to be strictly monotonic, the tilted estimate can still be set valued. To see this, craft the partial losses such that $v \in \pseudob(u)$ and then for some $w> v$, replace the partial losses in the interval
  $[v, w]$ by affine parts w/ slope $-a < 0$ for $\partialloss{1}$, $b>0$ for $\partialloss{-1}$ and such that $b/a = u/(1-u)$ which guarantees $ L(u,v) =  L(u,w)$ and thus $w \in \pseudob(u)$;

\subsection{Proof of Lemma \ref{lemFocalLoss}}\label{app-proof-lemFocalLoss}

  We recall the focal loss' corresponding pointwise conditional risk in lieu of \eqref{eqpoirisk}:
  \begin{eqnarray}
\poirisk_\gamma(u,v) & \defeq & - v\cdot (1-u)^\gamma \log u - (1-v) \cdot u^\gamma \log(1-u),
  \end{eqnarray}
  and if it is twist proper, then for any $\posnoise, \posclean \in [0,1]$, there
  exists $\gamma \geq 0$ such that
  \begin{eqnarray}
\left. \frac{\partial}{\partial u} \poirisk_\gamma(u,\posnoise)\right|_{u = \posclean} & = & 0.
    \end{eqnarray}
    Equivalently, we must find $\gamma \geq 0$ such that (keeping notations $ u \defeq \posclean, v \defeq \posnoise$ for clarity):
    \begin{eqnarray}
(1-v) u^\gamma \cdot\left(\gamma (1-u)\log(1-u) -u\right) & = & v (1-u)^\gamma \cdot\left(\gamma u \log u + u - 1\right),
    \end{eqnarray}
    and we see that twist properness implies the statement that for any $K\geq 0$ (note that $K = \frac{v}{1-v}$) and any $u\in [0,1)$, there exists $\gamma \geq 0$ such that
    \begin{eqnarray}
      \frac{f(u, \gamma)}{f(1-u, \gamma)} & = & K,\label{eqPropF}\\
      f(u, \gamma) & \defeq & u^\gamma \cdot\left(\gamma (1-u)\log(1-u) -u\right).
    \end{eqnarray}
We study the ratio for $u\in [0,1/2]$. We have $f(u, \gamma) \leq 0, \forall u \in [0,1], \forall \gamma \geq 0$ and
    \begin{eqnarray}
      \frac{\partial}{\partial \gamma} f(u, \gamma) & = & u^{\gamma}(\gamma \cdot a(u) - b(u)),
    \end{eqnarray}
    with $a(u) \defeq (1-u) \log(1-u)\log(u) \geq 0$ and $b(u) \defeq u \log u - (1-u)\log(1-u)$, satisfying $b(1-u) = -b(u)$ and $u a(1-u) = (1-u)a(u)$. Hence, we arrive after some derivations to
    \begin{eqnarray}
      \frac{\partial}{\partial \gamma} \frac{ f(u, \gamma)}{f(1-u, \gamma)} & = & \frac{(u(1-u))^\gamma}{f^2(1-u, \gamma)}\cdot \left(A(u) \gamma^2 + B(u)\gamma + C(u)\right),\\
      A(u) & \defeq & u(1-u) \log(u) \log(1-u) \log(u(1-u)),\\
      B(u) & \defeq & -(u^2\log^2 u + (1-u)^2\log^2 (1-u) + (1-2u)^2\log u \log(1-u)),\\
      C(u) & \defeq & (1-2u) b(u).
    \end{eqnarray}
    All functions $A, B, C$ are non positive for any fixed $u \in [0,1/2]$, so the ratio in \eqref{eqPropF} is non-increasing over $\gamma \geq 0$ and as a consequence, for any fixed $u \in [0,1/2]$,
    \begin{eqnarray}
      \frac{f(u, \gamma)}{f(1-u, \gamma)} & \leq & \frac{f(u, 0)}{f(1-u, 0)}\\
      & & = \frac{u}{1-u}, \forall \gamma \geq 0,
    \end{eqnarray}
so we see that \eqref{eqPropF} cannot be satisfied when $K > u/(1-u)$ and as a consequence, the focal loss is not twist-proper.

\noindent \textbf{Twist-improperness of the Super Loss}
The Super Loss \cite{cwrSA} works as a ``wrapper'' of a loss, its
partial losses being defined as 
\begin{align}
L_{b,\lambda}(\ell,\sigma_{i}) = (\partialloss{b} - \tau)\sigma_{i} + \lambda(\log{\sigma_{i}})^{2},
\end{align}
where $b\in \{-1,1\}$ indicates the partial loss of a loss of interest, $\tau \in \mathrm{Im} \partialloss{b}, \lambda > 0$ are user-defined parameters. $\sigma_{i}$ is a functional computed to minimize the partial losses, and we get the optimal expression:
\begin{align}
\sigma^{*}(\partialloss{b}) = \exp\left({-W(1/2 \max{(-2/e,\beta)})}\right),
\end{align}
with $\beta = \frac{\partialloss{b} - \tau}{\lambda}$ (notice this is also a function via the partial loss). $W$ is called Lambert's function. It does not have an analytical form. 
\begin{lemma}
Suppose loss $\ell$ in the Super Loss is such that its partial loss $\partialloss{1}$ is strictly decreasing and $\partialloss{-1}$ is strictly increasing. Then the corresponding Super Loss with partial losses $L_{b,\lambda}(\ell,\sigma^*(\partialloss{b}))$ ($b\in \{-1,1\}$) is not twist proper.
\end{lemma}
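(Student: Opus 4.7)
The plan is to show that the Super Loss partial losses are strict monotone transformations of the underlying partial losses $\partialloss{1}, \partialloss{-1}$, for every choice of the hyperparameters $(\tau,\lambda)$, and therefore inherit degenerate Bayes tilted estimates at the extreme ground truths $\tilde v \in \{0,1\}$; this already rules out twist-properness via any twist that pushes interior mass onto the boundary.

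First I would rewrite the Super Loss partial losses as $L^{SL}_{b,\lambda}(u) = \phi_{\tau,\lambda}(\partialloss{b}(u))$, where
\[
\phi_{\tau,\lambda}(r) \defeq (r-\tau)\sigma^*(r) + \lambda(\log\sigma^*(r))^2, \qquad \sigma^*(r) \defeq \arg\min_{\sigma > 0}\bigl\{(r-\tau)\sigma + \lambda(\log\sigma)^2\bigr\}.
\]
Because $\sigma^*$ is the inner minimizer, the envelope theorem yields $\phi_{\tau,\lambda}'(r) = \sigma^*(r) > 0$ wherever the derivative exists, so $\phi_{\tau,\lambda}$ is strictly increasing in $r$. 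The clipping $\max(-2/e,\beta)$ in the closed form of $\sigma^*$ only activates a regime where $\sigma^*$ is constant in $r$; there $\phi_{\tau,\lambda}$ is affine in $r$ with positive slope, so strict monotonicity is preserved throughout $\mathbb{R}$.

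Second, by hypothesis, $\partialloss{1}$ is strictly decreasing and $\partialloss{-1}$ is strictly increasing on $[0,1]$. Composing with the strictly increasing $\phi_{\tau,\lambda}$ preserves monotonicity, so $L^{SL}_{1,\lambda}$ is strictly decreasing and $L^{SL}_{-1,\lambda}$ is strictly increasing on $[0,1]$ for every admissible $(\tau,\lambda)$. Specializing the pointwise conditional risk to $\tilde v \in \{0,1\}$ collapses it to a single strictly monotone partial loss, whose unique minimizer on $[0,1]$ is $\tilde v$ itself:
\[
t_{\ell^{SL}}(0) = \{0\}, \qquad t_{\ell^{SL}}(1) = \{1\}, \qquad \forall (\tau,\lambda).
\]

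To conclude I would exhibit a single untwistable twist: take any $\posclean \in (0,1)$ and $\posnoise = 1$ (or $\posnoise = 0$). Twist-properness would demand hyperparameters $(\tau,\lambda)$ with $\posclean \in t_{\ell^{SL}}(\posnoise) = \{\posnoise\}$, a contradiction. The main obstacle is the careful envelope-theorem step across the two regimes of $\sigma^*$ (clip active vs.\ inactive): one must check that $\phi_{\tau,\lambda}$ is continuous with everywhere-positive one-sided derivatives so that compositional strict monotonicity survives the kink, and that the minimization over $\sigma>0$ is well-defined (which is why the paper uses the $\max$-clip in the first place). Once that is in place the argument reduces to the structural observation that a loss whose partial losses depend on $u$ only through a monotone reparametrization of $\partialloss{1}, \partialloss{-1}$ cannot move its extreme Bayes tilted estimates off the endpoints, no matter what scalar hyperparameters it exposes.
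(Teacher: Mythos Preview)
Your argument is correct and takes a genuinely different route from the paper. Both approaches start from the same computation---the paper's equations for $\tfrac{d}{du}L^*_\lambda(\partialloss{b}(u))$ collapse exactly to the envelope statement $\phi_{\tau,\lambda}'(r)=\sigma^*(r)>0$ you use (including the clipped regime, where $\sigma^*=e$ and the derivative is the constant $e$). From there the two proofs diverge. The paper keeps working at an interior point $u\in(0,1)$: it writes the first-order optimality condition for $u$ to lie in $t_{\ell^{SL}}(v)$ as $\exp(-W(\beta_1(u)))=K'$ and observes that the left-hand side is bounded above by $e$ while $K'$ ranges over all of $\mathbb{R}_{>0}$, so the equation cannot be solved for all $(u,v)$. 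You instead push the strict monotonicity of $\phi_{\tau,\lambda}$ through to the composed partial losses, conclude $t_{\ell^{SL}}(0)=\{0\}$ and $t_{\ell^{SL}}(1)=\{1\}$ for every admissible $(\tau,\lambda)$, and finish with a boundary twist $\posclean\in(0,1),\ \posnoise\in\{0,1\}$.

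Your route is shorter and isolates the structural obstruction nicely: since the Super Loss acts on $u$ only through a strictly increasing reparametrization of $\partialloss{1},\partialloss{-1}$ for \emph{every} $(\tau,\lambda)$ with $\lambda>0$, it never admits a ``degenerate'' hyperparameter setting (analogous to $\alpha=0$ for the $\alpha$-loss, where $t_{\ell^{0}}(\cdot)=[0,1]$) capable of absorbing an extreme $\posnoise$. The paper's route is heavier but buys something extra: it rules out twist-properness already for interior twisted posteriors $\posnoise\in(0,1)$, whereas your counterexample lives on the boundary of the simplex. If one cares that the failure is not an artifact of the endpoints, the paper's bounded-range argument via $\sup_z\exp(-W(z))=e$ is the sharper statement; if one only needs to falsify Definition~\ref{def:TwistProper} as written (``for any twist''), your argument is entirely sufficient and more transparent.
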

\noindent\textbf{Remark}: the assumptions about the partial losses are very weak and would be satisfied by all popular losses (e.g. log, square, Matusita, etc.).
\begin{proof}
The notable facts about $W$, useful for our proof are:
\begin{align}
e^{W(z)} = \frac{z}{W(z)} \quad \text{and} \quad \dfrac{d}{dz} W(z) = \frac{1}{z + e^{W(z)}}  \quad \text{and} \quad  \sup \exp (-W(z)) = e.\label{eq-w-prop}
\end{align}
Simplifying notations above, we end up studing a loss with partial losses defined as
\begin{align}
L^*_{\lambda}(\partialloss{b}) = (\partialloss{b} - \tau)\sigma_{i} + \lambda(\log{\sigma^*(\partialloss{b})})^{2}.
\end{align}
Recall that a loss $\ell$ is twist-proper iff for any twist, there exists hyperparameters such that $\posclean \in t_{\ell}(\posnoise)$.
Examining this for the Super Loss, we obtain
\begin{align}
t_{L}(v) &= \text{arginf}_{u \in [0,1]} L(u,v) \\
&= \text{arginf}_{u \in [0,1]} v \cdot L_{\lambda}^{*}(\ell_{1}(u)) + (1-v)\cdot L_{\lambda}^{*}(\ell_{-1}(u)) \\
  \label{eq:superloss0} &= \text{arginf}_{u \in [0,1]}
                          \left\{\begin{array}{l}
                                   v \cdot \left[(\ell_{1}(u) - \tau)\sigma^*(\partialloss{1}) + \lambda (\log{\sigma^*(\partialloss{1})})^{2}\right] \\
                                   + (1-v)\cdot \left[(\ell_{-1}(u) - \tau)\sigma^*(\partialloss{-1}) + \lambda (\log{\sigma^*(\partialloss{-1})})^{2} \right]
                                   \end{array}\right.
\end{align}
We note that if $\ell$ is proper, then $v \in t_{\ell}(v)$.
Computing the minimum in~\eqref{eq:superloss0}, we obtain
\begin{align}
0 = \dfrac{d}{du} v \cdot \left[(\ell_{1}(u) - \tau)\sigma^*(\partialloss{1}) + \lambda (\log{\sigma^*(\partialloss{1})})^{2}\right] + (1-v)\cdot \left[(\ell_{-1}(u) - \tau)\sigma^*(\partialloss{-1}) + \lambda (\log{\sigma^*(\partialloss{-1})})^{2} \right].
\end{align}
Similar to the computation of the focal loss, we need
\begin{align}
\frac{\dfrac{d}{du} (\ell_{1}(u) - \tau)\sigma^*(\partialloss{1}) + \lambda (\log{\sigma^*(\partialloss{1})})^{2}}{\dfrac{d}{du}(\ell_{-1}(u) - \tau)\sigma^*(\partialloss{-1}) + \lambda (\log{\sigma^*(\partialloss{-1})})^{2} } = - \frac{(1-v)}{v} = -K,\label{eq-pfl}
\end{align}
and this needs to hold (via the choice of parameters $\tau, \lambda$) for any $u \in [0,1)$ and $K>0$. To save notations, define
\begin{eqnarray*}
\beta_b(u) & = & \frac{\partialloss{b}(u) - \tau}{2\lambda}.
\end{eqnarray*}
Remark that \textit{if} $\partialloss{b}(u) > \tau - (2\lambda)/e$, we have $\sigma^*(\partialloss{b}(u)) = \exp(-W(\beta_b(u)))$ and so
\begin{eqnarray}
  \lefteqn{\dfrac{d}{du} (\partialloss{b}(u) - \tau)\sigma^*(\partialloss{b}(u)) + \lambda (\log{\sigma^*(\partialloss{b}(u))})^{2}}\nonumber\\
  & = & 2\lambda \cdot \dfrac{d}{du}  \left[\beta_b(u) \exp(-W(\beta_b(u))) + \frac{W^2(\beta_b(u))}{2}\right]\nonumber\\
  & = &  2\lambda \cdot \left[\beta'_b(u) \exp(-W(\beta_b(u))) - \beta_b(u) \beta'_b(u) \cdot \frac{\exp(-W(\beta_b(u)))}{\beta_b(u)+\exp(W(\beta_b(u)))} + \frac{\beta'_b(u)W(\beta_b(u))}{\beta_b(u)+\exp(W(\beta_b(u)))}\right]\nonumber\\
  & = & 2\lambda \beta'_b(u) \cdot \frac{1 + \beta_b(u)\exp(-W(\beta_b(u)))-\beta_b(u)\exp(-W(\beta_b(u)))+W(\beta_b(u))}{\beta_b(u)+\exp(W(\beta_b(u)))}\nonumber\\
  & = & \partialloss{b}'(u)\cdot \frac{1 + W(\beta_b(u))}{\beta_b(u)+\exp(W(\beta_b(u)))}\\
  & = & \partialloss{b}'(u)\cdot \exp(-W(\beta_b(u))),\label{simpl-eq-sl}
\end{eqnarray}
since indeed it comes from \eqref{eq-w-prop},
\begin{eqnarray}
\frac{1 + W(z))}{z+\exp(W(z))} & = & \exp(-W(z));
  \end{eqnarray}
also, \textit{if} $\partialloss{b}(u) \leq \tau - (2\lambda)/e$, we have $\sigma^*(\partialloss{b}(u)) = \exp(-W(-1/e)) = e$ and so
  \begin{eqnarray}
  \dfrac{d}{du} (\partialloss{b}(u) - \tau)\sigma^*(\partialloss{b}(u)) + \lambda (\log{\sigma^*(\partialloss{b}(u))})^{2} & = & \partialloss{b}'(u)\cdot e. \label{simpl-eq-sl2}
  \end{eqnarray}
Since $\lim_{z \rightarrow -1/e^+} \exp(-W(z)) = e$, we can summarize both \eqref{simpl-eq-sl} and \eqref{simpl-eq-sl2} as
  \begin{eqnarray}
\dfrac{d}{du} (\partialloss{b}(u) - \tau)\sigma^*(\partialloss{b}(u)) + \lambda (\log{\sigma^*(\partialloss{b}(u))})^{2} & = & \partialloss{b}'(u)\cdot \exp(-W(\max\{-1/e, \beta_b(u)\})).
  \end{eqnarray}
Now consider a loss $\loss$ satisfying $\partialloss{-1}$ strictly increasing and $\partialloss{1}$ strictly decreasing. Pick $u$ so that we have simultaneously
  \begin{eqnarray}
    \partialloss{1}(u) & > & \tau - \frac{2\lambda}{e},\\
    \partialloss{-1}(u) & \leq & \tau - \frac{2\lambda}{e},
  \end{eqnarray}
  which, assuming both inequalities fit in the range of the respective partial loss, that $u \in [0,\gamma]$ for some $\gamma > 0$. Rewriting \eqref{eq-pfl}, we need to show that for any such $\gamma > 0$ and $u \in [0,\gamma]$ and $K>0$, there exists a choice of $\tau, \lambda$ such that
  \begin{eqnarray}
\frac{\partialloss{1}'(u)\cdot \exp(-W(\beta_1(u)))}{\partialloss{-1}'(u)\cdot e} & = & -K,
  \end{eqnarray}
  which rewrites conveniently as
  \begin{eqnarray}
    \exp(-W(\beta_1(u))) & = & - K e \cdot \frac{\partialloss{-1}'(u)}{\partialloss{1}'(u)},\label{eq-b-kprime}
  \end{eqnarray}
  or,
  \begin{eqnarray}
    \exp(-W(\beta_1(u))) & = & K',\label{eqexpW}
  \end{eqnarray}
for any $K' > 0$ (the RHS of \eqref{eq-b-kprime} is indeed always strictly positive). But $\sup \exp (-W(z)) = e$ \eqref{eq-w-prop}, so \eqref{eqexpW} cannot hold and the Super Loss is not twist proper.
  \end{proof}

\subsection{Proof of Lemma \ref{lemPointwise}}\label{app-proof-lemPointwise}

\begin{figure}[h] 
    \centerline{\includegraphics[scale=.5]{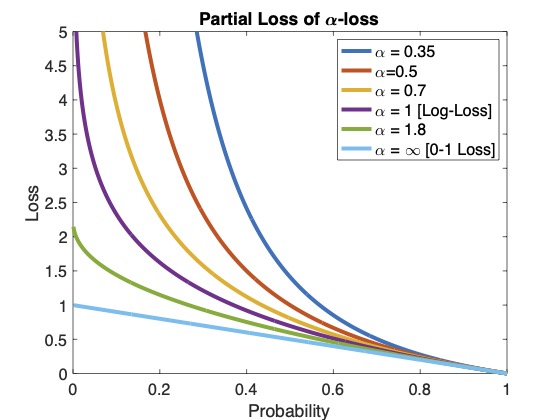}}
    \caption{A plot of $\ell_{1}^{\alpha}(u)$ for $\alpha > 0$ as given in Definition~\ref{defALPHALOSS}.}
    \label{fig:alphalossplot}
\end{figure}

For part \textbf{(a)}: we cite~\cite{sypherd2019journal} which demonstrates \textbf{(M)}, \textbf{(D)}, \textbf{(S)} for $\alpha > 0$. 
With our extension of $\alpha$-loss, these can also be readily shown for $\alpha < 0$, since they are mapped back to the $\alpha > 0$ losses.

For part \textbf{(b)}: we know from Lemma~\ref{lemma:btesingleton} that $\alpha$-loss, for $\alpha \in \mathbb{R} \setminus \{0,\pm \infty\}$, due to strict convexity, returns a singleton, i.e., $|t_{\ell^{\alpha}}(\posnoise)| = 1$.
With regards to that singleton, we know from~\cite{sypherd2019journal,lkspAT} for $\alpha > 0$ that $t_{\ell^{\alpha}}(\posnoise) = \frac{\posnoise^{\alpha}}{\posnoise^{\alpha} + (1-\posnoise)^{\alpha}}$. 
A very similar calculation recovers $t_{\ell^{\alpha}}(\posnoise) = \frac{\posnoise^{-\alpha}}{\posnoise^{-\alpha} + (1-\posnoise)^{-\alpha}}$ for $\alpha < 0$.
Multiplying the numerator and denominator of this expression by $(1-\posnoise)^{\alpha}$, we can simply write both expressions using $\frac{\posnoise^{\alpha}}{\posnoise^{\alpha} + (1-\posnoise)^{\alpha}}$.
Regarding the limit as $\alpha \rightarrow \pm \infty$ yielding $t_{\ell^{\pm \infty}}(\posnoise) = \pm 1$ or $\mp 1$, this was also already shown by~\cite{sypherd2019journal} for $+\infty$ and is similarly (readily) extended for the $\alpha \rightarrow -\infty$ case.

For part \textbf{(c)}: here, we break entirely new ground. Let $\alpha > 0$. To obtain twist-properness as stipulated in Definition~\ref{def:TwistProper}, we seek to know for what $\alpha$ the following holds 
\begin{align}
\posclean = \frac{\posnoise^{\alpha}}{\posnoise^{\alpha} + (1-\posnoise)^{\alpha}}. 
\end{align}
Solving for $\alpha$, we obtain
\begin{align}
\posclean &= \frac{\posnoise^{\alpha}}{\posnoise^{\alpha} + (1-\posnoise)^{\alpha}} \\
\frac{1}{\posclean} &= 1 + \left(\frac{1}{\posnoise} - 1\right)^{\alpha} \\
\frac{1}{\posclean} - 1 &= \left(\frac{1}{\posnoise} - 1\right)^{\alpha} \\
\log{\left(\frac{1}{\posclean} - 1 \right)} &= \alpha \log{\left(\frac{1}{\posnoise} - 1 \right)} \\
\alpha^{*} &= \frac{\log{\left(\frac{1 - \posclean}{\posclean} \right)}}{\log{\left(\frac{1- \posnoise}{\posnoise}\right)}}. 
\end{align}
After multiplying the numerator and denominator by $-1$, we obtain the desired result. 
Namely, $\alpha$-loss is twist-proper for 
\begin{align} \label{eq:properalpha}
\alpha^{*} = \frac{\logit(\posclean)}{\logit{(\posnoise)}}.
\end{align}

\begin{figure}[h] 
    \centerline{\includegraphics[scale=.25]{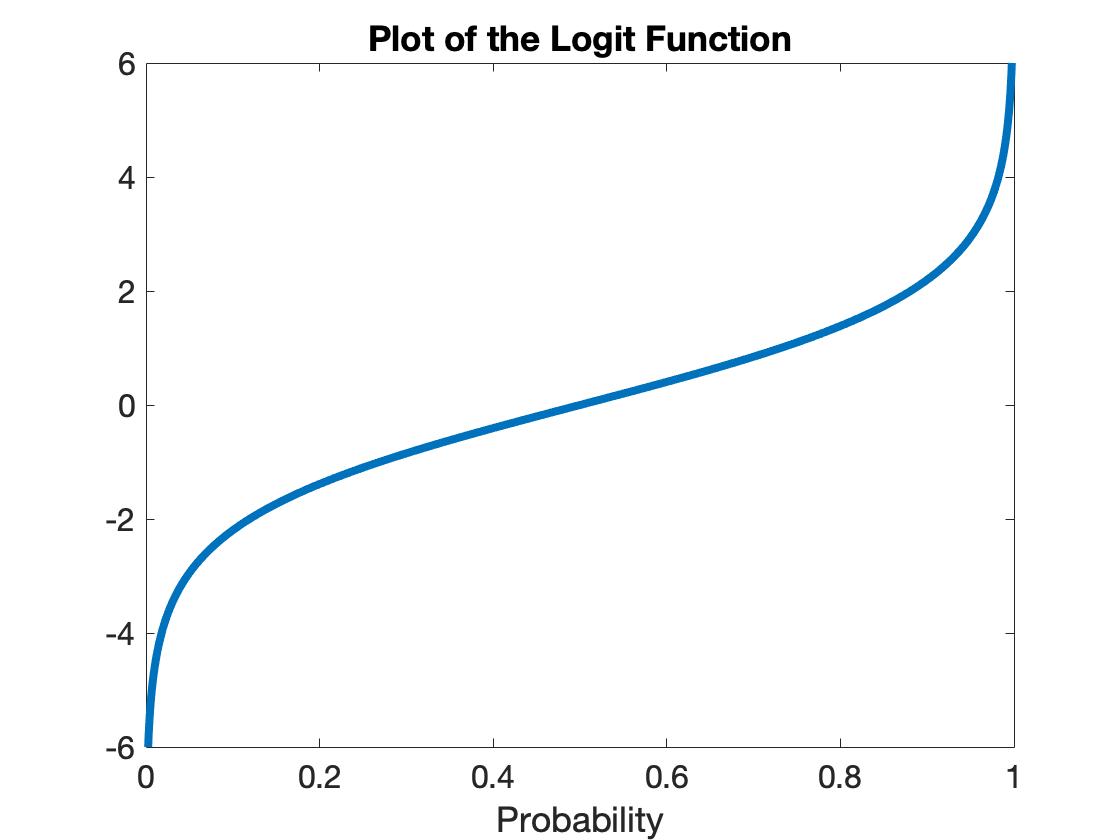}}
    \caption{A plot of the logit $\logit(u) \defeq \log(u/(1-u))$.}
    \label{fig:logitplot}
\end{figure}

For part \textbf{(d)}: we recall the definition of Bayes blunting twist from Definition~\ref{def:Bayesblunting}: 
a twist $\posclean \mapsto \posnoise$ is Bayes blunting iff  $(\posclean \leq \posnoise \leq 1/2) \vee (\posclean \geq \posnoise \geq 1/2)$.
Also, recall that $\alpha^{*}$ is given in~\eqref{eq:properalpha}, and see Figure~\ref{fig:logitplot} for a plot of the logit function.
Let $\posclean \geq 1/2$. 
The Bayes blunting twist can take $\posnoise$ from $\posclean \geq \posnoise \geq 1/2$. 
If $\posnoise = \posclean$, then $\alpha^{*} = 1$.
If $\posnoise \rightarrow 1/2$, as can be seen in the figure, the sign crossover point is $1/2$, so $\alpha^{*} \rightarrow \infty$.
Thus, by continuity, we have that $\alpha^{*} \geq 1$.
Finally, the case where $\posclean < 1/2$ follows, \textit{mutatis mutandis}.

\subsection{Proof of Theorem \ref{thmALPHABLUNTING}}\label{app-proof-thmALPHABLUNTING}

\begin{figure}[h]
    \centering
    \centerline{\includegraphics[width=.75\linewidth]{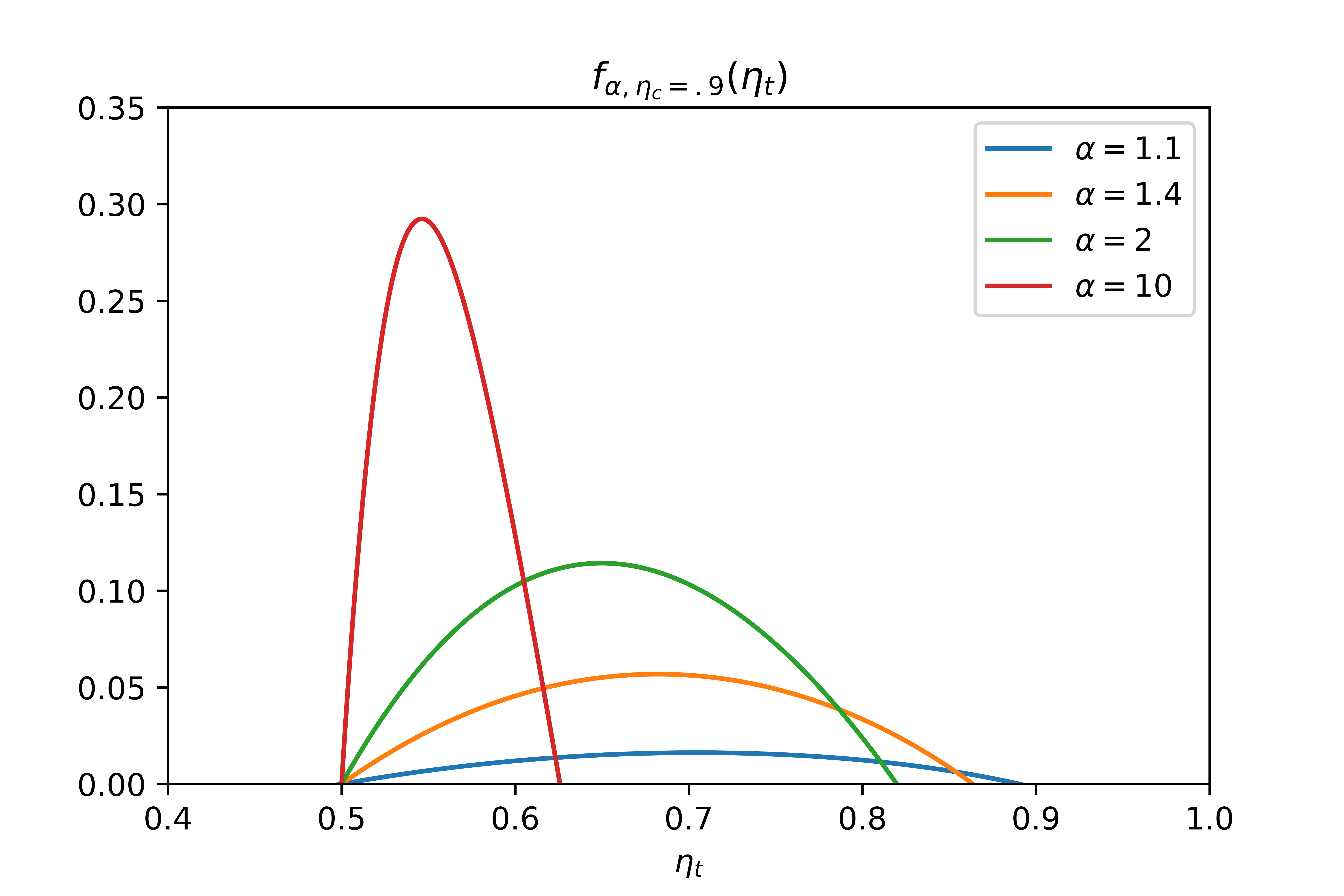}}
    \caption{Characteristic plot of the non-negative part of $f_{\alpha,\posclean}(\posnoise)$, where $\posclean = 0.9$, as a function of $\posnoise$ for several values of $\alpha$.
    Recall that the non-negative region of $f$ indicates where using Bayes tilted $\alpha$-estimate, as measured with the cross entropy for $\alpha$ given in~\eqref{defCE}, is strictly less than the $\alpha = 1$ cross entropy.
    Also recall that a Bayes blunting twist has the capability to shift $\posnoise$ anywhere in $[.5,\posclean = 0.9]$. 
    We see that for small $\alpha$, more twisted probabilities are ``covered'', whereas for large $\alpha$, less twisted probabilities are ``covered'', however, the large $\alpha$'s induce a large positive magnitude (ultimately measured by the KL-divergence) increase over the proper $\alpha = 1$.
    A key takeaway is that a fixed $\alpha$ (small enough) can correct a Bayes blunting twist for almost all $x \in \mathcal{X}$. However, it is not necessarily optimal as a perfectly tuned $\alpha$-correction mapping will use larger $\alpha$'s to optimally correct strongly twisted posteriors, inducing more gains over the $\alpha=1$ cross entropy.
    }
     \label{fig:BayesBlunting}
\end{figure}

\begin{figure}[h]
    \centering
    \centerline{\includegraphics[width=.75\linewidth]{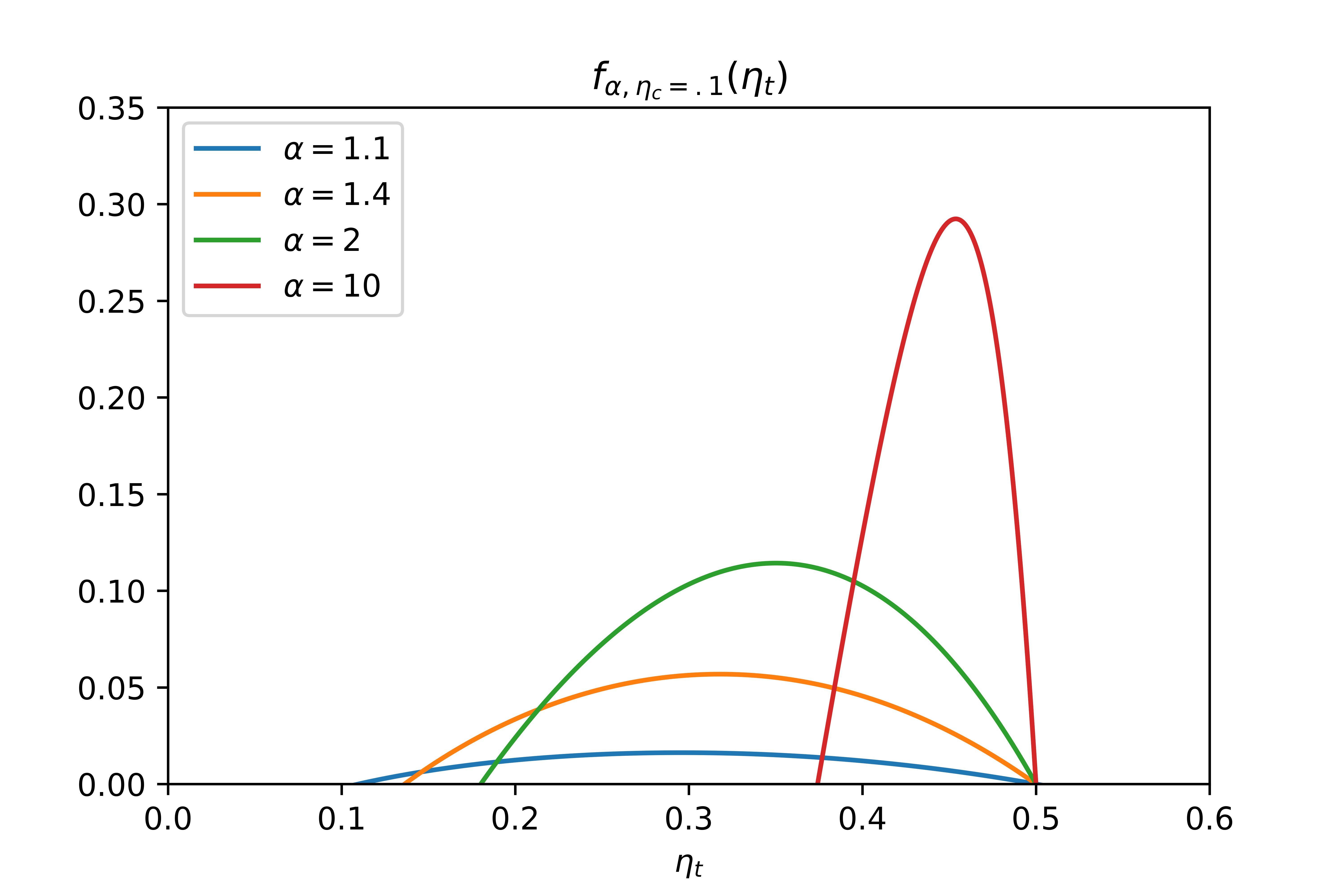}}
    \caption{Symmetric image of Figure~\ref{fig:BayesBlunting} for $\posclean = 0.1$.
    }
     \label{fig:symBayesBlunting}
\end{figure}

We want to show that for any strictly Bayes blunting twist $\posclean \mapsto \posnoise$, there exists a fixed $\alpha_{0} > 1$ and an optimal $\alpha^{\star}$-mapping, $\alpha^{\star}: \mathcal{X} \rightarrow \mathbb{R}_{>1}$, which induces
the following ordering 
\begin{align} \label{eq:bayesbluntingwts}
D_{\klloss}(\posnoise, \posclean; 1) > D_{\klloss}(\posnoise, \posclean; \alpha_{0}) \geq D_{\klloss}(\posnoise, \posclean; \alpha^\star).
\end{align}
Recalling~\eqref{eq:kldivalpha}, which is the identity 
\begin{align} 
D_{\klloss}(\posnoise,\posclean;\alpha) = \celoss(\posnoise, \posclean; \alpha) - H(\posclean),
\end{align}
by subtracting $H(\posclean)$ from both sides, we rewrite the desired statement~\eqref{eq:bayesbluntingdesiredstatement} (also given here in~\eqref{eq:bayesbluntingwts}) as 
\begin{align}
\celoss(\posnoise, \posclean; 1) > \celoss(\posnoise, \posclean; \alpha_{0}) \geq \celoss(\posnoise, \posclean; \alpha^{\star}).
\end{align}
In essence, we want to show that 
\begin{align} \label{eq:bayesbluntingsetup}
\celoss(\posnoise,\posclean;\alpha) < \celoss(\posnoise,\posclean;1) \quad \text{OR} \quad 0 < \celoss(\posnoise,\posclean;1) - \celoss(\posnoise,\posclean;\alpha),
\end{align}
for some $\alpha_{0} > 1$.
Continuing with the right-hand-side of~\eqref{eq:bayesbluntingsetup}, we have
\begin{align}
&\celoss(\posnoise,\posclean;1) - \celoss(\posnoise,\posclean;\alpha) \\
\nonumber &= \expect_{\X \sim \meas{M}} [\posclean(X) \cdot - \log{\posnoise(X)} + (1-\posclean(X)) \cdot -\log{(1-\posnoise(X))}] \\
&\quad\quad\quad\quad - \expect_{\X \sim \meas{M}} [\posclean(X) \cdot - \log{t_{\ell^{\alpha}}(\posnoise(X))} + (1-\posclean(X)) \cdot -\log{(1-t_{\ell^{\alpha}}(\posnoise(X))}] \\
&= \expect_{\X \sim \meas{M}}\left[\posclean(X) \log{\left(\frac{t_{\ell^{\alpha}}(\posnoise(X))}{\posnoise(X)} \right)} + (1 - \posclean(X)) \log{\left(\frac{1-t_{\ell^{\alpha}}(\posnoise(X))}{1-\posnoise(X)} \right)} \right] \\
\label{eq:bayesbluntingint0} &= \expect_{\X \sim \meas{M}}\left[\posclean(X) \log{\left(\frac{\posnoise(X)^{\alpha - 1}}{\posnoise(X)^{\alpha} + (1-\posnoise(X))^{\alpha}}\right)} + (1-\posclean(X)) \log{\left(\frac{(1-\posnoise(X))^{\alpha - 1}}{(1-\posnoise(X))^{\alpha} + \posnoise(X)^{\alpha}} \right)} \right],
\end{align}
where we used the linearity of the expectation and some algebra to combine the expressions. 
We want to show that the expression in brackets in~\eqref{eq:bayesbluntingint0} is strictly positive as this implies that $0 < \celoss(\posnoise,\posclean;1) - \celoss(\posnoise,\posclean;\alpha)$, in words, that the $\alpha$-Bayes tilted estimate untwists the Bayes blunting twist.
Continuing, we examine the expression in brackets in~\eqref{eq:bayesbluntingint0}
\begin{align}
\label{eq:compexpressionf} f_{\alpha, \posclean}(\posnoise) &= \posclean \log{\frac{\posnoise^{\alpha-1}}{\posnoise^\alpha + (1-\posnoise)^\alpha}} + (1-\posclean) \log{\frac{(1-\posnoise)^{\alpha-1}}{\posnoise^\alpha + (1-\posnoise)^\alpha}} \\
\label{eq:simpleexpressionf} &= (\alpha - 1) \posclean \log{\posnoise} + (\alpha - 1)(1-\posclean)\log{(1-\posnoise)} - \log{(\posnoise^{\alpha} + (1-\posnoise)^{\alpha})},
\end{align}
where we implicitly fix $X = x$ and consider scalar-valued $\posclean, \posnoise \in [0,1]$ and $\alpha \in \mathbb{R}_{+} /\{1\}$.
We note that $\alpha < 0$ does not need to be considered in this analysis since that regime of $\alpha$ is primarily useful for very strong twists due to its symmetry property (recall in Lemma~\ref{lemPointwise} that $t_{\ell^{\alpha}}$ is symmetric upon permuting $(\posnoise,\alpha)$ and $(1-\posnoise,-\alpha)$), i.e., \textit{not} useful for Bayes blunting twists which reduce confidence in the posterior but do not flip its sign across $\posnoise - 1/2$.
To build intuition of $f$, see Figure~\ref{fig:BayesBlunting} for a plot of this function.
Formally, we take note of the following observations/properties of $f$:
\begin{enumerate}
    \item \textbf{SYMMETRY.} From~\eqref{eq:compexpressionf}, it can be readily shown that for $\alpha > 0$ and for any fixed $\posclean \in [0,1]$, $f_{\alpha, \posclean}(\posnoise) = f_{\alpha, 1-\posclean}(1-\posnoise)$ for all $\posnoise \in [0,1]$. Also note that, by construction, for any $\posclean \in [0,1]$, $f_{1, \posclean}(\posnoise) = 0$, for all $\posnoise \in [0,1]$.
    \item \textbf{CONTINUITY.} From~\eqref{eq:simpleexpressionf}, it can be readily shown that for any fixed $\posclean, \posnoise \in [0,1]$, $f_{\alpha, \posclean}(\posnoise)$ is continuous in $\alpha \geq 1$.
    \item \textbf{CONCAVITY.} For arbitrarily fixed $\posclean$ and for any $\alpha > 1$, $f_{\alpha, \posclean}(\cdot)$ is concave in $\posnoise$, since (from~\eqref{eq:compexpressionf}) the composition of a concave function with a non-decreasing concave function yields a concave function. 
    As a side note, observe that $f_{\alpha, \posclean}(\cdot)$ is convex for $0 < \alpha < 1$, thus this regime of $\alpha$ does not untwist Bayes blunting twists. 
    Regarding (increa/decrea)sing concavity of $f_{\alpha, \posclean}(\cdot)$ for any fixed $\posclean \in [0,1]$ as a function of $\alpha$, traditionally a second derivative argument could indicate whether concavity is increasing or decreasing as a function of $\alpha$. 
    Unfortunately, $\frac{d^{2}}{d\posnoise^{2}}f_{\alpha, \posclean}(\posnoise)$ is an unwieldy analytical expression. 
    However, using a Taylor series approximation of $\frac{d^{2}}{d\posnoise^{2}}f_{\alpha, \posclean}(\posnoise)$ near $\posnoise = 1/2$, we find that the dominating term is $\approx -\alpha^{2}$.
    Thus, while not a proof, this \textit{indicates} that concavity of $f_{\alpha, \posclean}(\cdot)$ increases as $\alpha$ increases greater than $1$, which is sufficient for our purposes in the sequel.
    \item \textbf{ZEROES.} 
    It can be readily shown that for every $\posclean \in [0,1]$, $f_{\alpha, \posclean}(1/2) = 0$ for any $\alpha > 1$. Further, it can be shown that for any $\posclean \in [0,1]$, $\lim\limits_{\alpha \rightarrow 1^{+}} f_{\alpha, \posclean}(\posclean) \rightarrow 0^{-}$. 
    Thus, the exact values of $\posnoise$ for the other zero of $f_{\alpha,\posclean}(\cdot)$ (not $\posnoise = 1/2$), for each $\alpha > 1$, are given by the solution to the following transcendental equation:
    \begin{align} \label{eq:transcendentalbb}
        \posnoise = \left(\left((1-\posnoise)^{\alpha - 1} \right)^{1-\frac{1}{\posclean}} \left(\posnoise^{\alpha} + (1-\posnoise)^{\alpha} \right)^{\frac{1}{\posclean}} \right)^{\frac{1}{\alpha - 1}},
    \end{align}
    which can be rewritten as 
    \begin{align} \label{eq:transbbcont}
    \log{\left(\frac{\posnoise}{(1-\posnoise)^{1-\frac{1}{\posclean}}} \right)} = \frac{\alpha}{\alpha - 1} \log{\left(\posnoise^{\frac{1}{\posclean}}\right)} + \frac{1}{\posclean (\alpha - 1)} \log{\left(1 + \left(\frac{1}{\eta_{t}} - 1\right)^{\alpha} \right)}.
    \end{align}
    Suppose $\posclean > 1/2$, then since we have a Bayes blunting twist, $\posclean \geq \posnoise \geq 1/2$. 
    Letting $\alpha \rightarrow \infty$, note that the second term on the right-hand-side is $0$.
    Thus, we can solve for the zeroes when $\alpha = \infty$ by examining
    \begin{align}
    \log{\left(\frac{\posnoise}{(1-\posnoise)^{1-1/\posclean}} \right)} = \log{\left(\posnoise^{\frac{1}{\posclean}} \right)}.
    \end{align}
    After some manipulations, we obtain $\log{\left(\frac{1}{\posnoise} - 1 \right)} = 0$, which is only satisfied when $\posnoise = 1/2$. 
    Thus, for $\posclean > 1/2$ and $\alpha \rightarrow \infty$, both zeroes of~\eqref{eq:compexpressionf} converge at $\posnoise = 1/2$. 
    For $\posclean < 1/2$, the same argument holds, \textit{mutatis mutandis}.
    \item \textbf{MAXIMUM.} It can also be shown that the maximum of $f_{\alpha, \posclean}(\cdot)$ for each $\alpha > 1$ as a function of $\posnoise$ is given by the following transcendental equation
    \begin{align} \label{eq:transmax}
    \frac{\alpha(1-\posclean) + \posclean - \posnoise}{\alpha \posclean - \posclean + \posnoise} = \left(\frac{1}{\posnoise} - 1 \right)^{\alpha}.
    \end{align}
    One key observation we can make from~\eqref{eq:transmax} is that as $\alpha$ increases, the term on the right-hand-side grows (or decays) exponentially with $\alpha$, whereas the term on the left-hand-side is linear in $\alpha$. 
    With case-by-case analysis, i.e., for $\posclean > 1/2$ or $\posclean < 1/2$, it can be reasoned that as $\alpha$ increases, the solution to~\eqref{eq:transmax}, $\posnoise$, approaches $1/2$. 
    A second key observation we can make from~\eqref{eq:transmax} is that as $\alpha \rightarrow 1^{+}$, the solution to~\eqref{eq:transmax}, $\posnoise$, approaches $\posclean/2 + 1/4$. This is readily observed by setting $\alpha = 1+\epsilon$, for some $\epsilon > 0$, and $\posnoise = \frac{\posclean-1/2}{2} + 1/2 =  \posclean/2 + 1/4$, along with a Taylor series approximation of $\left(1/\posnoise - 1 \right)^{1+\epsilon}$ for $\epsilon$ near $0$.
    
\end{enumerate}


Intuitively, the remainder of the proof consists of a clipping argument. In words, we choose $\alpha_{0} > 1$ which ensures the least twisted $\posclean$ induces nonnegativity of $f_{\alpha, \posclean}(\cdot)$ given in~\eqref{eq:compexpressionf}, then, we argue that this implies that all possible $\posclean$ are ``covered'' - in the sense of inducing nonnegativity of~\eqref{eq:compexpressionf} - by the chosen $\alpha_{0} > 1$.

Formally, let $\posclean \rightarrow \posnoise$ be a \textit{strictly} Bayes blunting twist.
Thus, we have that either $(\posclean < \posnoise \leq 1/2)$ or $(\posclean > \posnoise \geq 1/2)$ for all $\posclean$.
By ZEROES of $f$, we have that for every $\posclean \in [0,1]$, $f_{\alpha, \posclean}(1/2) = 0$ for any $\alpha > 1$ and $\lim_{\alpha \rightarrow 1^{+}} f_{\alpha, \posclean}(\posclean) \rightarrow 0^{-}$.
We also have that as $\alpha \rightarrow \infty$, both zeroes of~\eqref{eq:compexpressionf} converge at $\posnoise = 1/2$. 
Thus, for every $\posclean$, the second zero (the first one is at $\posnoise = 1/2$) continuously shifts (CONTINUITY) from being located at $\posnoise = \posclean$ (as $\alpha \rightarrow 1$) to being located at $\posnoise = 1/2$ (as $\alpha \rightarrow \infty$).
Pick $\posclean^{*}$ such that $|\posclean - \posnoise|$ is minimized under $\posclean \rightarrow \posnoise$.
Note that $|\posclean - \posnoise|>0$ for all $\posclean$ by the definition of strictly Bayes blunting twist, hence there must exist an $\posclean$ minimizer (which we denote $\posclean^{*}$) under $\posclean \rightarrow \posnoise$.
By CONTINUITY, CONCAVITY, and ZEROES of $f$ above, we have that there exists some $\alpha_{0} > 1$ such that $f_{\alpha_{0},\posclean^{*}}(\posnoise) > 0$ for $\posnoise$ under $\posclean \rightarrow \posnoise$.
Note that by picking $\alpha_{0}>1$ in this way, we ensure a bijection of the relative zeroes of $f$ (as a function of $\alpha$) for every $\posclean$.
Thus, by CONCAVITY and SYMMETRY of $f$ above and this choice of $\alpha_{0} > 1$, 
we have that for all $\posclean$, both $f_{\alpha_{0},\posclean}(\posnoise) > 0$ and $f_{\alpha_{0},1-\posclean}(1-\posnoise) > 0$ under $\posclean \rightarrow \posnoise$ for the chosen $\alpha_{0} > 1$.
Thus, we obtain from~\eqref{eq:bayesbluntingint0} that for the chosen $\alpha_{0} > 1$, $0 < \celoss(\posnoise,\posclean;1) - \celoss(\posnoise,\posclean;\alpha_{0})$, i.e.,
\begin{align} \label{eq:bbdesired1}
\celoss(\posnoise,\posclean;1) > \celoss(\posnoise,\posclean;\alpha_{0}),
\end{align}
as desired.

We now show that $\celoss(\posnoise,\posclean;\alpha_{0}) \geq \celoss(\posnoise,\posclean ; \alpha^{\star})$, where the optimal $\alpha^{\star}$-mapping is given by some function $\alpha^{\star}: \mathcal{X} \rightarrow \mathbb{R}_{> 1}$.
By MAXIMUM above, we note that for a given $\posclean$, the maximum of $f_{\alpha, \posclean}(\cdot)$
moves from being achieved at $\posnoise =  \posclean/2 + 1/4$ to being achieved at $\posnoise = 1/2$, in the limit as $\alpha$ increases greater than $1$. 
By CONCAVITY above, we also observe that $f_{\alpha, \posclean}(\cdot)$ for a fixed $\posclean$ \textit{appears} (which is sufficient for the inequality) to become more strongly convex in general as $\alpha$ increases greater than $1$. 
We also note by CONTINUITY of $f$ above that the maximums are continuous in $\alpha > 1$.
Thus, under the \textit{strictly} Bayes blunting twist $\posclean \rightarrow \posnoise$, for every $\posclean$, 
there may exist an $\alpha>1$ which induces a larger (positive) magnitude in $f_{\alpha, \posclean}(\posnoise)$ than for the fixed $\alpha_{0}>1$ we found previously (for~\eqref{eq:bbdesired1}). 
Thus, there exists some optimal mapping $\alpha^{\star}: \mathcal{X} \rightarrow \mathbb{R}_{>1}$, such that
\begin{align} \label{eq:bbdesired2}
\celoss(\posnoise,\posclean;\alpha_{0}) \geq \celoss(\posnoise,\posclean;\alpha^{\star}).
\end{align}
Note that in the degenerate case, $\alpha^{\star} = \alpha_{0}$ for every $x \in \mathcal{X}$.
Therefore, combining~\eqref{eq:bbdesired1} and~\eqref{eq:bbdesired2}, we obtain 
\begin{align}
\celoss(\posnoise,\posclean;1) > \celoss(\posnoise,\posclean;\alpha_{0}) \geq \celoss(\posnoise,\posclean;\alpha^{\star}),
\end{align}
which is the desired result.

\subsection{Proof of Theorem \ref{thmALPHAGLOB}}\label{app-proof-thmALPHAGLOB}

As explained in the main body, we prove a result more general than Theorem \ref{thmALPHAGLOB}. 
However, briefly note that~\eqref{eq:thmalphaUB}, like the statement provided in Theorem~\ref{thmALPHABLUNTING} in~\eqref{eq:bayesbluntingdesiredstatement}, is proved for $\celoss$, but the statement provided in the main body as $\klloss$ is readily obtained from subtracting $H(\posclean)$ from both sides of the inequality.

First, we need a simple technical Lemma.
\begin{lemma}\label{lemBSUR}
  For any $B>0$, $\forall |z|\leq B, \forall \alpha \in \mathbb{R}$,
  \begin{eqnarray}
  \log(1+\exp(\alpha z)) & \leq & \log(1+\exp(\alpha B)) - \frac{B-z}{2}\cdot \alpha.
\end{eqnarray}
  \end{lemma}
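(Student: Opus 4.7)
The plan is to exploit the convexity of $z \mapsto \log(1+\exp(\alpha z))$ and apply the chord inequality on the interval $[-B,B]$. Concretely, for fixed $\alpha \in \mathbb{R}$, the function $g(z) \defeq \log(1+\exp(\alpha z))$ is convex in $z$ (its second derivative in $z$ is $\alpha^2 \sigma(\alpha z)(1-\sigma(\alpha z)) \geq 0$, where $\sigma$ is the sigmoid), so for $z \in [-B,B]$ we can write $z = \lambda B + (1-\lambda)(-B)$ with $\lambda = (B+z)/(2B) \in [0,1]$ and $1-\lambda = (B-z)/(2B)$, then use $g(z) \leq \lambda g(B) + (1-\lambda) g(-B)$.

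The next step is to simplify $g(-B) = \log(1+\exp(-\alpha B))$. Using the elementary identity $\log(1+e^{-x}) = \log(1+e^{x}) - x$ with $x = \alpha B$, this equals $\log(1+\exp(\alpha B)) - \alpha B$. Substituting into the chord bound gives
\begin{align*}
\log(1+\exp(\alpha z)) &\leq \tfrac{B+z}{2B}\log(1+\exp(\alpha B)) + \tfrac{B-z}{2B}\bigl[\log(1+\exp(\alpha B)) - \alpha B\bigr] \\
&= \log(1+\exp(\alpha B)) - \tfrac{B-z}{2}\cdot \alpha,
\end{align*}
which is exactly the claimed inequality.

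There is essentially no obstacle beyond remembering the sign-flip identity for the softplus; the argument is a one-line convexity-plus-identity computation and handles all real $\alpha$ (including $\alpha < 0$) uniformly because convexity of $g$ does not depend on the sign of $\alpha$. I would not bother breaking into cases on the sign of $\alpha$, as the identity $\log(1+e^{-x}) = \log(1+e^x) - x$ is valid for all real $x$.
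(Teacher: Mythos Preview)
Your proof is correct and follows essentially the same approach as the paper: convexity of the softplus plus the identity $\log(1+e^{-x}) = \log(1+e^{x}) - x$. The only cosmetic difference is that the paper first establishes the case $B=1$ and then rescales via $z \mapsto z/B$, $\alpha \mapsto \alpha B$, whereas you apply the chord inequality directly on $[-B,B]$; your version is slightly more direct but otherwise identical.
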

  \begin{proof}
    We first note that $\forall |z|\leq 1, \forall \alpha \in \mathbb{R}$,
\begin{eqnarray}
\label{eq:thm10techlem1} \log(1+\exp(\alpha z)) & \leq & \frac{1+z}{2}\cdot
                                \log(1+\exp(\alpha)) +  \frac{1-z}{2}\cdot
                                \log(1+\exp(-\alpha)) \\
\label{eq:thm10techlem2}  & & = \log(1+\exp(\alpha)) - \frac{1-z}{2}\cdot \alpha,
\end{eqnarray}
which indeed holds as the LHS of~\eqref{eq:thm10techlem1} is convex and the RHS is the equation of a line passing through the points $(-1, \log(1+\exp(-\alpha)))$ and $(1, \log(1+\exp(\alpha)))$.
In~\eqref{eq:thm10techlem2}, we use $\log{(1+\exp(-\alpha)) = \log{(\exp(-\alpha)\cdot (1+\exp(\alpha)))}}$ on the second term in the RHS.
Hence if instead $|z|\leq B$, then
\begin{eqnarray}
  \log(1+\exp(\alpha z)) & = &\log\left(1+\exp\left(\alpha B  \cdot \frac{z}{B}\right)\right)\\
  & \leq & \log(1+\exp(\alpha B)) - \frac{B-z}{2}\cdot \alpha,
\end{eqnarray}
as claimed.
\end{proof}
We now show another Lemma which bounds the $\log$ quantities appearing in the cross-entropy in \eqref{defCE}, recalling that $\logit(u) \defeq \log(u/(1-u))$.
\begin{lemma}\label{lemEXPB}
  Fix $B>0$. For any $\ve{x} \in \mathcal{X}$ such that
  \begin{eqnarray}
\frac{1}{1+\exp B} \leq \posnoise(\ve{x}) \leq \frac{\exp B}{1+ \exp B}, \label{eqBPOST}
  \end{eqnarray}
  the following properties hold for the Bayes tiltes estimate $\pseudob$ of $\alpha$-loss:
  \begin{eqnarray*}
    - \log t_{\ell^{\alpha}}(\posnoise(\ve{x})) & \leq & \log(1+\exp(\alpha B)) -\alpha \cdot \frac{B + \logit(\posnoise(\ve{x}))}{2},\\
    - \log (1-t_{\ell^{\alpha}}(\posnoise(\ve{x}))) & \leq & \log(1+\exp(\alpha B)) - \alpha \cdot \frac{B - \logit(\posnoise(\ve{x}))}{2}.
  \end{eqnarray*}
  \end{lemma}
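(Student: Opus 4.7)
The plan is to reduce both bounds to direct applications of Lemma~\ref{lemBSUR} after rewriting $-\log t_{\ell^{\alpha}}(\posnoise(\ve{x}))$ and $-\log(1-t_{\ell^{\alpha}}(\posnoise(\ve{x})))$ in a form of $\log(1+\exp(\alpha \cdot \text{something}))$. First, I would use the closed-form Bayes tilted estimate from Lemma~\ref{lemPointwise}\textbf{(b)}, namely $t_{\ell^{\alpha}}(\posnoise) = \posnoise^{\alpha}/(\posnoise^{\alpha}+(1-\posnoise)^{\alpha})$, together with its complement $1-t_{\ell^{\alpha}}(\posnoise) = (1-\posnoise)^{\alpha}/(\posnoise^{\alpha}+(1-\posnoise)^{\alpha})$, to derive
\[
-\log t_{\ell^{\alpha}}(\posnoise) = \log\bigl(1+\exp(-\alpha\,\logit(\posnoise))\bigr), \qquad -\log(1-t_{\ell^{\alpha}}(\posnoise)) = \log\bigl(1+\exp(\alpha\,\logit(\posnoise))\bigr),
\]
which follows by factoring $\posnoise^{\alpha}$ (respectively $(1-\posnoise)^{\alpha}$) out of the denominator and writing the resulting power $((1-\posnoise)/\posnoise)^{\alpha}$ as $\exp(-\alpha \logit(\posnoise))$.

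Next, I would verify that the range assumption \eqref{eqBPOST} is exactly equivalent to $|\logit(\posnoise(\ve{x}))| \leq B$. This is immediate: applying $\logit$ to $1/(1+\exp B) \leq \posnoise(\ve{x}) \leq \exp(B)/(1+\exp B)$ yields $-B \leq \logit(\posnoise(\ve{x})) \leq B$ by monotonicity of $\logit$ and the identity $\logit(\exp(B)/(1+\exp B)) = B$.

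Finally, I would plug into Lemma~\ref{lemBSUR} twice. For the first inequality, set $z = -\logit(\posnoise(\ve{x}))$ (satisfying $|z|\leq B$), which gives $\log(1+\exp(\alpha z)) \leq \log(1+\exp(\alpha B)) - \alpha (B-z)/2 = \log(1+\exp(\alpha B)) - \alpha(B+\logit(\posnoise(\ve{x})))/2$. For the second inequality, set $z = \logit(\posnoise(\ve{x}))$ instead, yielding the symmetric bound with $B - \logit(\posnoise(\ve{x}))$. Assembling the two pieces with the rewritings of the first step produces exactly the claimed inequalities.

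The proof is essentially mechanical; the only place where care is needed is in tracking the sign conventions so that the same Lemma~\ref{lemBSUR} supplies both bounds via the two choices $z = \pm \logit(\posnoise(\ve{x}))$. There is no substantive obstacle, which is consistent with Lemma~\ref{lemEXPB} being a preparatory tool whose real payoff will come when these per-$\ve{x}$ bounds are integrated against $\meas{M}(B)$ to prove Theorem~\ref{thmALPHAGLOB}.
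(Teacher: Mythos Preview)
Your proposal is correct and follows essentially the same route as the paper: rewrite $-\log t_{\ell^{\alpha}}(\posnoise)$ and $-\log(1-t_{\ell^{\alpha}}(\posnoise))$ as $\log(1+\exp(\pm\alpha\,\logit(\posnoise)))$ via the closed-form tilted estimate, observe that \eqref{eqBPOST} means $|\logit(\posnoise)|\le B$, and apply Lemma~\ref{lemBSUR}. The only cosmetic difference is that the paper handles the second bound by keeping the same sign convention and invoking Lemma~\ref{lemBSUR} with $-\alpha$, then simplifying via $\log(1+\exp(-\alpha B))=\log(1+\exp(\alpha B))-\alpha B$, whereas you swap the sign of $z$ instead and land on the result in one step.
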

\begin{proof}
We note, using $z \defeq -\log \left(\frac{1-\posnoise(\ve{x})}{\posnoise(\ve{x})}\right)$, which satisfies $|z| \leq B$ from \eqref{eqBPOST} and Lemma \ref{lemBSUR},
\begin{eqnarray}
  - \log t_{\ell^{\alpha}}(\posnoise(\ve{x})) & = & -\log \left(\frac{\posnoise(\ve{x})^\alpha}{\posnoise(\ve{x})^\alpha+(1-\posnoise(\ve{x}))^\alpha}\right)\nonumber\\
  & = & -\log \left(\frac{1}{1+\left(\frac{1-\posnoise(\ve{x})}{\posnoise(\ve{x})}\right)^\alpha}\right)\nonumber\\
                                               & = & \log \left(1+\left(\frac{1-\posnoise(\ve{x})}{\posnoise(\ve{x})}\right)^\alpha\right)\nonumber\\
                                               & = & \log\left(1 + \exp\left(\alpha \log\left(\frac{1-\posnoise(\ve{x})}{\posnoise(\ve{x})}\right)\right)\right)\label{eqPSEUDOB0}\\ 
                                               & \leq & \log(1+\exp(\alpha B)) -\alpha \cdot \frac{B - \log\left(\frac{1-\posnoise(\ve{x})}{\posnoise(\ve{x})}\right)}{2}\label{bound1}\\
  & & = \log(1+\exp(\alpha B)) -\alpha \cdot \frac{B + \logit(\posnoise(\ve{x}))}{2},\label{blog1}
\end{eqnarray}
and similarly,
\begin{eqnarray}
 - \log (1-t_{\ell^{\alpha}}(\posnoise(\ve{x}))) & = & \log \left(1 + \exp\left(-\alpha \log\left(\frac{1-\posnoise(\ve{x})}{\posnoise(\ve{x})}\right)\right)\right)\label{eqPSEUDOB1}\\
                                & \leq &  \log(1+\exp(-\alpha B)) +\alpha \cdot \frac{B + \logit(\posnoise(\ve{x}))}{2}\nonumber\\
  & & = \log(1+\exp(\alpha B)) - \alpha B +\alpha \cdot \frac{B + \logit(\posnoise(\ve{x}))}{2}\nonumber\\
  & & = \log(1+\exp(\alpha B)) - \alpha \cdot \frac{B - \logit(\posnoise(\ve{x}))}{2},\label{blog2}
\end{eqnarray}
as claimed.
\end{proof}
Denote $\meas{M}(B)$ the distribution restricted to the support for which we have a.s. \eqref{eqBPOST} and let $p(B)$ be the weight of this support in $\meas{M}$. Let $\meas{M}(\overline{B})$ denote the restriction of $\meas{M}$ to the complement of this support. We let $\meas{D}(B)$ is the product distribution on examples ($\mathcal{X} \times \mathcal{Y}$) over the support of $\meas{M}(B)$ induced by marginal $\meas{M}(B)$ and posterior $\posclean$ (see \cite[Section 4]{rwID}). We are now in a position to show our generalization to Theorem \ref{thmALPHAGLOB}.

\begin{theorem}\label{thmALPHAGLOBsup}
  For any fixed $B>0$, let
\begin{eqnarray}
\edge(B) & \defeq & \frac{\expect_{(\X,\Y) \sim \meas{D}(B)} \left[\Y\cdot \logit(\posnoise(\X))\right]}{B} \quad \in [-1,1].
  \end{eqnarray}
and suppose we fix the scalar $\alpha \defeq \alpha^*$ with
\begin{eqnarray}
 \alpha^* & \defeq & \frac{\logit\left(\frac{1+\edge(B)}{2}\right)}{B}.
\end{eqnarray}
then the following bound holds on the cross-entropy of the Bayes tilted estimate of the $\alpha$-loss:
\begin{eqnarray*}
  \celoss(\posnoise, \posclean; \alpha)& \leq &  p(B) \cdot H\left(\frac{1+\edge(B)}{2}\right) \nonumber\\
  & & + (1-p(B)) \cdot \left(\edge(\overline{B}) \cdot \log\left(\frac{1+|\edge(B)|}{1-|\edge(B)|}\right) + \frac{1-|\edge(B)|}{1+|\edge(B)|}\right),
\end{eqnarray*}
where $\edge(\overline{B}) \defeq \expect_{(\X,\Y) \sim \meas{D}(\overline{B})} \left[\max\left\{0, -\mathrm{sign}(\alpha^*) \cdot \Y \cdot \logit(\posnoise(\X))\right\}\right] / B$ and $\meas{D}(\overline{B})$ is is defined analogously to $\meas{D}(B)$ with respect to $\meas{M}(\overline{B})$.
\end{theorem}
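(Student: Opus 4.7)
The plan is to split the cross-entropy according to the decomposition $\meas{M} = p(B)\meas{M}(B) + (1-p(B))\meas{M}(\overline{B})$ and bound each piece with a different technique: on $\meas{M}(B)$ the logit is clipped, so Lemma~\ref{lemEXPB} applies directly and essentially everything collapses to a binary entropy after plugging in $\alpha^{*}$; on $\meas{M}(\overline{B})$ the logit is unbounded, so I need a cruder but separable bound on $\log(1+\exp(\cdot))$ that still matches the symmetric form of the statement.

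For the $\meas{M}(B)$ piece, applying Lemma~\ref{lemEXPB} to each of the two $-\log$ terms in~\eqref{defCE} and using the identity $\expect_{\X\sim\meas{M}(B)}[(2\posclean(\X)-1)\cdot\logit(\posnoise(\X))] = \expect_{(\X,\Y)\sim \meas{D}(B)}[\Y\cdot\logit(\posnoise(\X))] = B\cdot\edge(B)$, the integrand collapses to
$$\log(1+\exp(\alpha B)) - \tfrac{\alpha B}{2}\cdot(1+\edge(B)).$$
Substituting $\alpha=\alpha^{*}=\logit(q)/B$ with $q=(1+\edge(B))/2$ (so $\alpha^{*}B=\log(q/(1-q))$), the first term becomes $-\log(1-q)$ and the second becomes $q\log(q/(1-q))$; their difference simplifies to the binary entropy $H_{\text{b}}(q)$, yielding the first summand $p(B)\cdot H_{\text{b}}((1+\edge(B))/2)$.

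For the $\meas{M}(\overline{B})$ piece, \eqref{eqPSEUDOB0} and~\eqref{eqPSEUDOB1} let me rewrite the integrand on this support as $\expect_{(\X,\Y)\sim \meas{D}(\overline{B})}[\log(1+\exp(-\Y\alpha^{*}\logit(\posnoise(\X))))]$. I then apply the elementary inequality $\log(1+\exp(z))\leq\max\{0,z\} + \exp(-|z|)$, which follows from $\log(1+u)\leq u$ after peeling off $\max\{0,z\}$. Setting $z=-\Y\alpha^{*}\logit(\posnoise(\X))$, the first piece equals $|\alpha^{*}|\cdot\max\{0,-\mathrm{sign}(\alpha^{*})\Y\logit(\posnoise(\X))\}$, whose $\meas{D}(\overline{B})$-expectation is $|\alpha^{*}|B\cdot\edge(\overline{B})$ by the definition of $\edge(\overline{B})$; the tail piece is bounded above by $\exp(-|\alpha^{*}|B)$ because $|\logit(\posnoise(\X))|\geq B$ almost surely on the complement support. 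Finally, the identities $|\alpha^{*}|B=|\logit(q)|=\log((1+|\edge(B)|)/(1-|\edge(B)|))$ and $\exp(-|\alpha^{*}|B)=(1-|\edge(B)|)/(1+|\edge(B)|)$, valid because $q-1/2$ and $\edge(B)$ share a sign, convert this bound into the second summand of the statement.

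The main obstacle is the complement piece. The crude bound $\log(1+\exp(z))\leq\max\{0,z\}+\exp(-|z|)$ only yields a usefully small tail because $|z|\geq|\alpha^{*}|B$ almost surely on $\meas{M}(\overline{B})$, which is exactly the ``clipping at $B$'' baked into the definition of the support; otherwise one would be stuck with an irreducible $\log 2$ term that does not fit the symmetric expression in the statement. A secondary bookkeeping point is sign-tracking of $\alpha^{*}$ (hence the $\mathrm{sign}(\alpha^{*})$ in the definition of $\edge(\overline{B})$): when $\edge(B)<0$ we have $\alpha^{*}<0$ and the ``wrong direction'' of $\Y\logit(\posnoise(\X))$ reverses, but working with $|\alpha^{*}|$ and $|\edge(B)|$ throughout keeps the arithmetic symmetric and the two pieces combine cleanly.
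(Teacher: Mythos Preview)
Your proposal is correct and follows essentially the same approach as the paper's proof: the same split into $p(B)K(\alpha)+(1-p(B))J(B)$, the same use of Lemma~\ref{lemEXPB} on the clipped support reducing $K(\alpha^{*})$ to $H_{\mathrm{b}}(q)$, and the same softplus bound $\log(1+\exp(z))\leq\max\{0,z\}+\exp(-|z|)$ on the complement (the paper phrases this as $|z|\geq C\Rightarrow\log(1+\exp(z))\leq\max\{0,z\}+\exp(-C)$, then sets $C=|\alpha^{*}|B$, which is your two-step bound collapsed into one). The sign-tracking via $|\alpha^{*}|$ and $\mathrm{sign}(\alpha^{*})$ matches the paper exactly.
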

\noindent \textbf{Remark:} we notice this is indeed a generalization of Theorem \ref{thmALPHAGLOB}, which corresponds to case $p(B) = 1$. We also note $|\edge(\overline{B})| \geq 1$.
\begin{proof}
We remark that the cross-entropy \eqref{defCE} can be split as: 
\begin{eqnarray}
\celoss(\posnoise, \posclean; \alpha) & \defeq & \expect_{\X \sim \meas{M}} \left[
                                                   \begin{array}{c}
                                                     \posclean(\X)\cdot - \log t_{\ell^{\alpha}}(\posnoise(\X)) \\
                                                     + (1-\posclean(\X))\cdot - \log (1-t_{\ell^{\alpha}}(\posnoise(\X)))
                                                   \end{array}\right] \nonumber\\
                                      & = & p(B) \cdot K(\alpha) + (1-p(B)) \cdot L(B),\label{ceSPLIT}
\end{eqnarray}
with
\begin{eqnarray}
  K(\alpha) & \defeq & \expect_{\X \sim \meas{M}(B)} \left[\begin{array}{c}
                                                     \posclean(\X)\cdot - \log t_{\ell^{\alpha}}(\posnoise(\X)) \\
                                                     + (1-\posclean(\X))\cdot - \log (1-t_{\ell^{\alpha}}(\posnoise(\X)))
                                                     \end{array}\right],\\
  J(B) & \defeq & \expect_{\X \sim \meas{M}(\overline{B})} \left[\begin{array}{c}
                                                     \posclean(\X)\cdot - \log t_{\ell^{\alpha}}(\posnoise(\X)) \\
                                                     + (1-\posclean(\X))\cdot - \log (1-t_{\ell^{\alpha}}(\posnoise(\X)))
                                                     \end{array}\right].
\end{eqnarray}
We now focus on a bound on $K(\alpha)$, which we achieve via Lemma \ref{lemEXPB}:
\begin{eqnarray}
 K(\alpha) & \leq & \expect_{\X \sim \meas{M}(B)} \left[\begin{array}{c}
                                                     \posclean(\X)\cdot \left(\log(1+\exp(\alpha B)) -\alpha \cdot \frac{B + \logit(\posnoise(\X))}{2}\right) \label{eq:upperboundCEUB}\\
                                                     + (1-\posclean(\X))\cdot \left(\log(1+\exp(\alpha B)) - \alpha \cdot \frac{B - \logit(\posnoise(\X))}{2}\right)
                                                        \end{array}\right]\nonumber\\
  & = & \log(1+\exp(\alpha B)) - \alpha \cdot \frac{B + \expect_{\X \sim \meas{M}(B)} \left[\posclean(\X)\logit(\posnoise(\X)) + (1-\posclean(\X))\cdot - \logit(\posnoise(\X))\right]}{2}\nonumber\\
           & = & \log(1+\exp(\alpha B)) - \alpha \cdot \frac{B + \expect_{(\X,\Y) \sim \meas{D}(B)} \left[\Y\cdot \logit(\posnoise(\X))\right]}{2}\label{boundWD}\\
  & \defeq & \underbrace{\log(1+\exp(\alpha B)) - \alpha \cdot \frac{B + B\cdot \edge(B)}{2}}_{\defeq L(\alpha)},\nonumber
\end{eqnarray}
where we recall
\begin{eqnarray}
\edge(B) & \defeq & \frac{\expect_{\X \sim \meas{D}(B)} \left[\Y\cdot \logit(\posnoise(\X))\right]}{B} \quad \in [-1,1].
  \end{eqnarray}
  
  \begin{figure}[h]
    \centering
    \centerline{\includegraphics[width=.75\linewidth]{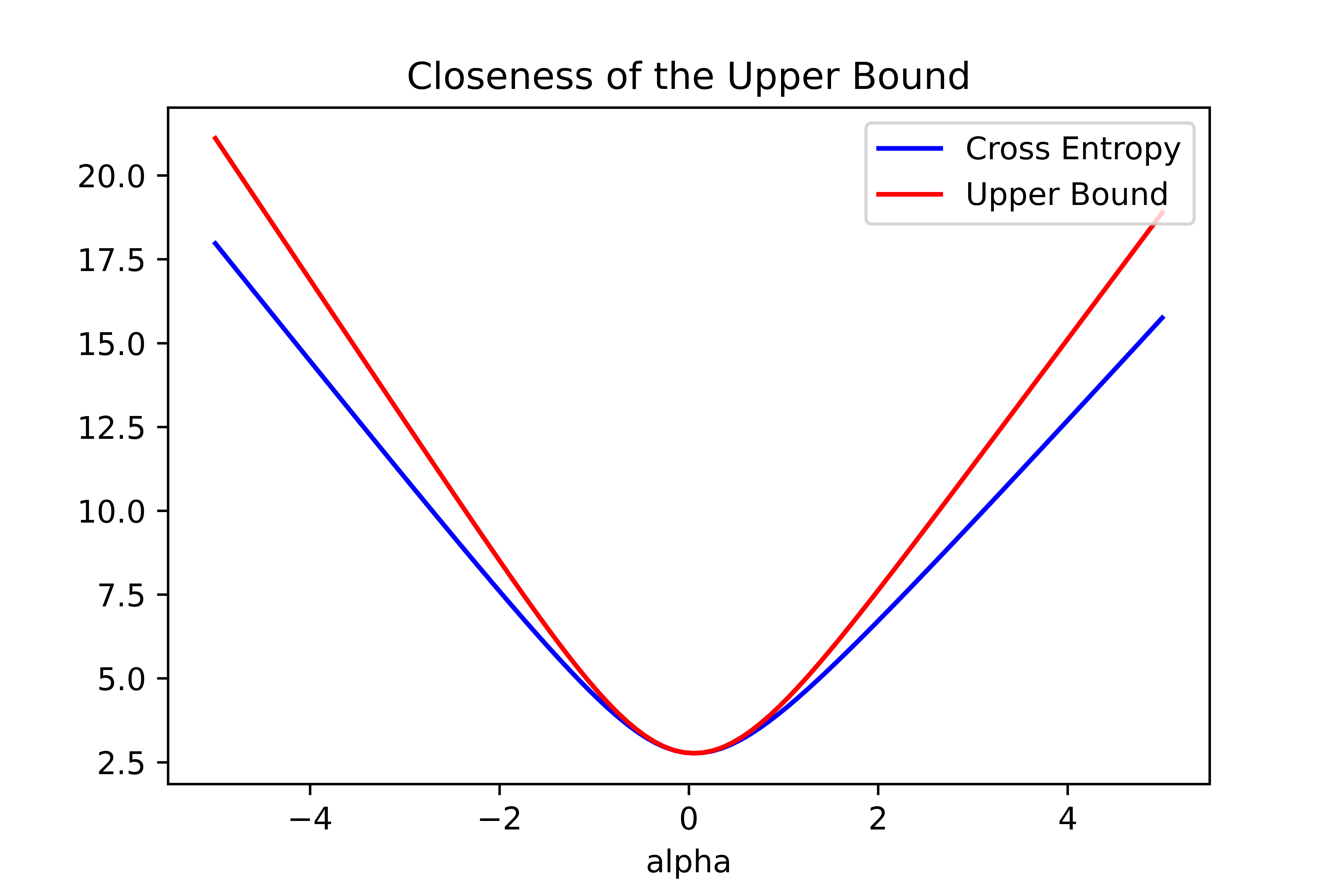}}
    \caption{A plot illustrating the closeness of~\eqref{eq:upperboundCEUB} where $K(\alpha)$ is given in blue and $L(\alpha)$ is given in red for a toy distribution: $x \sim \text{Uniform}([-B,B])$ (recall $B>0$ is the clipping threshold and we set $B = 2$ here) where $\posclean(x) = (1+\exp{(-x/a)})^{-1}$ and $\posnoise(x) = (1+\exp{(-x/b)})^{-1}$ such that $a = 10$ and $b = 0.6$.}
     \label{fig:CEUB}
\end{figure}
  
  Notice the change in distribution in \eqref{boundWD}, where $\meas{D}(B)$ is the product distribution on examples ($\mathcal{X} \times \mathcal{Y}$) over the support of $\meas{M}(B)$ induced by marginal $\meas{M}(B)$ and posterior $\posclean$ (see \cite[Section 4]{rwID}). We have
\begin{eqnarray}
L'(\alpha) & = & B\cdot\left(\frac{\exp(B\alpha)}{1+\exp(B\alpha)} - \frac{1+\edge(B)}{2}\right),
\end{eqnarray}
which zeroes
for
\begin{eqnarray} \label{eq:thmalphastar}
  \alpha^* & = & \frac{1}{B} \cdot \log\left(\frac{1+\edge(B)}{1-\edge(B)}\right) = \frac{\logit(q(B))}{B}.
\end{eqnarray}
Further, we have that 
\begin{align} \label{eq:secondderivativeLalpha}
L''(\alpha) = \dfrac{d}{d\alpha} L'(\alpha) = \frac{B^2 \exp{(\alpha B)}}{(\exp{(\alpha B)} + 1)^{2}},
\end{align}
and plugging in~\eqref{eq:thmalphastar} yields
\begin{align} \label{eq:secondderivativeLalphastar}
L''(\alpha^{*}) = B^{2} \frac{1 - \edge^{2}}{4}.
\end{align}
Note that for fixed $B>0$, as $|\alpha|$ increases in~\eqref{eq:secondderivativeLalpha}, $L''(\alpha)$ decreases. Thus, when the magnitude of $\alpha^{*}$ is large (due to the distribution and twist), this implies that there is more ``flatness'' near the choice of $\alpha^{*}$. 
Hence, in these regimes, a choice of $\alpha_{0}$ ``close-enough'' to $\alpha^{*}$ should have similar performance in practice.
Continuing with the main line, plugging in~\eqref{eq:thmalphastar} into~\eqref{boundWD} yields
\begin{eqnarray}
  K(\alpha^*) & \leq & \log(1+\exp(B\alpha^*))  - B\cdot \frac{1+\edge(B)}{2}\cdot \alpha^*\\
              & & =  -\log\left(\frac{1-\edge(B)}{2}\right)  - \frac{1+\edge(B)}{2} \cdot \log\left(\frac{1+\edge(B)}{1-\edge(B)}\right)\\
              & = &  - \frac{1+\edge(B)}{2} \log\left(\frac{1+\edge(B)}{2} \right) - \frac{1-\edge(B)}{2} \log\left(\frac{1-\edge(B)}{2} \right) \\
  & = & H\left(\frac{1+\edge(B)}{2}\right),\label{modALPHA}
\end{eqnarray}
which is the statement of Theorem \ref{thmALPHAGLOB}. We now focus on $J(B)$. Since $\log(1+\exp(-z))\leq \exp(-z), \forall z$ via an order-1 Taylor expansion, it follows that if $z\geq C$ for some $C>0$, then $\log(1+\exp(-z)) \leq \exp(-C)$. Equivalently, we get
\begin{eqnarray}
z \geq C & \Rightarrow & \log(1+\exp(z)) \leq z + \exp(-C).
\end{eqnarray}
By symmetry, we have
\begin{eqnarray}
  z \leq -C & \Rightarrow & \log(1+\exp(z)) \leq \exp(-C),
\end{eqnarray}
so we get
\begin{eqnarray}
|z| \geq C & \Rightarrow & \log(1+\exp(z)) \leq \max\{0,z\} + \exp(-C).
\end{eqnarray}
By definition, we have for any $\ve{x}$ in the support of $\meas{M}(\overline{B})$,
\begin{eqnarray}
\left|\log\left(\frac{1-\posnoise(\ve{x})}{\posnoise(\ve{x})}\right)\right| & \geq & B,
  \end{eqnarray}
  so have, considering $C \defeq B \cdot |\alpha^*|$, from \eqref{eqPSEUDOB0} and \eqref{eqPSEUDOB1},
\begin{eqnarray}
J(B) & \defeq & \expect_{\X \sim \meas{M}(\overline{B})} \left[\begin{array}{c}
                                                     \posclean(\X)\cdot - \log t_{\ell^{\alpha}}(\posnoise(\X)) \\
                                                     + (1-\posclean(\X))\cdot - \log (1-t_{\ell^{\alpha}}(\posnoise(\X)))
                                                               \end{array}\right]\\
     & = & \expect_{(\X,\Y) \sim \meas{D}(\overline{B})} \left[\log \left(1 + \exp\left(\Y \alpha^* \log\left(\frac{1-\posnoise(\X)}{\posnoise(\X)}\right)\right)\right)\right]\nonumber\\
     & \leq & \expect_{(\X,\Y) \sim \meas{D}(\overline{B})} \left[\max\left\{0, \Y \alpha^* \log\left(\frac{1-\posnoise(\X)}{\posnoise(\X)}\right)\right\}\right] + \exp\left(-B \cdot |\alpha^*|\right)\nonumber\\
  & & = |\alpha^*| \cdot \expect_{(\X,\Y) \sim \meas{D}(\overline{B})} \left[\max\left\{0, -\mathrm{sign}(\alpha^*) \cdot \Y \cdot \logit(\posnoise(\X))\right\}\right] + \frac{1-|\edge(B)|}{1+|\edge(B)|}\nonumber\\
  & = & \edge(\overline{B}) \log\left(\frac{1+|\eta(B)|}{1-|\edge(B)|}\right) + \frac{1-|\edge(B)|}{1+|\edge(B)|}, 
  \end{eqnarray}
which completes the proof of Theorem \ref{thmALPHAGLOBsup} after replacing the expression of $\alpha^*$.
\end{proof}
\noindent \textbf{Remarks}: Theorem \ref{thmALPHAGLOBsup} calls for several remarks:\\

\noindent \textbf{Gains with respect to the ``proper" choice $\alpha = 1$}: the case we develop is simplistic but allows a graphical comparison of the gains that Theorem allow to get compared to the choice $\alpha=1$, which we recall corresponds to the (proper) logistic loss. Suppose $p(B) = 1$ so the cross-entropy $\celoss(\posnoise, \posclean; \alpha)$ in \eqref{ceSPLIT} reduces to $K(.)$, $B=1$ and all logits take $\pm 1$ value a.e.,
\begin{eqnarray}
z(\ve{x}) \defeq \log\left(\frac{\posnoise(\ve{x})}{1-\posnoise(\ve{x})}\right) & = & \pm 1,
\end{eqnarray}
which can be achieved by clamping, and $p \defeq \pr_{(\X,\Y) \sim \meas{M}(1)}[\Y z(\X) = 1]$, which gives $\edge(1) = 2p - 1$, 
\begin{eqnarray*}
\alpha^* & = & \log\left(\frac{p}{1-p}\right),\label{defalphastarsup}
\end{eqnarray*}
and
\begin{eqnarray}
  \celoss(\posnoise, \posclean; \alpha^*) & \leq & H(p) \label{cerstar}
\end{eqnarray}
from Theorem \ref{thmALPHAGLOBsup}. The properness choice $\alpha^* = 1$ however gives
\begin{eqnarray}
  \celoss(\posnoise, \posclean; 1) = K(1) & = & \expect_{(\X,\Y) \sim \meas{M}(1)} \left[\log \left(1 + \exp\left(-\Y z(\X)\right)\right)\right]\\
       & = & p\log(1+\exp(-1)) + (1-p) \log(1+\exp(1)).\\
  & = & \log(1+e) - p \label{cerzero}.
\end{eqnarray}
Figure \ref{f-celoss-comp} plots $\celoss(\posnoise, \posclean; \alpha^*)$ \eqref{cerstar} vs $\celoss(\posnoise, \posclean; 1)$ \eqref{cerzero}. We remark that $\celoss(\posnoise, \posclean; \alpha^*) \leq \celoss(\posnoise, \posclean; 1)$, and the difference is especially large as $p \rightarrow \{0,1\}$, for which $\celoss(\posnoise, \posclean; \alpha^*)\rightarrow 0$ while we always have $\celoss(\posnoise, \posclean; 1) > 0.3, \forall p$.

\begin{figure}[t]
\begin{center}
\begin{tabular}{c}
\includegraphics[trim=130bp 650bp 1380bp
60bp,clip,width=0.30\linewidth]{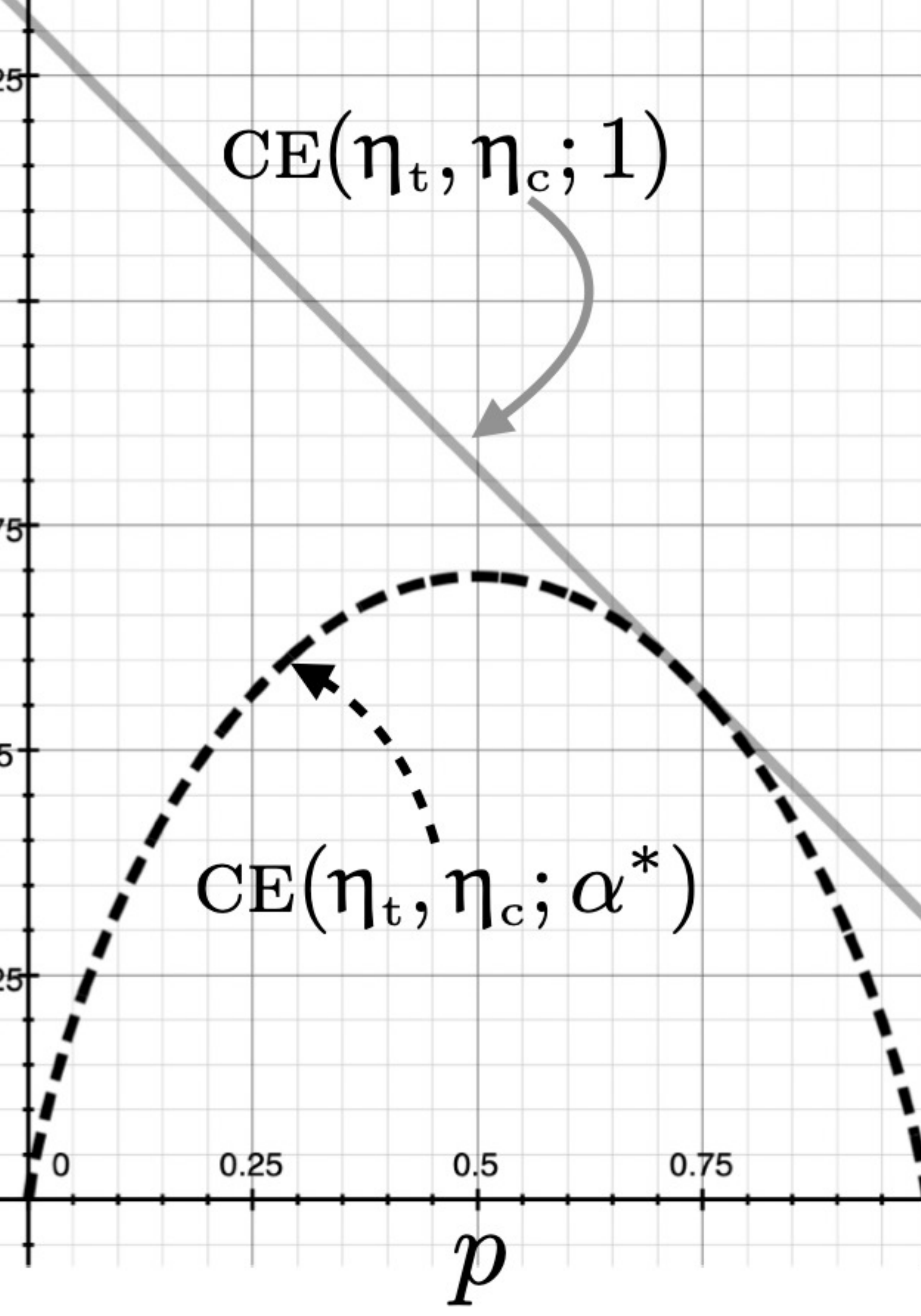}
\end{tabular}
\end{center}
\caption{Comparison between the cross-entropy of the logistic loss ($\alpha=1$) and that of the $\alpha$-loss for the scalar correction in \eqref{defalphastarsup} in Theorem \ref{thmALPHAGLOBsup}.}
  \label{f-celoss-comp}
\end{figure}

\noindent \textbf{Incidence of computing $\alpha^*$ on an \textit{estimate} of $\edge(B)$}: Theorem \ref{thmALPHAGLOBsup} can be refined if, instead of the true value $\edge(B)$ we have access to an estimate $\hat{\edge}(B)$. In this case, we can refine the proof of the Theorem from the series of eqs in \eqref{modALPHA}. We remark that
\begin{eqnarray}
H'\left(\frac{1+z}{2}\right) & = & \frac{1}{2} \cdot \log\left(\frac{1-z}{1+z}\right),
\end{eqnarray}
so since $H$ is concave, we have for any $\edge(B), \hat{\edge}(B)$,
\begin{eqnarray}
  H\left(\frac{1+\edge(B)}{2}\right) & \leq & H\left(\frac{1+\hat{\edge}(B)}{2}\right) + \left(\frac{1+\edge(B)}{2}-\frac{1+\hat{\edge}(B)}{2}\right) \cdot \frac{1}{2} \cdot \log\left(\frac{1-\hat{\edge}(B)}{1+\hat{\edge}(B)}\right)\nonumber\\
                                    & & = H\left(\frac{1+\hat{\edge}(B)}{2}\right) + \frac{\edge(B)-\hat{\edge}(B)}{4} \cdot \log\left(\frac{1-\hat{\edge}(B)}{1+\hat{\edge}(B)}\right)\nonumber\\
  & \leq & H\left(\frac{1+\hat{\edge}(B)}{2}\right) + \frac{|\edge(B)-\hat{\edge}(B)|}{4} \cdot \log\left(\frac{1+|\hat{\edge}(B)|}{1-|\hat{\edge}(B)|}\right)\nonumber\\
  & & = H\left(\frac{1+\hat{\edge}(B)}{2}\right) + \frac{|\edge(B)-\hat{\edge}(B)|}{4} \cdot \log\left(1+\frac{2|\hat{\edge}(B)|}{1-|\hat{\edge}(B)|}\right)\nonumber\\
  & \leq & H\left(\frac{1+\hat{\edge}(B)}{2}\right) + \frac{|\edge(B)-\hat{\edge}(B)||\hat{\edge}(B)|}{2(1-|\hat{\edge}(B)|)},
\end{eqnarray}
where we have used $\log(1+z)\leq z$ for the last inequality.

\noindent \textbf{Polarity of $\alpha^*$}: as presented in the main body, the state of the art defines the $\alpha$-loss only for $\alpha\geq 0$. The proof of Theorem \ref{thmALPHAGLOBsup}, and more specifically its proof, hints at why alleviating this constraint is important and corresponds to especially difficult cases. We have the general rule $\alpha^* \leq 0$ iff $\edge(B) \leq 0$, which indicates that the twisted posterior tends to be small when the clean posterior tends to be large. Since the Bayes tilted estimate is symmetric if we switch the couple $(\alpha, \posnoise)$ for $(-\alpha, 1-\posnoise)$, $\alpha^* \leq 0$ provokes a change of polarity in the Bayes tilted estimate compared to the twisted posterior. It thus corrects the twisted posterior. We emphasize that such a situation happens for especially damaging twists (in particular, \textit{not} Bayes blunting).

\subsection{Pseudo-Inverse Link} \label{app-PIL}

\begin{figure}[t]
\bignegspace
  \begin{center}
\includegraphics[trim=250bp 750bp 720bp
210bp,clip, width=0.41\textwidth]{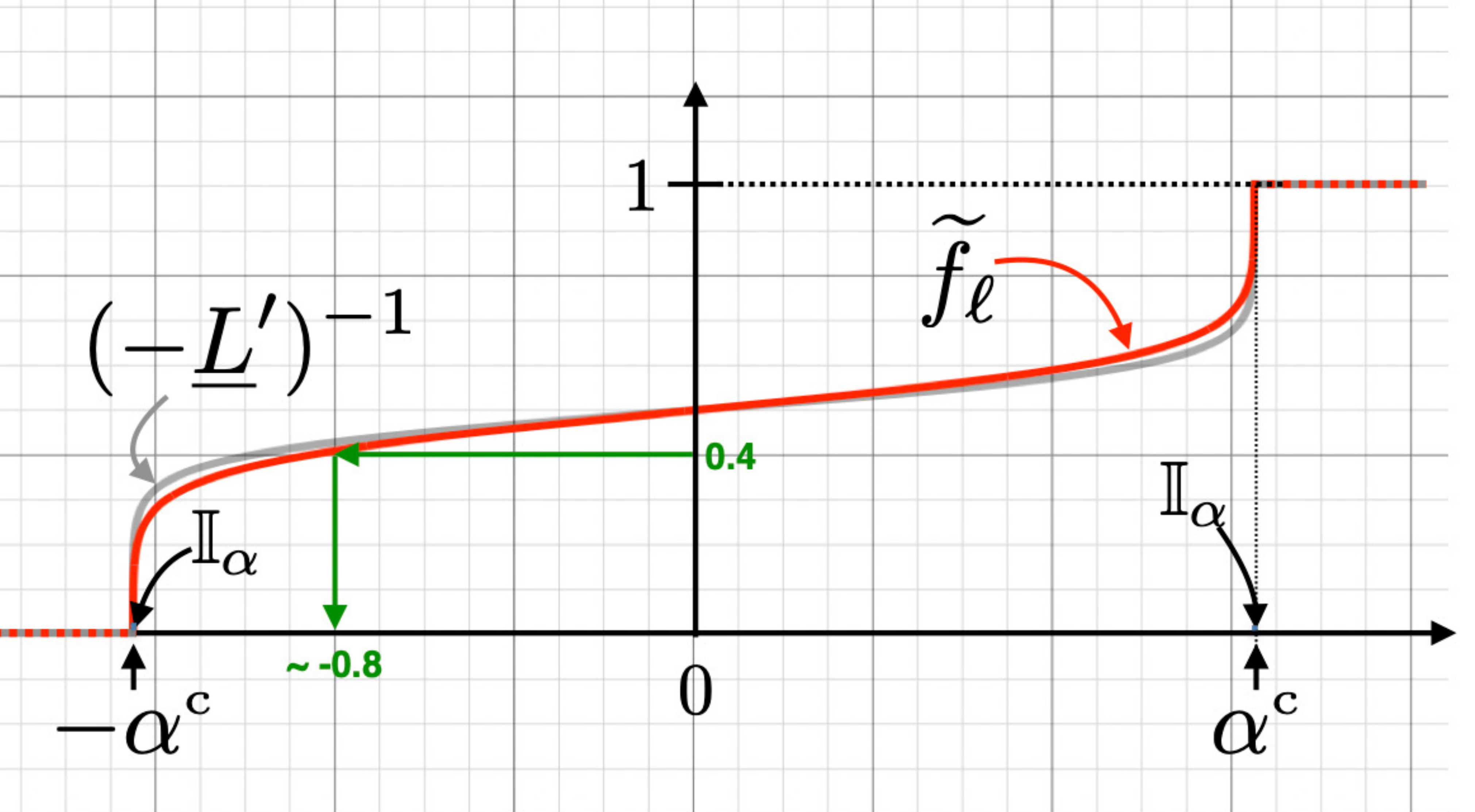}
\end{center}
\caption{\cil~link $\pseudomirrorinv$ vs inverse link $-\poibayesrisk'$ for ($\alpha = 5$)-loss. Notice the quality of the approximation.}
  \label{f-alpha} 
     \vspace{-0.3cm}
\end{figure}

Let $\alpha \in [-\infty, \infty]$, and define the conjugate $\holdera$ such that $1/\holdera + 1/\alpha = 1$, using by extension $\holdera(\infty) = 1, \holdera(1) = \infty$.
If we were to exactly implement a boosting algorithm for the $\alpha$-loss,
we would have to find the \textit{exact} inverse of
\eqref{defweights}, which would require inverting $ -\poibayesrisk'(v)
\defeq \holdera \cdot \pseudob(v)^\holdera - \holdera \cdot
\pseudob(1-v)^\holdera$. Owing to the difficulty to
carry out this step, we {choose a sidestep} that makes
inversion straightforward and can fall in the conditions to apply
Theorem \ref{genBOOST}, thus making \pmboost~a boosting algorithm for 
the $\alpha$-loss of interest. The trick does not just hold for the
$\alpha$-loss, so we describe it for a general loss $\ell$ assuming
for simplicity that $\partialloss{1}(1)=\partialloss{-1}(0) = 0$
and $\pseudob, \partialloss{1}, \partialloss{-1}$ are {invertible} with $\partialloss{1}, \partialloss{-1}$ non-negative, conditions that would hold for many
popular losses (log, square, Matusita, etc.), and the $\alpha$-loss. We then approximate the link 
$-\poibayesrisk'$ by using just one of $\partialloss{-1}$ or
$\partialloss{1}$ depending on their argument, while ensuring functions match
in $0, 1/2, 1$. We name $\pseudomirrorinv$ the \textit{clipped inverse link}, \cil. Letting $a_\loss^- \defeq
\partialloss{1}(0)/(\partialloss{1}(0)-\partialloss{1}(1/2))$ and $a_\loss^+
\defeq
\partialloss{-1}(1)/(\partialloss{-1}(1)-\partialloss{-1}(1/2))$, our link approximation makes use of the following function: $f_\loss(u) \defeq f_\loss^-(u)$ if $u\leq 1/2$ and $f_\loss(u) \defeq f_\loss^+(u)$ otherwise, with the shorthands $f_\loss^-(u) \defeq a_\loss^- \cdot \left(\partialloss{1}(1/2)-\partialloss{1}(\pseudob(u))\right)$, $f_\loss^+ (u) \defeq a_\loss^+ \cdot \left(\partialloss{-1}(\pseudob(u))-\partialloss{-1}(1/2)\right)$. The following Lemma shows, in addition to properties of $f_\loss$, the expression obtained for the clipped inverse link for a general \cpe~loss.
\begin{lemma}\label{lemINVERTLP}
  $f_\loss(u) = -\poibayesrisk'(u), \forall u \in \{0, 1/2, 1\}$; furthermore, the clipped inverse link $\pseudomirrorinv \defeq f^{-1}_\loss$ is: (i) $\pseudomirrorinv(z) = 0$ if $z< -\partialloss{1}(0)$; (ii) $\pseudomirrorinv(z) = \pseudob^{-1} \circ \partialloss{1}^{-1}\left(\frac{\partialloss{1}(1/2)-\partialloss{1}(0)}{\partialloss{1}(0)}\cdot z + \partialloss{1}(1/2)\right)$ if $-\partialloss{1}(0) \leq z< 0$; (iii) $\pseudomirrorinv(z) = \pseudob^{-1} \circ \partialloss{-1}^{-1}\left(\frac{\partialloss{-1}(1)-\partialloss{-1}(1/2)}{\partialloss{-1}(1)}\cdot z + \partialloss{-1}(1/2)\right)$ if $0 \leq z< \partialloss{-1}(1)$; (iv) $\pseudomirrorinv(z) = 1$ if $z\geq \partialloss{-1}(1)$. Furthermore, $\pseudomirrorinv$ is continuous and if \textbf{(S)} and \textbf{(D)} hold, then $\pseudomirrorinv$ is derivable on $\mathbb{R}$ (with the only possible exceptions of $\{-\partialloss{1}(0), \partialloss{-1}(1)\}$).
\end{lemma}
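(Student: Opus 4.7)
The plan is to first verify the three matching conditions $f_\loss(u) = -\poibayesrisk'(u)$ at $u \in \{0, 1/2, 1\}$, then invert each of the two non-trivial pieces algebraically, and finally address continuity and the differentiability claim. The starting point is the identity derived in the proof of Lemma \ref{proptilt} (D), namely $\poibayesrisk'(u) = \partialloss{1}(\pseudob(u)) - \partialloss{-1}(\pseudob(u))$, combined with the extreme-value facts from Lemma \ref{proptilt} (E) that $\pseudob(0) = 0$ and $\pseudob(1) = 1$, and the symmetry fact (from \textbf{(S)} applied set-valuedly) that $\pseudob(1/2) = 1/2$.

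For the matching conditions, substituting the above values into $\poibayesrisk'$ together with the normalizations $\partialloss{1}(1) = \partialloss{-1}(0) = 0$ yields $-\poibayesrisk'(0) = -\partialloss{1}(0)$, $-\poibayesrisk'(1/2) = 0$, and $-\poibayesrisk'(1) = \partialloss{-1}(1)$. Plugging the same values into the definition of $f_\loss^-$ and $f_\loss^+$ and simplifying with the explicit constants $a_\loss^-$ and $a_\loss^+$ (which were chosen precisely to absorb the relevant differences $\partialloss{1}(0)-\partialloss{1}(1/2)$ and $\partialloss{-1}(1)-\partialloss{-1}(1/2)$) reproduces the same three values.

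For the inversion, on $u \in [0,1/2]$ I would solve $z = a_\loss^- \cdot (\partialloss{1}(1/2) - \partialloss{1}(\pseudob(u)))$ by isolating $\partialloss{1}(\pseudob(u)) = \partialloss{1}(1/2) - z/a_\loss^-$, applying $\partialloss{1}^{-1}$, and then $\pseudob^{-1}$; substituting the value of $a_\loss^-$ gives exactly case (ii). The analogous manipulation on $u \in [1/2,1]$ yields case (iii). The range of the left piece is $[-\partialloss{1}(0), 0]$ and of the right piece is $[0, \partialloss{-1}(1)]$; extending $\pseudomirrorinv$ by the constants $0$ and $1$ outside these ranges, as in (i) and (iv), is the unique extension compatible with a non-negative, non-decreasing map on all of $\mathbb{R}$ that agrees with the piecewise formulas on their natural domains. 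Continuity at the three transition points $z \in \{-\partialloss{1}(0), 0, \partialloss{-1}(1)\}$ then follows directly from the matching conditions established in the first step.

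For differentiability under \textbf{(S)} and \textbf{(D)}, on each open piece $\pseudomirrorinv$ is a composition of differentiable, strictly monotone functions ($\partialloss{\pm 1}^{-1}$ and $\pseudob^{-1}$, where monotonicity uses Lemma \ref{proptilt} (M)) applied to an affine function of $z$, so differentiability on the interiors is immediate. The one internal junction point is $z = 0$, where I would compute the one-sided derivatives explicitly via the chain rule and show they coincide: the left-side derivative involves $1/\partialloss{1}'(1/2)$ and the right-side involves $1/\partialloss{-1}'(1/2)$, and symmetry \textbf{(S)} gives $\partialloss{1}'(1/2) = -\partialloss{-1}'(1/2)$, which together with the matching factors from $a_\loss^\pm$ forces equality. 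The potential exceptions at $z \in \{-\partialloss{1}(0), \partialloss{-1}(1)\}$ correspond to the transitions into the constant clipping regions, where one-sided derivatives from the interior side need not vanish and hence need not match the zero derivative on the constant side.

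The main obstacle I expect is the algebra at $z = 0$: cleanly showing the one-sided derivatives agree requires unrolling the chain rule through three composed functions and cancelling the factors $a_\loss^\pm$ against $\partialloss{\pm 1}'(1/2)$ using \textbf{(S)}, rather than any conceptual difficulty. The inversion itself is routine once the range of each piece and the monotonicity of $\pseudob, \partialloss{1}, \partialloss{-1}$ are established.
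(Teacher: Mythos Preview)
Your proposal is correct and follows essentially the same route as the paper, which simply states that the proof ``is immediate once we remark that $\partialloss{1}(1)=\partialloss{-1}(0) = 0$ bring `properness for the extremes', i.e.\ $0 \in \pseudob(0), 1 \in \pseudob(1)$.'' You rely on exactly the same key ingredients---the derivative identity $\poibayesrisk'(u) = \partialloss{1}(\pseudob(u)) - \partialloss{-1}(\pseudob(u))$ from Lemma~\ref{proptilt}(D) and the extreme values from Lemma~\ref{proptilt}(E)---but you actually carry out the matching, inversion, continuity, and differentiability checks that the paper omits; in particular your chain-rule argument at $z=0$ using $\partialloss{1}'(1/2) = -\partialloss{-1}'(1/2)$ from \textbf{(S)} makes explicit a step the paper leaves entirely to the reader.
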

     \bignegspace
The proof is immediate once we remark that $\partialloss{1}(1)=\partialloss{-1}(0) = 0$ bring "properness for the extremes", \textit{i.e.} $0 \in \pseudob(0), 1 \in \pseudob(1)$. 
We now give the expression of the formulas of interest regarding Lemma \ref{lemINVERTLP} for the $\alpha$-loss.
\begin{lemma} \label{lemPILalphaloss}
  We have for the $\alpha$-loss,
  \begin{eqnarray}
    f_\loss (u) & = & \holdera \cdot \left\{
               \begin{array}{rcl}
                 \left(\frac{2\cdot u^\alpha}{u^\alpha+(1-u)^\alpha}\right)^{\frac{1}{\holdera}} - 1 & \mbox{ if } & u\leq 1/2,\\
                 1-\left(\frac{2\cdot (1-u)^\alpha}{u^\alpha+(1-u)^\alpha}\right)^{\frac{1}{\holdera}}  & \mbox{ if } & u\geq 1/2
                 \end{array}
                                                                                                                        \right.,\\
    \pseudoinvlink(z) & = & \left\{
                          \begin{array}{ccl}
                            0 & \mbox{ if } & z\leq -\holdera,\\
                  \frac{\left(\holdera+z\right)^{\frac{\holdera}{\alpha}}}{\left(\holdera+z\right)^{\frac{\holdera}{\alpha}} + \left(2\holdera^\holdera - \left(\holdera+z\right)^{\holdera}\right)^{\frac{1}{\alpha}}} & \mbox{ if } & -\holdera\leq z\leq 0,\\
                            \frac{\left(2\holdera^\holdera - \left(\holdera-z\right)^{\holdera}\right)^{\frac{1}{\alpha}}}{\left(\holdera-z\right)^{\frac{\holdera}{\alpha}} + \left(2\holdera^\holdera - \left(\holdera-z\right)^{\holdera}\right)^{\frac{1}{\alpha}}} & \mbox{ if } & 0 \leq z\leq \holdera,\\
                            1 & \mbox{ if } & z\geq \holdera.
                 \end{array}\right. .
\end{eqnarray}
  \end{lemma}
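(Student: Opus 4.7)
The plan is to specialize the general formulas from Lemma~\ref{lemINVERTLP} to the $\alpha$-loss and then carry out the piecewise inversion. First I would record the relevant boundary values of the partial losses. Using $\holdera = \alpha/(\alpha-1)$, Definition~\ref{defALPHALOSS} gives $\partialloss{1}^\alpha(u) = \holdera(1 - u^{1/\holdera})$ with $\partialloss{1}^\alpha(0) = \holdera$, $\partialloss{1}^\alpha(1) = 0$, $\partialloss{1}^\alpha(1/2) = \holdera(1 - 2^{-1/\holdera})$, and symmetrically $\partialloss{-1}^\alpha(u) = \holdera(1 - (1-u)^{1/\holdera})$. Plugging these into the formulas for $a_\loss^-, a_\loss^+$ stated just before Lemma~\ref{lemINVERTLP} yields $a_\loss^- = a_\loss^+ = 2^{1/\holdera}$, which already reflects the symmetry of $f_\loss$ across $u=1/2$ and matches the two lines of the claimed expression for $f_\loss(u)$.

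Next I would substitute the Bayes tilted estimate $\pseudob(u) = u^\alpha/(u^\alpha + (1-u)^\alpha)$ from Lemma~\ref{lemPointwise}(b) into the definition of $f_\loss^-$ for $u \leq 1/2$: a direct simplification gives $f_\loss^-(u) = 2^{1/\holdera}\cdot\holdera\cdot[\pseudob(u)^{1/\holdera} - 2^{-1/\holdera}] = \holdera\cdot[(2\pseudob(u))^{1/\holdera} - 1]$, which is precisely the first piece. The case $u \geq 1/2$ is handled identically after using $1 - \pseudob(u) = (1-u)^\alpha/(u^\alpha+(1-u)^\alpha)$, producing the second piece.

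For the inversion, the clamped regimes $\pseudoinvlink(z) = 0$ for $z \leq -\holdera$ and $\pseudoinvlink(z) = 1$ for $z \geq \holdera$ come directly from parts (i) and (iv) of Lemma~\ref{lemINVERTLP} together with the values $\partialloss{1}^\alpha(0) = \partialloss{-1}^\alpha(1) = \holdera$. On $-\holdera \leq z \leq 0$, setting $A \defeq (z+\holdera)^\holdera$ converts the equation $z = \holdera[(2\pseudob(u))^{1/\holdera} - 1]$ into $\pseudob(u) = A/(2\holdera^\holdera)$; solving $u^\alpha/(u^\alpha+(1-u)^\alpha) = A/(2\holdera^\holdera)$ for $u/(1-u)$ and normalizing gives $u = A^{1/\alpha}/(A^{1/\alpha} + (2\holdera^\holdera - A)^{1/\alpha})$, i.e. exactly the claimed formula after expanding $A^{1/\alpha} = (z+\holdera)^{\holdera/\alpha}$. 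The case $0 \leq z \leq \holdera$ is the mirror computation with $(z+\holdera)$ replaced by $(\holdera - z)$ and the roles of $u$ and $1-u$ swapped, yielding the stated expression.

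There is no substantive obstacle here; the argument is algebraic and the main point requiring care is bookkeeping, in particular verifying that the four pieces glue continuously at $z \in \{-\holdera, 0, \holdera\}$, which is immediate since both middle pieces return $u = 1/2$ at $z=0$ and agree with the clamped pieces at $z = \pm \holdera$. Differentiability of $\pseudoinvlink$ on $\mathbb{R}\setminus\{-\holdera,\holdera\}$ then follows from the last sentence of Lemma~\ref{lemINVERTLP}, since \textbf{(S)} and \textbf{(D)} hold for $\alpha$-loss by Lemma~\ref{lemPointwise}(a).
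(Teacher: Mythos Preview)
Your proposal is correct and follows exactly the intended route: the paper does not give an explicit proof of this lemma, treating it as a direct specialization of Lemma~\ref{lemINVERTLP} to the $\alpha$-loss, and your write-up carries out precisely that algebraic substitution (boundary values of the partial losses, the tilted estimate from Lemma~\ref{lemPointwise}(b), and the piecewise inversion). There is nothing to add.
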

  
Rewritten, we have that
\begin{align} \label{eq:tildefupdate}
\pseudoinvlink(z) = 
\begin{cases}
0 & z \leq -\frac{\alpha}{\alpha - 1}\\
\frac{\left(\frac{\alpha}{\alpha - 1} + z \right)^{\frac{1}{\alpha - 1}}}{\left(\frac{\alpha}{\alpha - 1} + z \right)^{\frac{1}{\alpha - 1}} + \left(2 \left(\frac{\alpha}{\alpha - 1} \right)^{\frac{\alpha}{\alpha - 1}} - \left(\frac{\alpha}{\alpha - 1} + z \right)^{\frac{\alpha}{\alpha - 1}} \right)^{\frac{1}{\alpha}}} & -\frac{\alpha}{\alpha - 1} \leq z \leq 0 \\
\frac{\left(2\left(\frac{\alpha}{\alpha - 1}\right)^{\frac{\alpha}{\alpha - 1}} - \left(\frac{\alpha}{\alpha - 1} - z \right)^{\frac{\alpha}{\alpha - 1}} \right)^{\frac{1}{\alpha}}}{\left(\frac{\alpha}{\alpha - 1} - z \right)^{\frac{1}{\alpha - 1}} + \left(2\left(\frac{\alpha}{\alpha - 1} \right)^{\frac{\alpha}{\alpha - 1}} - \left(\frac{\alpha}{\alpha - 1} - z \right)^{\frac{\alpha}{\alpha - 1}} \right)^{\frac{1}{\alpha}}} & 0 \leq z \leq \frac{\alpha}{\alpha - 1} \\
1 & z \geq \frac{\alpha}{\alpha - 1}
\end{cases}
\end{align}
  Figure \ref{fig:tildef} plots~\eqref{eq:tildefupdate} for several values of $\alpha$.
\begin{figure}[h] 
    \centerline{\includegraphics[scale=.25]{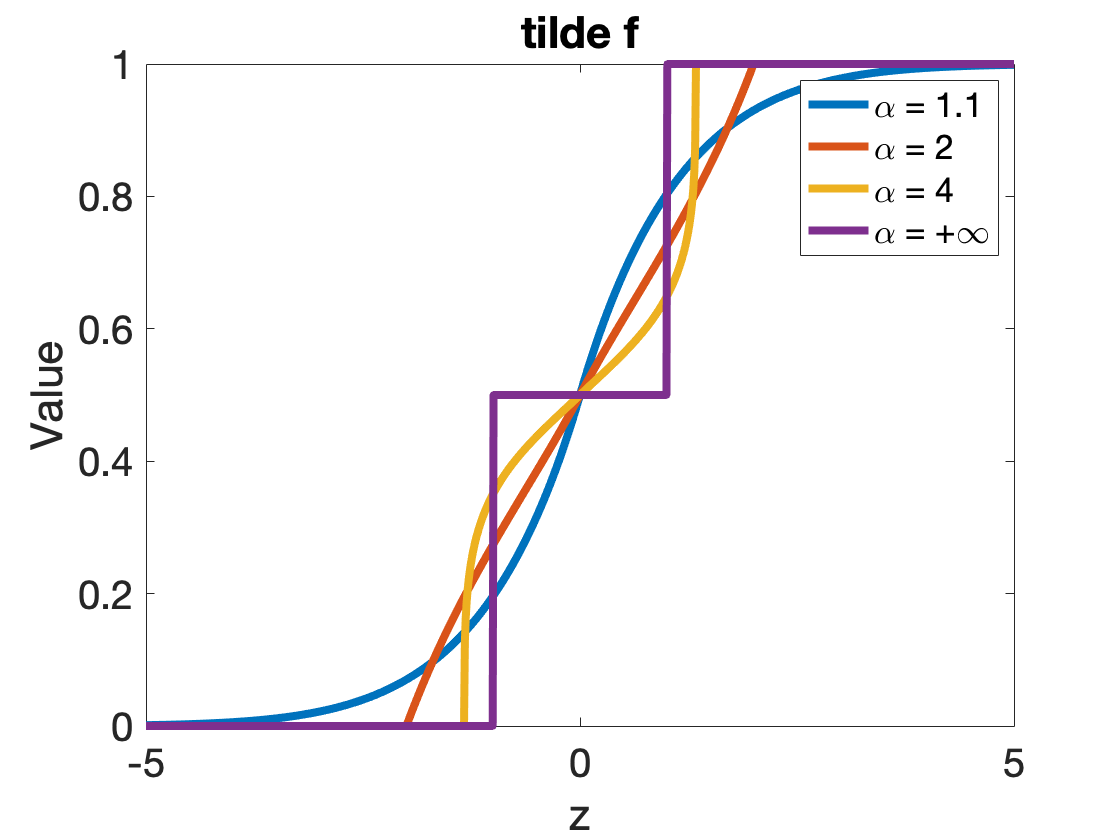}}
    \caption{A plot of $\tilde{f}(z)$ as a function of $\alpha$ as given in~\eqref{eq:tildefupdate}.}
    \label{fig:tildef}
\end{figure}
\begin{remark}\label{remDer}
It could be tempting to think that the clipped inverse link trivially comes from clipping the partial losses themselves such as replacing $\partialloss{1}(u)$ by $0$ if $u\geq 1/2$ and symmetrically for $\partialloss{-1}(u)$. This is not the case as it would lead to $\poibayesrisk$ piecewise constant and therefore $-\poibayesrisk' = 0$ when defined.
  \end{remark}\\
  We turn to a result that authorizes us to use Thm \ref{genBOOST} while virtually not needing \textbf{(O1)} and \textbf{(O2)} for $\alpha$-loss. Denote $\mathbb{I}_\alpha \defeq\pm \holdera\cdot [1-(1/\alpha^4), 1]$ (See Fig. \ref{f-alpha}).
\begin{lemma}\label{edgeALPHA}
Suppose $\alpha \geq 1.2$. For $\pseudomirrorinv$ defined as in Lemma \ref{lemINVERTLP}, $\exists K\geq 0.133$ such that $\alpha$-loss satisfies:
\bignegspace
  \begin{eqnarray}
\forall z \not\in \mathbb{I}_\alpha, |(\pseudomirrorinv - (-\poibayesrisk')^{-1})(z)| & \lesssim & K/\alpha. \label{eqHorDiff}
    \end{eqnarray}
  \end{lemma}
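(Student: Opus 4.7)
The plan is to bound the $L^\infty$-distance between $\pseudomirrorinv$ and $(-\poibayesrisk')^{-1}$ on the prescribed region by first comparing the two forward maps $\hat{G}(u) \defeq f_\loss(u)$ and $G(u) \defeq -\poibayesrisk'(u)$ pointwise in $u$, and then inverting via the mean value theorem. By the symmetry built into the $\alpha$-loss and inherited by $f_\loss$ through Lemma~\ref{lemINVERTLP}, it suffices to analyze $u \in [0, 1/2]$, equivalently $z \in [-\holdera, 0]$. Using the explicit formulas derived in the proof of Lemma~\ref{lemPILalphaloss} and substituting $p \defeq \pseudob(u)$, one obtains
\begin{align*}
G(u) - \hat{G}(u) = \holdera \cdot \left[ 1 - (1-p)^{1/\holdera} - (2^{1/\holdera} - 1)\, p^{1/\holdera}\right],
\end{align*}
a function that vanishes at $p=0$ and $p=1/2$ and is unimodal in between. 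A direct critical-point calculation shows the maximum is attained at $p^\star = 1/(1 + (2^{1/\holdera} - 1)^\alpha)$, which tends to $1/5$ as $\alpha \to \infty$, and a Taylor expansion in $1/\alpha$ then yields $\|G - \hat{G}\|_\infty \leq \log(5/4)/\alpha + O(1/\alpha^2)$.

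Next, to translate this pointwise estimate into a bound on the inverses, set $u_{\mathrm{p}} \defeq \pseudomirrorinv(z)$ and $u_{\mathrm{t}} \defeq (-\poibayesrisk')^{-1}(z)$ so that $\hat{G}(u_{\mathrm{p}}) = G(u_{\mathrm{t}}) = z$. The mean value theorem applied to $\hat{G}$ between $u_{\mathrm{t}}$ and $u_{\mathrm{p}}$ gives
\begin{align*}
|u_{\mathrm{p}} - u_{\mathrm{t}}| = \frac{|G(u_{\mathrm{t}}) - \hat{G}(u_{\mathrm{t}})|}{\hat{G}'(\xi)}
\end{align*}
for some $\xi$ between $u_{\mathrm{p}}$ and $u_{\mathrm{t}}$, so the remaining task is to produce a lower bound on $\hat{G}'(\xi)$ valid on the $u$-region induced by $z \notin \mathbb{I}_\alpha$.

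The explicit form of $\pseudomirrorinv$ in Lemma~\ref{lemPILalphaloss} yields $(2p(u_{\mathrm{p}}))^{1/\holdera} = 1 + z/\holdera$, so the constraint $|z/\holdera| \leq 1 - 1/\alpha^4$ forces $p(u_{\mathrm{p}})$ bounded away from $\{0,1\}$ by a quantity of order $\alpha^{-4\holdera}$. Inverting $\pseudob$ via the closed form $u = p^{1/\alpha}/(p^{1/\alpha} + (1-p)^{1/\alpha})$ then confines $u_{\mathrm{p}}$ (and hence, by the same argument applied to $G$, $u_{\mathrm{t}}$ and any intermediate $\xi$) to an interval of radius $O(\log \alpha / \alpha)$ around $1/2$. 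A direct computation using $\pseudob'(u) = \alpha u^{\alpha-1}(1-u)^{\alpha-1}/D^2$ with $D = u^\alpha + (1-u)^\alpha$ shows that $\hat{G}'(1/2) = 2\alpha$, and that the factor $\alpha$ in $\pseudob'$ survives all exponential cancellations uniformly on an $O(\log\alpha/\alpha)$-neighborhood of $1/2$, giving $\hat{G}'(\xi) \gtrsim \alpha$ on the relevant region.

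Combining the numerator bound of order $1/\alpha$ with the derivative lower bound of order $\alpha$ gives a difference of the claimed form, with the leading constant essentially determined by $\log(5/4)/2 \approx 0.111$; the slack coming from the assumption $\alpha \geq 1.2$ absorbs the higher-order error terms and allows the explicit choice $K \geq 0.133$. The main obstacle is that the argmax $p^\star \approx 1/5$ of $|G - \hat{G}|$ corresponds (for large $\alpha$) to a $u$ within $O(\log\alpha/\alpha)$ of $1/2$, which is precisely where $\hat{G}'$ transitions from exponentially small to $\Theta(\alpha)$; the two estimates must therefore be tracked with matching asymptotics rather than combined naively, and the width $1/\alpha^4$ in the definition of $\mathbb{I}_\alpha$ is calibrated precisely to keep $\pseudomirrorinv$ from reaching the saturation clips $0$ or $1$ while leaving the $\Theta(\alpha)$ lower bound on $\hat{G}'$ intact.
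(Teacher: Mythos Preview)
Your approach---bound the vertical gap $|G-\hat G|$ in the $u$-variable and then invert via the mean value theorem---is genuinely different from the paper's, and the first half (the computation of $\|G-\hat G\|_\infty \sim \log(5/4)/\alpha$ with maximizer $p^\star\to 1/5$) is essentially correct. The gap is in the second half: the claim that $\hat G'(\xi)\gtrsim\alpha$ holds uniformly on the region induced by $z\notin\mathbb{I}_\alpha$ is false, and this is exactly the step your last paragraph flags as the ``main obstacle'' but then incorrectly declares resolved.

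Concretely, at the edge of the allowed $z$-region, say $z=-\holdera(1-\alpha^{-4})$, one has $p=\pseudob(u_{\mathrm p})\approx \tfrac12\alpha^{-4\holdera}$, and inverting gives $u_{\mathrm p}\approx \tfrac12-\tfrac{\log 2+4\log\alpha}{4\alpha}$. At such a point the explicit formula $\pseudob'(u)=\alpha(u(1-u))^{\alpha-1}/(u^\alpha+(1-u)^\alpha)^2$ yields $\pseudob'(u)\sim 4\alpha^{-3}$, hence $\hat G'(u)\sim 8\alpha^{-3}$, not $\Theta(\alpha)$. The exponential factors you assert ``survive uniformly'' in fact kill the leading $\alpha$ as soon as $u$ is $\sim\log\alpha/\alpha$ away from $1/2$. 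So the naive MVT ratio $\sup|G-\hat G|/\inf\hat G'$ is useless. A careful pointwise tracking of the ratio could still work (near the boundary the numerator is $O(\alpha^{-5}\log\alpha)$, so the ratio there is $O(\alpha^{-2}\log\alpha)$), but you have not carried this out, and the stated combination ``numerator $O(1/\alpha)$, denominator $\Omega(\alpha)$'' is both false as a uniform statement and, where it does hold, would give $O(1/\alpha^2)$ rather than the claimed $K/\alpha$ with $K\approx 0.111$.

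The paper sidesteps the bad derivative behaviour entirely by first passing to the transformed variable $p=1-\pseudob(u)$, where both forward maps become $f(p)=(1-p)^{\holdera}-p^{\holdera}$ and $g(p)=1-(2p)^{1/\holdera}$. These have derivatives bounded away from zero on $[0,1/2]$, so the \emph{horizontal} distance $\sup_k|f^{-1}(k)-g^{-1}(k)|$ is cleanly $\le K/\alpha$ with $K\ge\log 2$. The delicate step---pulling this back through the highly non-uniform map $p\mapsto u=\pseudob^{-1}(p)$---is then handled not by a derivative bound but by an explicit evaluation of $|\pseudob^{-1}(\varepsilon)-\pseudob^{-1}(\varepsilon+K/\alpha)|$ at the extremal $\varepsilon=\alpha^{-4}$, together with a convexity argument showing this is indeed the worst case. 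That direct computation is what produces the constant $0.133$.
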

    \bignegspace
  Remark the necessity of a trick as we do not compute $(-\poibayesrisk')^{-1}$ in \eqref{eqHorDiff}. The proof, in Section \ref{proof-lem-edgeALPHA}, bypasses the difficulty by bounding the \textit{horizontal} distance between the \textit{inverses}. The Lemma can be read as: with the exception of an interval vanishing rapidly with $\alpha$, the difference between $\pseudomirrorinv$ (that we can easily compute for the $\alpha$-loss) and $(-\poibayesrisk')^{-1}$ (that we do not compute for the $\alpha$-loss), in order or just pointwise (typically for $\alpha < 10$) is at most $0.14/\alpha$.
  We now show how we can virtually "get rid of" (\textbf{O1}) and (\textbf{O2}) in such a context to apply Theorem \ref{genBOOST}. Consider the following assumptions: (i) no edge falls in $\mathbb{I}_\alpha$, (ii) the weak learner guarantees $\upgamma = 0.14$, (iii) the average weights, $\overline{w}_j \defeq \ve{1}^\top \ve{w}_j/m$, satisfies $\overline{w}_j \geq 0.4$. Looking at Figure \ref{f-alpha}, we see that (i) is virtually not limiting at all; (ii) is a reasonable assumption on \weak; remembering that a weight has the form $w = \pseudomirrorinv(-y H(\ve{x}))$, we see that (iii) requires $H$ to be not "too good", see for example Figure \ref{f-alpha} in which case $w = 0.4$ implies an edge $y H \leq 0.8$. We now observe that given (i), it is trivial to find $a_f$ to satisfy (\textbf{O2}) since we focus only on one $\alpha$-loss. Suppose $\alpha \geq 2.7$, which approaches the average value of the $\alpha$s in our experiments, and finally let $\zeta \defeq 2.5/2.7 \approx 0.926$. Then we get the chain of inequalities:
  \begin{eqnarray}
    \lefteqn{\Delta(F) \underset{\mbox{\tiny{Lem. \ref{edgeALPHA}}}}{\leq} M\cdot \frac{0.14}{\alpha} \underset{\mbox{\tiny{(ii)}}}{=} \upgamma M\cdot \frac{1}{\alpha} \underset{\textbf{(WLA)}}{\leq}  \frac{1}{\alpha} \cdot \tilde{\edge}_j} \nonumber\\
    & & \quad \defeq\frac{1}{\alpha \overline{w}_j} \cdot \edge_j \underset{\mbox{\tiny{(iii)}}}{\leq} \frac{2.5}{\alpha} \cdot \edge_j \leq \frac{2.5}{2.7} \cdot \edge_j \defeq \zeta \cdot \edge_j,\label{chainINEQ}
  \end{eqnarray}
  and so (\textbf{O1}) is implied by the weak learning assumption. 
  To summarise, \pmboost~boosts the convex surrogate of the $\alpha$-loss without either computing it or its derivative, and achieves boosting compliant convergence using only the classical assumptions of boosting, \textbf{(R, WLA)}.
  The proof of Lemma \ref{edgeALPHA} being very conservative, we can expect that the smallest value of $K$ of interest is smaller than the one we use, indicating that \eqref{chainINEQ} should hold for substantially smaller limit values in (ii, iii).

\subsection{Proof of Lemma \ref{edgeALPHA}}\label{proof-lem-edgeALPHA}

Define for short
\begin{eqnarray}
  F(u) & \defeq & \left(\frac{u^\alpha}{u^\alpha+(1-u)^\alpha}\right)^{\holdera} -\left(\frac{(1-u)^\alpha}{u^\alpha+(1-u)^\alpha}\right)^{\holdera}\\
G(u) & \defeq & 1-\left(\frac{2\cdot (1-u)^\alpha}{u^\alpha+(1-u)^\alpha}\right)^{\frac{1}{\holdera}},
\end{eqnarray}
that we study for $u\geq 1/2$ (the bound also holds by construction for $u<1/2$). Define the following functions:
\begin{eqnarray}
  g(u) & \defeq & 1 - (2u) ^{\frac{1}{\holdera}},\\
  h(z) & \defeq & \frac{1}{1+z},\\
  i_{\alpha}(u) & \defeq & u^{\alpha},
\end{eqnarray}
and $u_\alpha \defeq h \circ i_{-\alpha} \circ h^{-1}(u)$, $f(u) \defeq i_{\holdera}(1-u) - i_{\holdera}(u)$. We remark that $g$ is convex if $\alpha \geq 1$ while $f$ is concave. Both derivatives match in $1/2$ if
\begin{eqnarray}
(\holdera)^2 2^{1-\holdera} & = & 1,
\end{eqnarray}
whose roots are $\holdera <6$. It means if $\alpha \geq 6/5 = 1.2$, then $(g-f)' \geq 0$, and so if we measure
\begin{eqnarray*}
k^* & \defeq & \arg\sup_k \sup_{x, x' : g(x) = f(x') = k} |x-x'|,
  \end{eqnarray*}
  then $k^*$ is obtained for $x=1$, for which $g(x) = 1 - 2^{\frac{1}{\holdera}} = k^*$. We then need to lowerbound $x'$ such that $f(x') = 1 - 2^{\frac{1}{\holdera}}$, which amounts to finding $x^*$ such that $f(x^*) \geq 1 - 2^{\frac{1}{\holdera}}$, since $f$ is strictly decreasing. Fix
  \begin{eqnarray}
    x^* & \defeq & 1-\frac{K}{\alpha},
  \end{eqnarray}
  A series expansion reveals that for $x=x^*$ and $K = \log 2$,
  \begin{eqnarray}
f(x^*) & = & g(x^*) + O\left(\frac{1}{\alpha^2}\right),
  \end{eqnarray}
  and we thus get that there exists $K \geq \log 2$ such that
  \begin{eqnarray}
\sup_k \sup_{x, x' : g(x) = f(x') = k} |x-x'| & \leq & \frac{K}{\alpha},
  \end{eqnarray}
  or similarly for any ordinate value, the difference between the abscissae giving the value for $f$ and $g$ are distant by at most $K/\alpha$. The exact value of the constant is not so much important than the dependence in $1/\alpha$: we now plug this in the $u_\alpha$s notation and ask the following question: suppose $f(u_\alpha) = g(v_\alpha) = k$. Since $|u_\alpha - v_\alpha| \leq K/\alpha$, what is the maximum difference $|u-v|$ as a function of $\alpha$ ? We observe
\begin{eqnarray}
\frac{\partial}{ \partial u} u_\alpha & = & -\frac{\alpha (u(1-u))^{\alpha-1}}{(u^\alpha + (1-u)^\alpha)^2},\\
\frac{\partial^2}{ \partial u^2} u_\alpha & = & \alpha \cdot \frac{(u(1-u))^{\alpha-2}((\alpha-2u+1)u^\alpha - (\alpha+2u-1)(1-u)^\alpha)}{(u^\alpha + (1-u)^\alpha)^3},
\end{eqnarray}
which shows the convexity of $u_\alpha$ as long as
\begin{eqnarray}
\left(\frac{u}{1-u}\right)^\alpha & \geq & \frac{\alpha + 2u -1}{\alpha -2u + 1},
\end{eqnarray}
a sufficient condition for which -- given the RHS increases with $u$ -- is
\begin{eqnarray}
  u & \geq & \frac{\left(\frac{4}{\alpha-1}\right)^{\frac{1}{\alpha}}}{1+\left(\frac{4}{\alpha-1}\right)^{\frac{1}{\alpha}}}.
\end{eqnarray}
Since $u\geq 1/2$, we note the constraint quickly vanishes. In particular, if $\alpha \geq 5$, the RHS is $\leq 1/2$, so $u_\alpha$ is strictly convex. Otherwise, scrutinising the maximal values of the derivative for $\alpha \geq 1$ reveals that if we suppose $v\leq \delta$ for some $\delta$, then $|u-v|$ is maximal for $v= \delta$. So, suppose $v_\alpha = \epsilon$ and we solve for $u_\alpha = K/\alpha + \varepsilon$, which yields
\begin{eqnarray}
  u & = & \frac{\left(1-\frac{K}{\alpha}-\varepsilon\right)^{\frac{1}{\alpha}}}{\left(\frac{K}{\alpha}+\varepsilon\right)^{\frac{1}{\alpha}}+\left(1-\frac{K}{\alpha}-\varepsilon\right)^{\frac{1}{\alpha}}}\\
  & = & \frac{\left((1-\varepsilon)\alpha - K\right)^{\frac{1}{\alpha}}}{\left(K+\varepsilon\alpha\right)^{\frac{1}{\alpha}}+\left((1-\varepsilon)\alpha - K\right)^{\frac{1}{\alpha}}},
\end{eqnarray}
while the $v$ producing the largest $|u-v|$ is:
\begin{eqnarray}
v & = & \frac{(1-\varepsilon)^{\frac{1}{\alpha}}}{\varepsilon^{\frac{1}{\alpha}}+(1-\varepsilon)^{\frac{1}{\alpha}}},
\end{eqnarray}
so
\begin{eqnarray}
|v-u| = (v-u)(\varepsilon) & = & \frac{(1-\varepsilon)^{\frac{1}{\alpha}}}{\varepsilon^{\frac{1}{\alpha}}+(1-\varepsilon)^{\frac{1}{\alpha}}} - \frac{\left((1-\varepsilon)\alpha - K\right)^{\frac{1}{\alpha}}}{\left(K+\varepsilon\alpha\right)^{\frac{1}{\alpha}}+\left((1-\varepsilon)\alpha - K\right)^{\frac{1}{\alpha}}}.
\end{eqnarray}
If we fix
\begin{eqnarray}
\varepsilon & = & \frac{1}{\alpha^4},
\end{eqnarray}
then we get after separate series are computed in $\alpha \rightarrow +\infty$,
\begin{eqnarray}
  |v-u| = (v-u)(\varepsilon) & = &  \frac{\log(1+\log K)}{4\alpha} + O\left(\frac{1}{\alpha^2}\right)\\
  & \lesssim & \frac{0.133}{\alpha}.
\end{eqnarray}
The "forbidden interval" for $v$ is then
\begin{eqnarray}
\left[\frac{(\alpha^4-1)^{\frac{1}{\alpha}}}{1+(\alpha^4-1)^{\frac{1}{\alpha}}}, 1\right] & \approx & \left[\frac{1}{2} + \frac{\log \alpha}{\alpha}, 1\right];
\end{eqnarray}
what is more interesting for us is the corresponding forbidden images for $g (v_\alpha)$, which are thus
\begin{eqnarray}
  g &\not\in& \holdera \cdot \left[1-\frac{1}{\alpha^4}, 1\right] \defeq \mathbb{I}_\alpha,
\end{eqnarray}
where we use shorthand $z\cdot [a, b] \defeq [az, bz]$. This, we note, translates directly into observable edges since $g$ is the function we invert. Summarizing, we have shown that if (i) $\alpha \geq 1.2$ then for any $u,v$ such that $F(u) = G(v) \not\in \mathbb{I}_\alpha$, then $|u-v| \lesssim 0.133 / \alpha$. It suffices to remark that $\mathbb{I}_\alpha$ represents the set of forbidden weights to get the statement of the Lemma.

\subsection{Proof of Theorem \ref{genBOOST}}\label{app-proof-genBOOST}
\begin{remark} 
\pmboost~and its convergence analysis bring a side contribution of ours: it is impossible to exactly encode in standard machine types the inverse link of losses like the log loss, so the implementation of classical boosting algorithms \cite{fGFA,fhtAL} can only rely on approximations of the inverse link function.
  Our results yield convergence guarantees for the \textit{implementations} of such algorithms, and \textbf{(O1)} can be interpreted and checked in the context of machine encoding. 
Two additional remarks hold regarding convergence rate: first, the $1/\upgamma^2$ dependence meets the general optimum for boosting \cite{aghmBS}; second, the $1/\varepsilon^2$ dependence parallels classical training convergence of convex optimization \cite{tjnoPE} (and references therein). There is however a major difference with such work: \pmboost~requires \textit{no} function oracles for $F$ (function values, (sub)gradients, etc.).
This ``\textit{sideways}'' fork to minimizing $F$ pays (only) a $1/(1-\zeta)^2$ factor in convergence. 
\end{remark}

\newcommand{\lowerb}{\triangle}

We proceed in two steps, assuming \textbf{(WLA)} holds for \weak~and \textbf{(R)} holds for the weak classifiers.\\

\noindent \textbf{In Step 1}, we show that for any loss defined by $F$ twice differentiable, convex and non-increasing, for any $z^* \in \mathbb{R}$, as long as $F$ satisfies assumptions \textbf{(O1)} and \textbf{(O2)} for $T$ iterations such that
\begin{eqnarray}
 \sum_{t=0}^{T} \tilde{w}^2_t & \geq & \frac{2\surlossprime^* (\surlossprime(0) - \surlossprime(z^*))}{\upgamma^2 (1-\zeta)^2(1-\pi^2)},
\end{eqnarray}
we have the guarantee on the risk defined by $F$:
\begin{eqnarray}
\expect_{i\sim \mathcal{S}}\left[ \surloss (y_i H_{T}(\ve{x}_i))\right] & \leq & F(z^*).
  \end{eqnarray}

Let $F$ be any twice differentiable, convex and non-increasing function. We wish to find a lowerbound $\lowerb$ on the decrease of the expected loss computed using $F$:
\begin{eqnarray}
\expect_{i\sim \mathcal{S}}\left[ \surloss (y_i H_{t}(\ve{x}_i))\right] - \expect_{i\sim \mathcal{S}}\left[ \surloss (y_i H_{t+1}(\ve{x}_i))\right] & \geq & \lowerb,
\end{eqnarray}
where with a slight abuse of notation we let $H_{t}$ denote the learned real-valued classifier at iteration $t$.
We make use of a similar 
proof technique as in \cite[Theorem 7]{nwLO}. Suppose
\begin{eqnarray}
  H_{t+1} & = & H_t + \beta_j \cdot h_j,
\end{eqnarray}
index $j$ being returned by \weak~at iteration $t$. For any such index $j$, any $g : \mathbb{R} \rightarrow \mathbb{R}_+$ and any $H \in \mathbb{R}^{\mathcal{X}}$, let
\begin{eqnarray}
\edge(j,g,H) & \defeq & \expect_{i\sim \mathcal{S}} \left[y_i h_j(\ve{x}_i) \cdot g(y_i H(\ve{x}_i))\right],
  \end{eqnarray}
denote the expected edge of $h_j$ on weights defined by the couple $(g, H)$. 
Furthermore, let 
  \begin{eqnarray}
\Delta (g_1, g_2) & \defeq & \left| \edge(j,g_1,H_t)-\edge(j,g_2,H_t) \right|,
  \end{eqnarray}
  denote the discrepancy between two expected edges defined by $g_{1}, g_{2}$, respectively. 

There are two quantities we consider. First, let
  \begin{eqnarray}
    X & \defeq & \expect_{i\sim \mathcal{S}} \left[(y_i H_t (\ve{x}_i) - y_i H_{t+1} (\ve{x}_i)) \surlossprime'(y_i H_t (\ve{x}_i))\right]\\
    & = & \beta_j \cdot \expect_{i\sim \mathcal{S}} \left[ y_i h_j(\ve{x}_i) \cdot -\surlossprime'(y_i H_t (\ve{x}_i))\right]\\
    & = & \beta_j \cdot \edge(j, -\surlossprime', H_t) \\
    & \geq & \beta_j \cdot \expect_{i\sim \mathcal{S}} \left[ y_i h_j(\ve{x}_i) \cdot \pseudoinvlink(-y_i H_{t}(\ve{x}_i))\right] - \beta_j \cdot \Delta (-\surlossprime', \sympseudoinvlink)\\
      &  = & a_{f}\edge^2(j, \sympseudoinvlink,H_t) - a_{f}\edge (j, \sympseudoinvlink,H_t) \cdot \Delta(-\surlossprime', \sympseudoinvlink)\\
    & \geq & a_{f} (1-\zeta) \edge^2(j, \sympseudoinvlink,H_t),\label{eqassumptionD}
  \end{eqnarray}
  where $\sympseudoinvlink(z) \defeq \sympseudoinvlink(-z)$ and
finally \eqref{eqassumptionD} makes use of assumption \textbf{(O1)}. The second quantity we define is:
  \begin{eqnarray}
Y(\mathcal{Z}) & \defeq & \expect_{i\sim \mathcal{S}}\left[ (y_i
                                             H_{t}(\ve{x}_i) - y_i
                                             H_{t+1}(\ve{x}_i))^2
     \surloss ''(z_i)\right] \label{defYY},
\end{eqnarray}
where $\mathcal{Z} \defeq \{z_1, z_2, ..., z_m\} \subset
\mathbb{R}^m$. Using assumption \textbf{(R)} and letting $\surlossprime^*$ be any real such that $\surlossprime^* \geq \sup \surlossprime'' (z)$, we obtain:
  \begin{eqnarray}
    Y(\mathcal{Z}) & \leq &\surlossprime^* \cdot \expect_{i\sim \mathcal{S}}\left[ (y_i
                                             H_{t}(\ve{x}_i) - y_i
                            H_{t+1}(\ve{x}_i))^2\right]\nonumber\\
    &  = & \surlossprime^* \cdot \beta_j^2 \cdot \expect_{i\sim \mathcal{S}}\left[ (y_i h_j(\ve{x}_i))^2\right]\nonumber\\
            & \leq &                \surlossprime^*\cdot \beta_j^2 \cdot  M^2\nonumber\\
    &    = & \surlossprime^* a^2   M^2 \cdot \edge^2(j, \sympseudoinvlink,H_t).
  \end{eqnarray}
A second order Taylor expansion on $\surloss$ brings that there exists $\mathcal{Z} \defeq \{z_1, z_2, ..., z_m\} \subset
\mathbb{R}^m$ such that:
\begin{eqnarray}
  \expect_{i\sim \mathcal{S}}\left[ \surloss (y_i H_{t}(\ve{x}_i))\right] & = & \expect_{i\sim \mathcal{S}}\left[ \surloss (y_i H_{t+1}(\ve{x}_i))\right] + \expect_{i\sim \mathcal{S}} \left[(y_i H_t (\ve{x}_i) - y_i H_{t+1} (\ve{x}_i)) \surlossprime'(y_i H_t (\ve{x}_i))\right] \nonumber\\
  & & + \expect_{i\sim \mathcal{S}}\left[ (y_i
                                             H_{t}(\ve{x}_i) - y_i
                                             H_{t+1}(\ve{x}_i))^2
     \cdot \frac{\surloss ''(z_i)}{2}\right] ,
 \end{eqnarray} 
  So,
  \begin{eqnarray}
    \expect_{i\sim \mathcal{S}}\left[\surlossprime(y_i H_t(\ve{x}_i))\right]-\expect_{i\sim \mathcal{S}}\left[\surlossprime(y_i H_{t+1}(\ve{x}_i))\right] & = & X - \frac{Y(\mathcal{Z})}{2} \nonumber\\
                                                                                                                                                    & \geq & \underbrace{\left(1 - \zeta - \frac{\surlossprime^* a M^2}{2}\right) a}_{\defeq Z(a)} \cdot \edge^2(j, \sympseudoinvlink,H_t).
  \end{eqnarray}
  Suppose we fix $\pi \in [0,1]$ and choose any
  \begin{eqnarray}
a & \in & \frac{1-\zeta}{\surlossprime^* M^2} \cdot \left[1-\pi, 1+\pi\right].\label{consta}
  \end{eqnarray}
  We can check that
  \begin{eqnarray}
Z(a) & \geq & \frac{(1-\zeta)^2(1-\pi^2)}{2\surlossprime^* M^2},
  \end{eqnarray}
  and so
  \begin{eqnarray}
\expect_{i\sim \mathcal{S}}\left[\surlossprime(y_i H_t(\ve{x}_i))\right]-\expect_{i\sim \mathcal{S}}\left[\surlossprime(y_i H_{t+1}(\ve{x}_i))\right] & \geq & \frac{(1-\zeta)^2(1-\pi^2)}{2\surlossprime^* M^2}\cdot \edge^2(j, \sympseudoinvlink,H_t) .
  \end{eqnarray}
  So, taking into account that for the first classifier, we have $\expect_{i\sim \mathcal{S}}\left[\surlossprime(y_i H_0(\ve{x}_i))\right] = \surlossprime(0)$, if we take any $z^* \in \mathbb{R}$ and we boost for a number of iterations $T$ satisfying (we use notation $\edge_t$ as a summary for $\edge^2(j, \sympseudoinvlink,H_t)$ with respect to \pmboost):
  \begin{eqnarray}
\sum_{t=1}^{T} \edge_t^2 & \geq & \frac{2\surlossprime^* M^2 (\surlossprime(0) - \surlossprime(z^*))}{(1-\zeta)^2(1-\pi^2)},\label{beta1}
  \end{eqnarray}
  then $\expect_{i\sim \mathcal{S}}\left[\surlossprime(y_i H_{T}(\ve{x}_i))\right] \leq \surlossprime(z^*)$. We now assume \textbf{(WLA)} holds, the LHS of \eqref{beta1} is $\geq T \upgamma^2$. Given that we choose $a = a_f$ in \pmboost, we need to make sure \eqref{consta} is satisfied for the loss defined by $\surloss$, which translates to 
\begin{eqnarray}
\surlossprime^* & \in & \frac{1-\zeta}{a_f M^2} \cdot \left[1-\pi, 1+\pi\right], \label{constF}
  \end{eqnarray}
  and defines assumption \textbf{(O2)}.  To complete Step 1, we normalize the edge. Letting $\tilde{w}_i \defeq w_i / \sum_k w_k$, $\tilde{w}_t \defeq \ve{1}^\top \ve{w}_t /m$ and
  \begin{eqnarray}
\tilde{\edge}_t & \defeq & \frac{\edge_t}{\tilde{w}_t} \in [-M, M],
  \end{eqnarray}
which is then properly normalized and such that \eqref{beta1} becomes equivalently:
  \begin{eqnarray}
\sum_{t=0}^{T} \tilde{w}^2_t \tilde{\edge}_t^2 & \geq & \frac{2\surlossprime^* M^2 (\surlossprime(0) - \surlossprime(z^*))}{(1-\zeta)^2(1-\pi^2)},\label{beta2}
  \end{eqnarray}
  and so under the (weak learning) assumption on $\tilde{\edge}_t$ that $|\tilde{\edge}_t | \geq \upgamma \cdot M$, a sufficient condition for \eqref{beta2} is then
  \begin{eqnarray}
    \sum_{t=0}^{T} \tilde{w}^2_t & \geq & \frac{2\surlossprime^* (\surlossprime(0) - \surlossprime(z^*))}{\upgamma^2 (1-\zeta)^2(1-\pi^2)},\label{beta3}
  \end{eqnarray}
  completing step 1 of the proof.\\

  \noindent \textbf{In Step 2}, we show a result on the distribution of edges, \textit{i.e.} margins. 
  \eqref{beta3} contains all the intuition about how the rest of the proof unfolds, as we have two major steps: in step 2.1,  we translate the guarantee of \eqref{beta3} on margins, and in step 2.2, we translate the ``margin'' based \eqref{beta3} in a readable guarantee in the boosting framework (we somehow ``get rid'' of the $\tilde{w}^2_t$ in the LHS of \eqref{beta3}).\\

  \noindent \textbf{Step 2.1}. Let $\mathcal{Z} \defeq \{z_1, z_2, ..., z_m\} \subset \mathbb{R}$ a set of reals. Since $\surlossprime$ is non-increasing, we have $\forall u\in [0,1], \forall \theta \geq 0$,
  \begin{eqnarray}
    \pr_{i}[z_i \leq \theta] > u \Rightarrow  \expect_i [\surlossprime(z_i)] & > & (1-u) \inf_z \surlossprime(z)  + u \surlossprime(\theta)\nonumber\\
    & & \defeq (1-u) \surlossprimemin + u \surlossprime(\theta),\label{prtoexpect}
  \end{eqnarray}
  so if we pick $z^*$ in \eqref{beta3} such that
  \begin{eqnarray}
\surlossprime(z^*) & \defeq &  (1-u) \surlossprimemin + u \surlossprime(\theta),\label{valuezstar}
  \end{eqnarray}
  then \eqref{beta3} implies $\expect_{i\sim \mathcal{S}}\left[\surlossprime(y_i H_{T}(\ve{x}_i))\right] \leq (1-u) \surlossprimemin + u \surlossprime(\theta)$ and so by the contraposition of \eqref{prtoexpect} yields:
  \begin{eqnarray}
\pr_{i\sim \mathcal{S}}\left[y_i H_{T}(\ve{x}_i) \leq \theta\right] & \leq & u, \label{margineq}
  \end{eqnarray}
  which yields our margin based guarantee.\\

  \noindent \textbf{Step 2.2}. At this point, the key (in)equalities are \eqref{beta3} (for boosting) and \eqref{margineq} (for margins). Fix $\kappa > 0$. We have two cases:
  \begin{itemize}
  \item Case 1: $\tilde{w}_t$ never gets too small, say $\tilde{w}_t \geq \kappa, \forall t\geq 0$. In this case, granted the weak learning assumption holds on $\tilde{\edge}_t$, \eqref{beta3} yields a direct lowerbound on iteration number $T$ to get $\pr_{i\sim \mathcal{S}}\left[y_i H_{T}(\ve{x}_i) \leq \theta\right] \leq u$;
    \item  Case 2: $\tilde{w}_t \leq \kappa$ at some iteration $t$. Since the smaller it is, the better classified are the examples, if we pick $\kappa$ small enough, then we can get $\pr_{i\sim \mathcal{S}}\left[y_i H_{T}(\ve{x}_i) \leq \theta\right] \leq u$ ``straight''.
    \end{itemize}
    This suggests our use of the notion of ``denseness'' for weights \cite{bcsEN}.
    \begin{definition}
The weights at iteration $t$ is called $\kappa$-dense iff $\tilde{w}_t \geq \kappa$.
\end{definition}
We now have the following Lemma.
\begin{lemma}\label{lemKd}
  For any $t\geq 0, \theta \in \mathbb{R}, \kappa>0$, if weights produced in Step 2.1 of \pmboost~fail to be $\kappa$-dense, then
  \begin{eqnarray}
\pr_{i\sim \mathcal{S}}\left[y_i H_{T}(\ve{x}_i) \leq \theta\right] & \leq & \frac{\kappa}{\pseudoinvlink(-\theta)}.\label{bsuppr}
    \end{eqnarray}
  \end{lemma}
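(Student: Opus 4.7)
The plan is to deduce \eqref{bsuppr} from a one-line application of Markov's inequality to the non-negative random variable $\pseudoinvlink(-y_i H_t(\ve{x}_i))$, exploiting the two structural properties of the pseudo-inverse link stated in Section~\ref{sec-boosting}: $\pseudoinvlink$ is non-negative and non-decreasing. These properties are precisely what allow us to turn a ``small expected weight'' statement into a ``small margin tail'' statement.

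First, I would rewrite the margin event in terms of $\pseudoinvlink$. Since $\pseudoinvlink$ is non-decreasing, the event $\{y_i H_t(\ve{x}_i) \leq \theta\}$ is equivalent to $\{-y_i H_t(\ve{x}_i) \geq -\theta\}$, which is contained in $\{\pseudoinvlink(-y_i H_t(\ve{x}_i)) \geq \pseudoinvlink(-\theta)\}$. Assuming $\pseudoinvlink(-\theta) > 0$ (otherwise the bound is vacuous), Markov's inequality on the non-negative quantity $w_i = \pseudoinvlink(-y_i H_t(\ve{x}_i))$ gives
\begin{eqnarray}
\pr_{i\sim \mathcal{S}}\left[y_i H_t(\ve{x}_i) \leq \theta\right] & \leq & \pr_{i\sim \mathcal{S}}\left[\pseudoinvlink(-y_i H_t(\ve{x}_i)) \geq \pseudoinvlink(-\theta)\right] \\
& \leq & \frac{\expect_{i\sim \mathcal{S}}\left[\pseudoinvlink(-y_i H_t(\ve{x}_i))\right]}{\pseudoinvlink(-\theta)}.
\end{eqnarray}

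Second, I would identify the numerator with $\tilde{w}_t$. By the weight update in Step~2.1 of \pmboost~and the definition $\tilde{w}_t \defeq \ve{1}^\top \ve{w}_t / m$ (the average weight at iteration $t$), we have exactly $\expect_{i\sim \mathcal{S}}[\pseudoinvlink(-y_i H_t(\ve{x}_i))] = \tilde{w}_t$. The failure of $\kappa$-denseness means $\tilde{w}_t < \kappa$, so combining with the previous display yields the claimed bound $\pr_{i\sim \mathcal{S}}[y_i H_t(\ve{x}_i) \leq \theta] \leq \kappa / \pseudoinvlink(-\theta)$.

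There is no substantial obstacle here; the only subtlety is ensuring $\pseudoinvlink(-\theta) > 0$ so that Markov's inequality produces a meaningful bound. When $\pseudoinvlink(-\theta) = 0$ (which, per Lemma~\ref{lemPILalphaloss}, occurs for the $\alpha$-loss when $-\theta \leq -\holdera$, i.e., $\theta \geq \holdera$), the right-hand side of \eqref{bsuppr} is conventionally infinite and the statement is trivially true. Thus the whole argument reduces to: monotonicity of $\pseudoinvlink$ $\Rightarrow$ Markov $\Rightarrow$ plug in the denseness threshold.
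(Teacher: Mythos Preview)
Your proof is correct and follows essentially the same approach as the paper: the paper bounds $\expect_i[\pseudoinvlink(z_i)] \geq \pr_i[z_i \geq -\theta]\cdot \pseudoinvlink(-\theta)$ directly from monotonicity and $\inf \pseudoinvlink = 0$, then takes the contrapositive, which is precisely Markov's inequality written out by hand. Your explicit invocation of Markov and your handling of the degenerate case $\pseudoinvlink(-\theta)=0$ are both fine and match the paper's argument.
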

  \begin{proof}
    Let $\mathcal{Z} \defeq \{z_1, z_2, ..., z_m\} \subset \mathbb{R}$ a set of reals. Since $\pseudoinvlink$ is non-decreasing, we have $\forall \theta \in \mathbb{R}$,
    \begin{eqnarray}
      \expect_i[\pseudoinvlink(z_i)] & \geq & \pr_i[z_i < - \theta]\cdot \inf_z \pseudoinvlink(z) + \pr_i[z_i \geq - \theta]\cdot \pseudoinvlink(-\theta)\nonumber\\
      & &  = \pr_i[z_i \geq - \theta]\cdot \pseudoinvlink(-\theta)
    \end{eqnarray}
    since by assumption $\inf \pseudoinvlink = 0$. Pick $z_i \defeq -y_i H_{T}(\ve{x}_i)$. We get that if $\pr_{i\sim \mathcal{S}} [-y_i H_{T}(\ve{x}_i) \geq -\theta ] = \pr_{i\sim \mathcal{S}} [y_i H_{T}(\ve{x}_i) \leq \theta ] \geq \xi$, then $\tilde{w}_t \defeq \expect_{i\sim \mathcal{S}}[\pseudoinvlink(-y_i H_{T}(\ve{x}_i))] \geq \xi \cdot \pseudoinvlink(-\theta)$. If we fix
    \begin{eqnarray}
\xi & = & \frac{\kappa}{\pseudoinvlink(-\theta)},
      \end{eqnarray}
then $\tilde{w}_t < \kappa$ implies \eqref{bsuppr}, which ends the proof of Lemma \ref{lemKd}.
  \end{proof}
  From Lemma \ref{lemKd}, we let $\kappa \defeq \xi_* \cdot \pseudoinvlink(-\theta)$ and $u \defeq \xi_*$ in \eqref{margineq}. If at any iteration, $H_{T}$ fails to be $\kappa$-dense, then $\pr_{i\sim \mathcal{S}}\left[y_i H_{\ve{\beta}}(\ve{x}_i) \leq \theta\right] \leq \xi_*$ and classifier $H_{\ve{\beta}}$ satisfies the conditions of Theorem \ref{genBOOST} (this is our Case 2 above).

  Otherwise, suppose it is always $\kappa$-dense (this is our Case 1 above). We then have at any iteration $T$ $\sum_{t<T} \tilde{w}_t^2 \geq T \xi_*^2 \cdot \pseudoinvlink^2(-\theta)$ and so a sufficient condition to get \eqref{beta3} is then $T \geq \frac{2\surlossprime^* (\surlossprime(0) - \surlossprime(z^*))}{\xi_*^2 \pseudoinvlink^2(-\theta)\upgamma^2 (1-\zeta)^2(1-\pi^2)}$,
where we recall $z^*$ is chosen so that $\surlossprime(z^*)  = (1-\xi_*) \surlossprimemin + \xi_* \surlossprime(\theta)$. This ends the proof of Theorem \ref{genBOOST} (with the change of notation $\xi_* \leftrightarrow \varepsilon$).

\section{Additional Experimental Results} \label{sec:additionalexperiments}

In this section, we provide additional experimental results and discussion to accompany Section~\ref{sec:experiments} in the main text.

\subsection{General Details}


Most of the experiments were performed over the course of a month on a $2015$ MacBook Pro with a $2.2$ GHz Quad-Core Intel Core $i7$ processor and $16$GB of memory.
The Adaptive $\alpha$ experiments were perfomred on a computing cluster and each required about $30$ minutes of compute time.
Code can be found in \textit{PILBoostExperiments.py}, \textit{AdaptiveAlphaMenon.py}, \textit{AdaptiveAlphaALL.py}.
Averaged experiments employed $10$-fold cross validation, and when twisters were present, randomization occurred over the twisted samples as well.
All algorithms across all experiments ran for $1000$ iterations.

\subsection{Discussion of $a_{f}$ and $\alpha$}

In general, we found that for most experiments, $0.1 \leq a_{f} \leq 15$.
From the theory, we know that if $a_{f}$ is too small, boosting needs to occur for a very long time, and if $a_{f}$ is too large, almost no loss fits to O2 (equivalently, O2 fails for us).
We also generally found that \pmboost~was not particularly sensitive to the choice of $a_{f}$ as long as it was in the ``right ballpark'', hence our use of integer or rational values of $a_{f}$ for all experiments. 
When there is twist present, we found that $\alpha > 1$ performed
best, where $\alpha^{*}$ increased as the amount of twist increased
(both observations are conistent with our theory, see for example Lemma 3.4). 
Regarding the relationship between $a_{f}$ and $\alpha$, this appeared to depend on the dataset and depth of the decision trees.

\subsection{Random Class Noise Twister} \label{sec:additionalclassnoise}

\begin{table}[H]
\begin{center}
\begin{tabular}{ccccc}
                        &               &               &           &                \\
\multirow{1}{*}{Dataset} &  \multirow{1}{*}{Algorithm}             &                \multicolumn{3}{c}{Random Class Noise Twister}                               \\ \cline{3-5} 
                         &                     & $p=$ $0$  & $0.15$ & $0.3$                  \\ \hhline{=====}
                         & AdaBoost            & $0.966 \pm 0.015$             & $0.905 \pm 0.027$ & $0.856 \pm 0.033$     \\ \cline{2-5} 
                         & us ($\alpha = 1.1$) & $0.944 \pm 0.029$             & $0.912 \pm 0.013$ & $0.861 \pm 0.042$                            \\ \cline{2-5} 
                         & us ($\alpha = 2.0$) & $0.956 \pm 0.018$             & $0.938 \pm 0.017$ & $0.905 \pm 0.039$                          \\ \cline{2-5} 
\multirow{-3}{*}{cancer}    & us ($\alpha = 4.0$) & $0.957 \pm 0.014$          & $0.917 \pm 0.012$ & $0.922 \pm 0.032$                             \\ \cline{2-5} 
                         & XGBoost             & $0.971 \pm 0.012$             & $0.861 \pm 0.033$ & $0.733 \pm 0.031$                              \\ 
\end{tabular}
\caption{cancer feature random class noise. Accuracies reported for each algorithm and level of twister. Depth one trees. For $\alpha = 1.1$, $a_{f} = 7$, for $\alpha = 2$, $a_{f} = 2$, and for $\alpha = 4$, $a_{f} = 1$.}
\label{table:cancerclassnoise}
\end{center}
\end{table}

\begin{figure}[h]
\begin{center}
\begin{tabular}{ccc}
\hspace{-0.3cm} \includegraphics[trim=20bp 15bp 30bp 15bp,clip,width=0.32\linewidth]{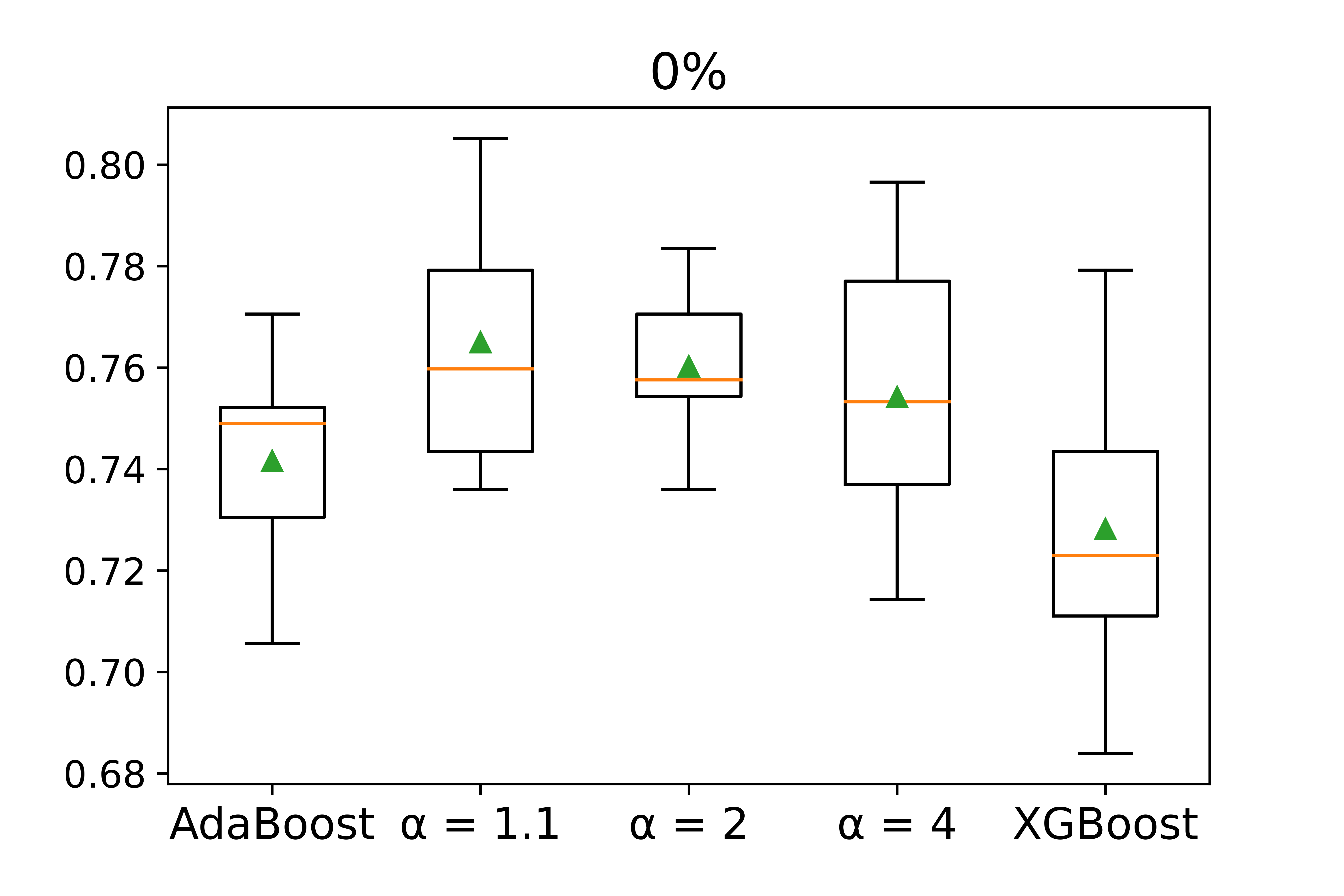}
  \hspace{-0.3cm} & \hspace{-0.2cm} \includegraphics[trim=20bp 15bp 30bp 15bp,clip,width=0.32\linewidth]{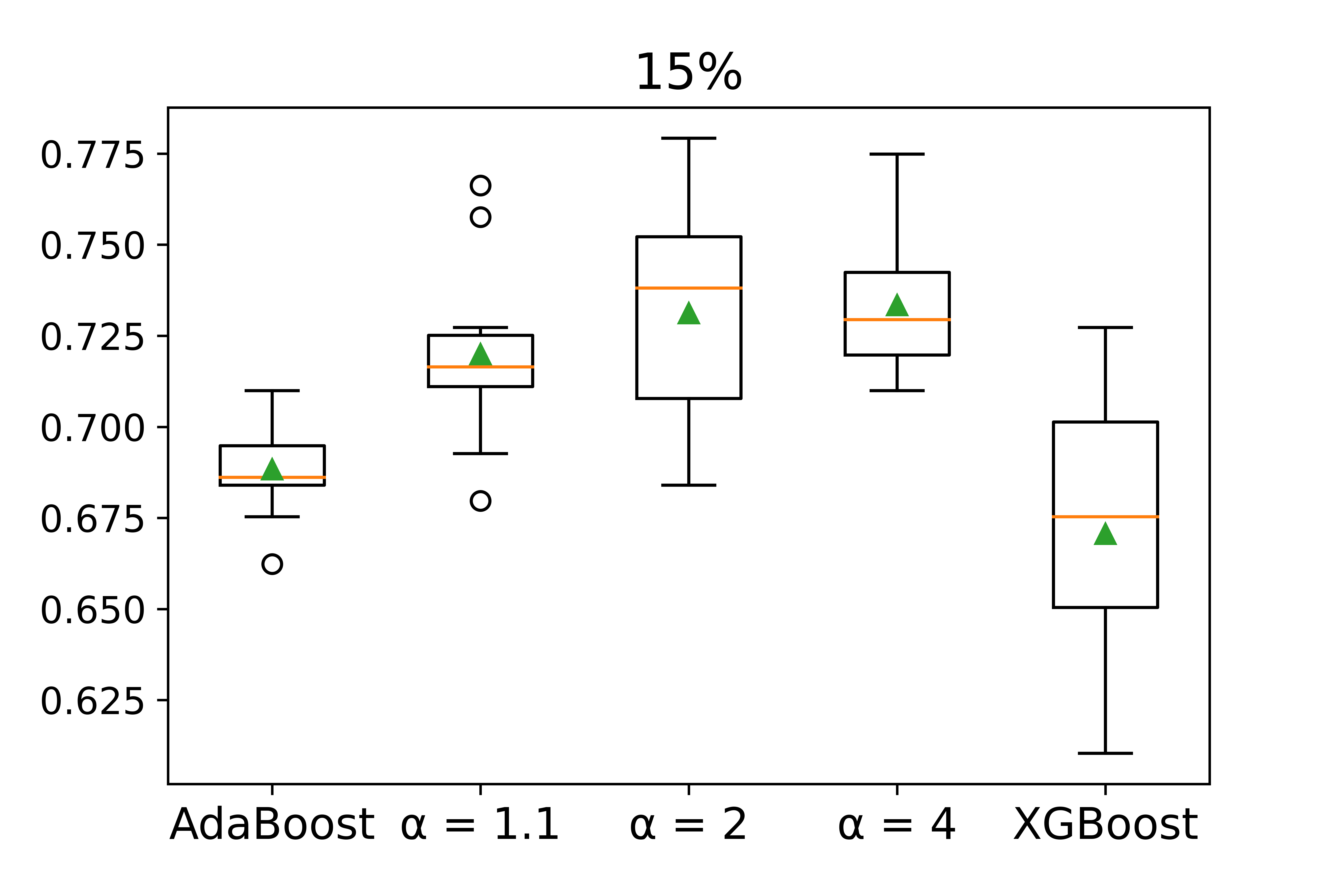}
  & \hspace{-0.3cm} \includegraphics[trim=20bp 15bp 30bp 15bp,clip,width=0.32\linewidth]{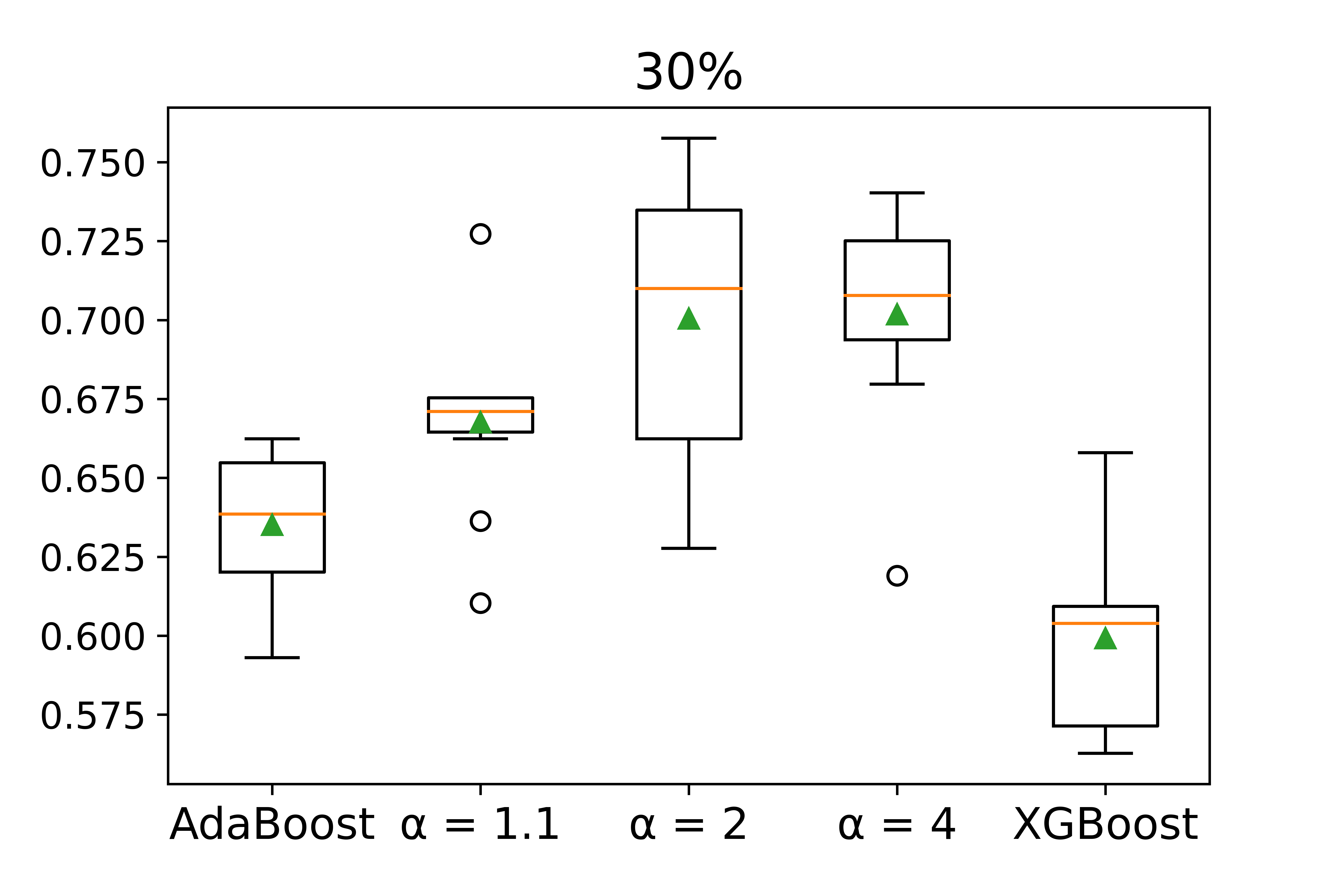}
\end{tabular}
\end{center}
\vspace{-0.2cm}
\caption{Random class noise twister on the diabetes dataset. Depth 3 trees. $a_{f} = 0.1$ for all $\alpha$.}
  \label{fig:classnoisediabetes}
\vspace{-0.2cm}
\end{figure}

\begin{table}[H]
\begin{center}
\begin{tabular}{ccccc}
                        &               &               &           &                \\
\multirow{1}{*}{Dataset} &  \multirow{1}{*}{Algorithm}             &                \multicolumn{3}{c}{Random Class Noise Twister}                               \\ \cline{3-5} 
                         &                     & $p=$ $0$  & $0.15$ & $0.3$                  \\ \hhline{=====}
                         & AdaBoost            & $1.000 \pm 0.000$      & $0.949 \pm 0.016$ & $0.830 \pm 0.043$     \\ \cline{2-5} 
                         & us ($\alpha = 1.1$) & $1.000 \pm 0.000$      & $0.981 \pm 0.013$ & $0.886 \pm 0.033$                            \\ \cline{2-5} 
                         & us ($\alpha = 2.0$) & $1.000 \pm 0.000$      & $0.992 \pm 0.009$ & $0.900 \pm 0.027$                          \\ \cline{2-5} 
\multirow{-3}{*}{xd6}    & us ($\alpha = 4.0$) & $1.000 \pm 0.000$      & $0.999 \pm 0.003$ & $0.927 \pm 0.023$                             \\ \cline{2-5} 
                         & XGBoost             & $1.000 \pm 0.000$      & $0.912 \pm 0.016$ & $0.776 \pm 0.041$                              \\ 
\end{tabular}
\caption{xd6 random class noise. Accuracies reported for each algorithm and level of twister. Depth three trees. $a_{f} = 8$ for all $\alpha$. Note that for $0\%$ noise $\alpha = 4$ used $a_{f} = 0.1$.}
\label{table:xd6classnoise}
\end{center}
\end{table}

\begin{table}[H]
\begin{center}
\begin{tabular}{cccccc}
                        &               &               &           &                   & \\
\multirow{1}{*}{Dataset} &  \multirow{1}{*}{Algorithm}             &                \multicolumn{4}{c}{Random Class Noise Twister}                               \\ \cline{3-6} 
                         &                     & $p=$ $0$  & $0.10$ & $0.20$                 & $0.30$   \\ \hhline{======}
                         & AdaBoost            & $0.902 \pm 0.002$             & $0.900 \pm 0.004$ & $0.898 \pm 0.005$ & $0.894 \pm 0.004$        \\ \cline{2-6} 
                         & us ($\alpha = 1.1$) & $0.901 \pm 0.005$             & $0.899 \pm 0.003$ & $0.897 \pm 0.004$                     & $0.890 \pm 0.004$         \\ \cline{2-6} 
                         & us ($\alpha = 2.0$) & $0.901 \pm 0.004$             & $0.895 \pm 0.004$ & $0.895 \pm 0.003$                      & $0.894 \pm 0.004$           \\ \cline{2-6} 
\multirow{-3}{*}{Online Shopping}  & us ($\alpha = 4.0$) & $0.898 \pm 0.003$   & $0.873 \pm 0.009$ & $0.892 \pm 0.005$                       & $0.889 \pm 0.005$          \\ \cline{2-6} 
                         & XGBoost             & $0.893 \pm 0.005$             & $0.874 \pm 0.002$ & $0.842 \pm 0.006$                     & $0.782 \pm 0.008$           \\ 
\end{tabular}
\caption{Accuracies reported for each algorithm and level of twister. Random training sample selected with probability $p$. Then, for selected training sample, boolean feature flipped with probability $p$ for each feature, independently. Depth three trees. For $\alpha = 1.1$, $a_{f} = 7$, for $\alpha = 2$, $a_{f} = 8$, and for $\alpha = 4$, $a_{f} = 15$. }
\label{table:onlineshoppingclassnoise}
\end{center}
\end{table}


\subsection{Insider Twister} \label{sec:additionalinsidertwister}

\begin{figure}[h]
\begin{center}
\begin{tabular}{ccc}
\hspace{-0.3cm} \includegraphics[trim=0bp 0bp 30bp 15bp,clip,width=0.48\linewidth]{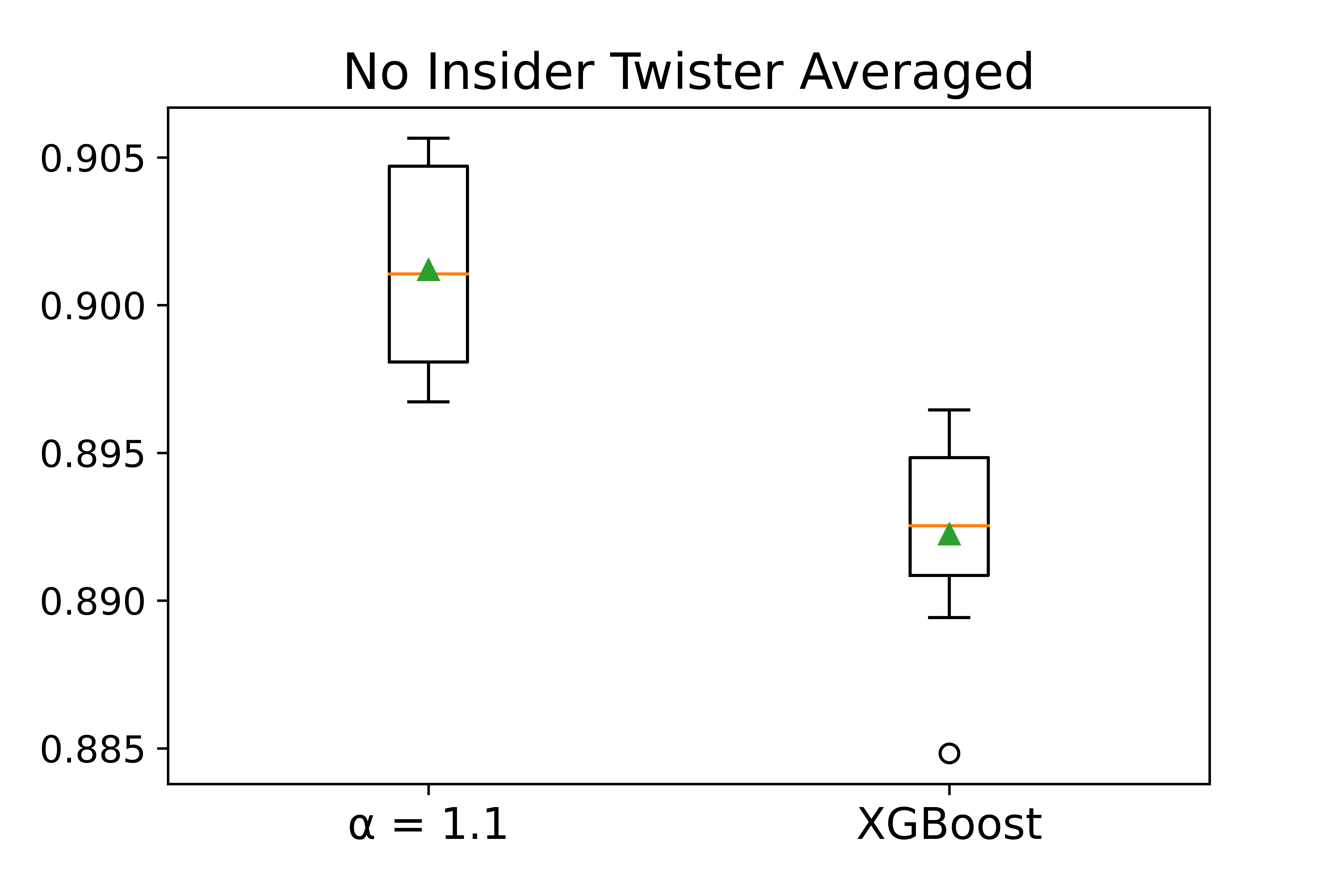}
  \hspace{-0.3cm} & \hspace{-0.2cm} \includegraphics[trim=0bp 0bp 30bp 15bp,clip,width=0.48\linewidth]{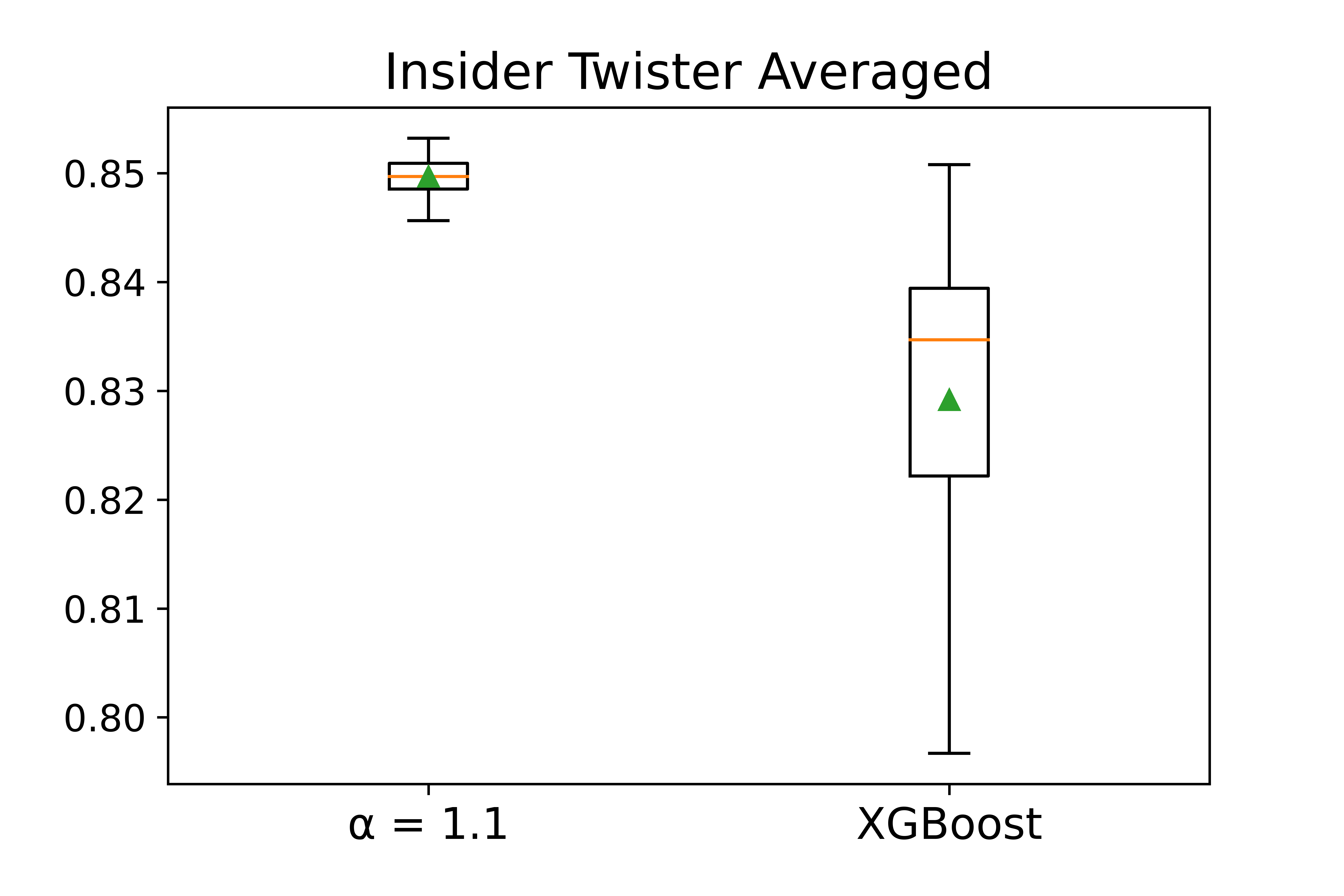}
\end{tabular}
\end{center}
\vspace{-0.2cm}
\caption{Box and whisker visualization of scores associated with Figure~\ref{fig:onlineshoppersinsidertwisterfeatures}. For all insider twister results, we fixed $a_{f} = 7$. Under no twister, $\alpha = 1.1$, has accuracy $0.901 \pm 0.003$, and XGBoost has accuracy $0.892 \pm 0.003$.
Under the insider twister, $\alpha = 1.1$, has accuracy $0.850 \pm 0.002$, and XGBoost has accuracy $0.829 \pm 0.016$; under the Welch t-test, the results have a $p$-value of $0.004$.}
  \label{fig:averagedresultsinsidertwister}
\vspace{-0.2cm}
\end{figure}

\subsection{Discussion of XGBoost} \label{sec:parametersxgboost}

\begin{table}[H]
\begin{center}
\begin{tabular}{cccccc}
                        &               &               &           &                   & \\
&  \multirow{1}{*}{Algorithm}             &                \multicolumn{4}{c}{Average Compute Times}   \\ \cline{3-6} 
                         &                     & cancer  & xd6  & diabetes   & shoppers   \\ \hhline{======}
                         & AdaBoost            & $1.41$  & $0.75$  & $1.11$ & $13.68$        \\ \cline{2-6} 
                         & us ($\alpha = 1.1$) & $2.19$  & $2.01$ & $2.19$  & $30.88$         \\ \cline{2-6} 
                         & us ($\alpha = 2.0$) & $1.11$  & $0.79$  & $2.09$ & $21.85$           \\ \cline{2-6} 
                         & us ($\alpha = 4.0$) & $0.96$  & $1.35$ & $1.82$  & $13.01$          \\ \cline{2-6} 
                         & XGBoost             & $0.29$  & $0.28$ & $0.46$  & $3.16$           \\ 
\end{tabular}
\caption{Average compute times per run ($10$ runs) in seconds across the datasets. Note that the values of $a_{f}$ are chosen identically to choices in Section~\ref{sec:additionalclassnoise}.}
\label{table:computetimes}
\end{center}
\end{table}

XGBoost is a very fast, very well engineered boosting algorithm. 
It employs many different hyperparameters and customizations.
In order to report the fairest comparison between AdaBoost, \pmboost~, and XGBoost, we opted to keep as many hyperparameters fixed (and similar, e.g., depth of decision trees) as possible. 
That being said, it appears that XGBoost inherently uses pruning, so the algorithm pruned while the other two did not. 
Further details regarding three other important points related to XGBoost:
\begin{enumerate}
    \item Please refer to Table~\ref{table:computetimes} for averaged compute times for the three different algorithms. In general, XGBoost had the far faster computation time among the three. However, note that \pmboost~was not particularly engineered for speed. Indeed, we estimate that the computation of $\tilde{f}$ accounts for $40-50\%$ of the total computation time, which we believe can be improved. Thus, we leave the further computational optimization of \pmboost~for future work.
    \item For details regarding regularization, refer to Figure~\ref{fig:insidertwisterregularizedfeatures}, where we report a comparison of regularized XGBoost and \pmboost~such that the training data suffers from the insider twister. We find that regularization improves the ability of XGBoost to combat the twister, but it is not as effective as \pmboost.
    \item For details regarding early stopping, refer to Figure~\ref{fig:insidertwisterearlystoppingfeatures}, where we report a comparison of early-stopped XGBoost (on un-twisted validation data, i.e., cheating) and \pmboost~such that the training data suffers from the insider twister. We find that even early-stopping does not improve XGBoost's ability to combat the insider twister as effectively as \pmboost.
\end{enumerate}


Early stopping - on an untwisted hold-out set contradicts our experiment. With early stopping enabled on a twisted hold-out set, XGBoost generally did not early stop.


\begin{figure}[h]
\begin{center}
\begin{tabular}{ccc}
\hspace{-0.3cm} \includegraphics[trim=0bp 0bp 30bp 15bp,clip,width=0.48\linewidth]{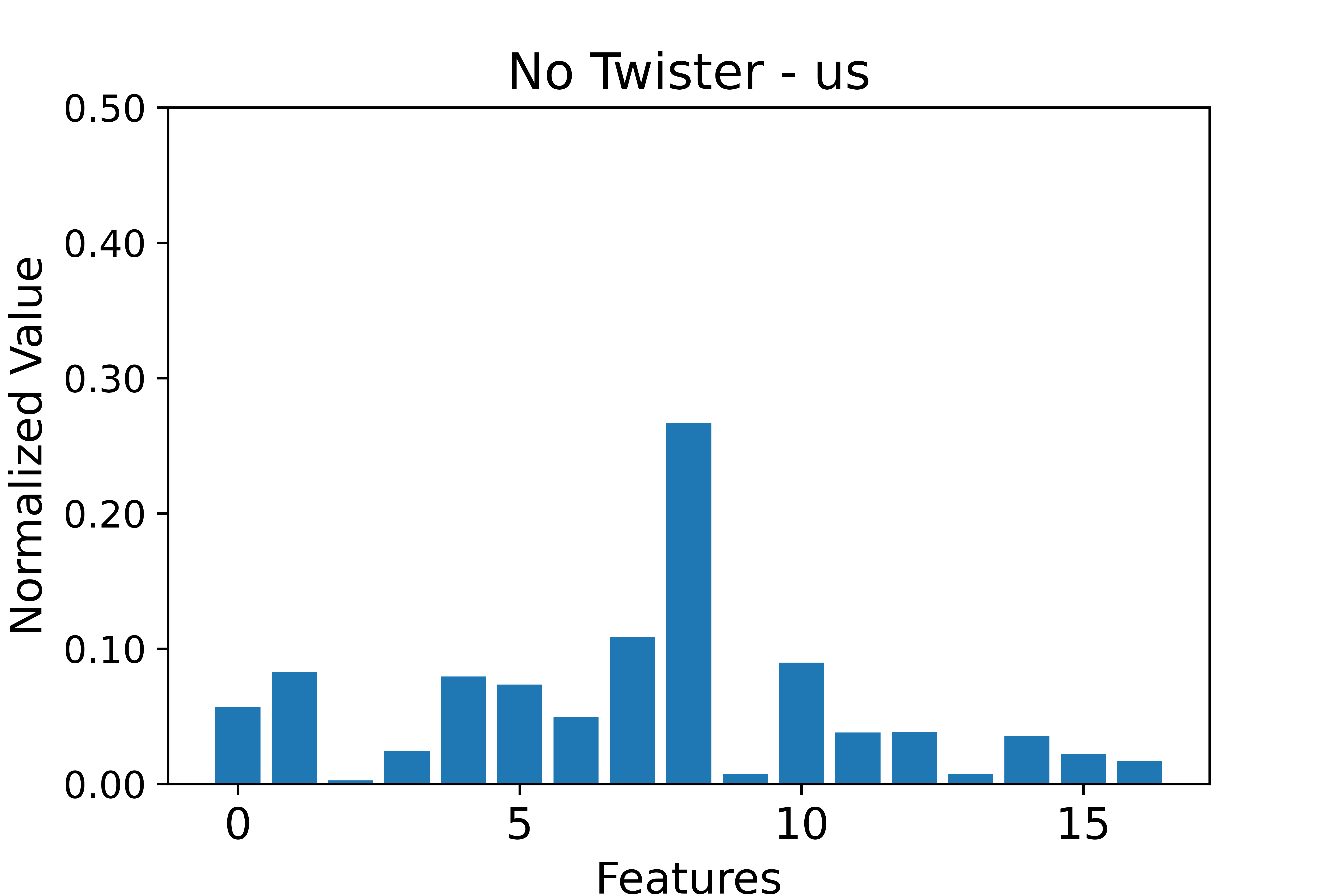}
  \hspace{-0.3cm} & \hspace{-0.2cm} \includegraphics[trim=0bp 0bp 30bp 15bp,clip,width=0.48\linewidth]{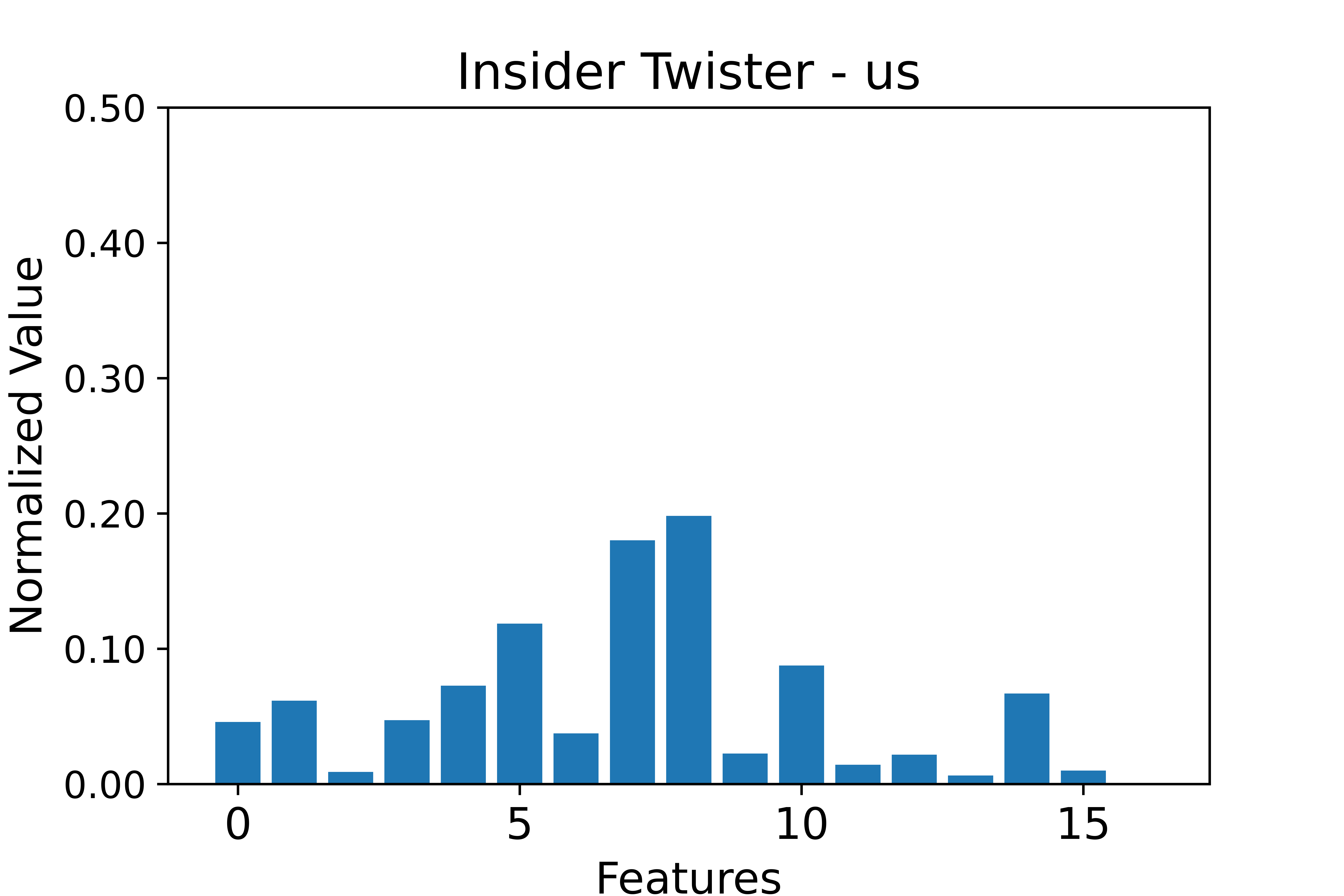}
  \\
\hline
  \hline
  \\
 \hspace{-0.3cm} \includegraphics[trim=0bp 0bp 30bp 15bp,clip,width=0.48\linewidth]{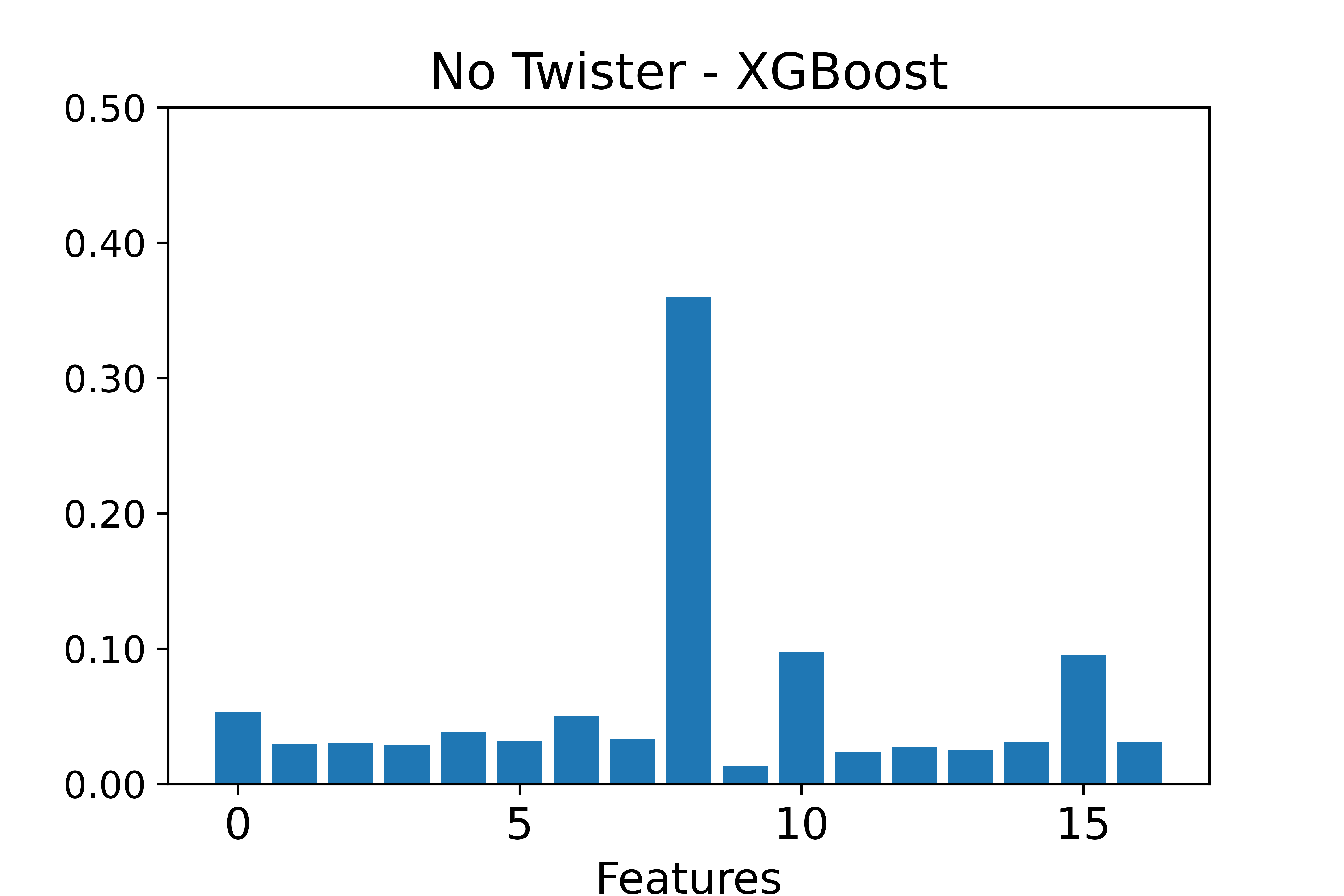}
  \hspace{-0.3cm} & \hspace{-0.2cm} \includegraphics[trim=0bp 0bp 30bp 15bp,clip,width=0.48\linewidth]{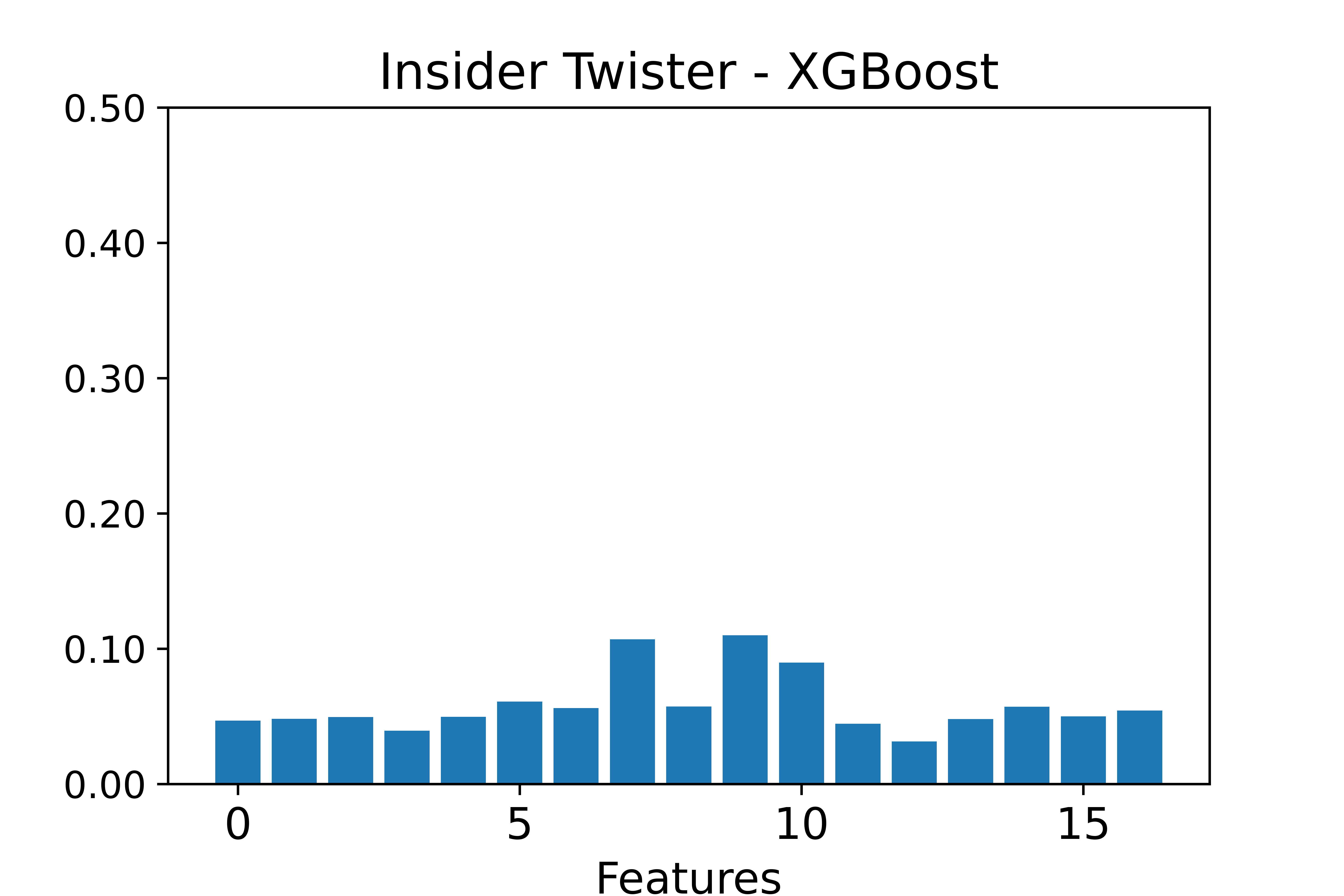}
\end{tabular}
\end{center}
\vspace{-0.2cm}
\caption{With regularization (where $\lambda = 20$), we still observe that the feature importance of XGBoost is perturbed. Note that \pmboost~is not regularized.}
  \label{fig:insidertwisterregularizedfeatures}
\vspace{-0.2cm}
\end{figure}

\begin{figure}[t]
\begin{center}
\begin{tabular}{ccc}
\hspace{-0.3cm} \includegraphics[trim=0bp 0bp 0bp 0bp,clip,width=0.48\linewidth]{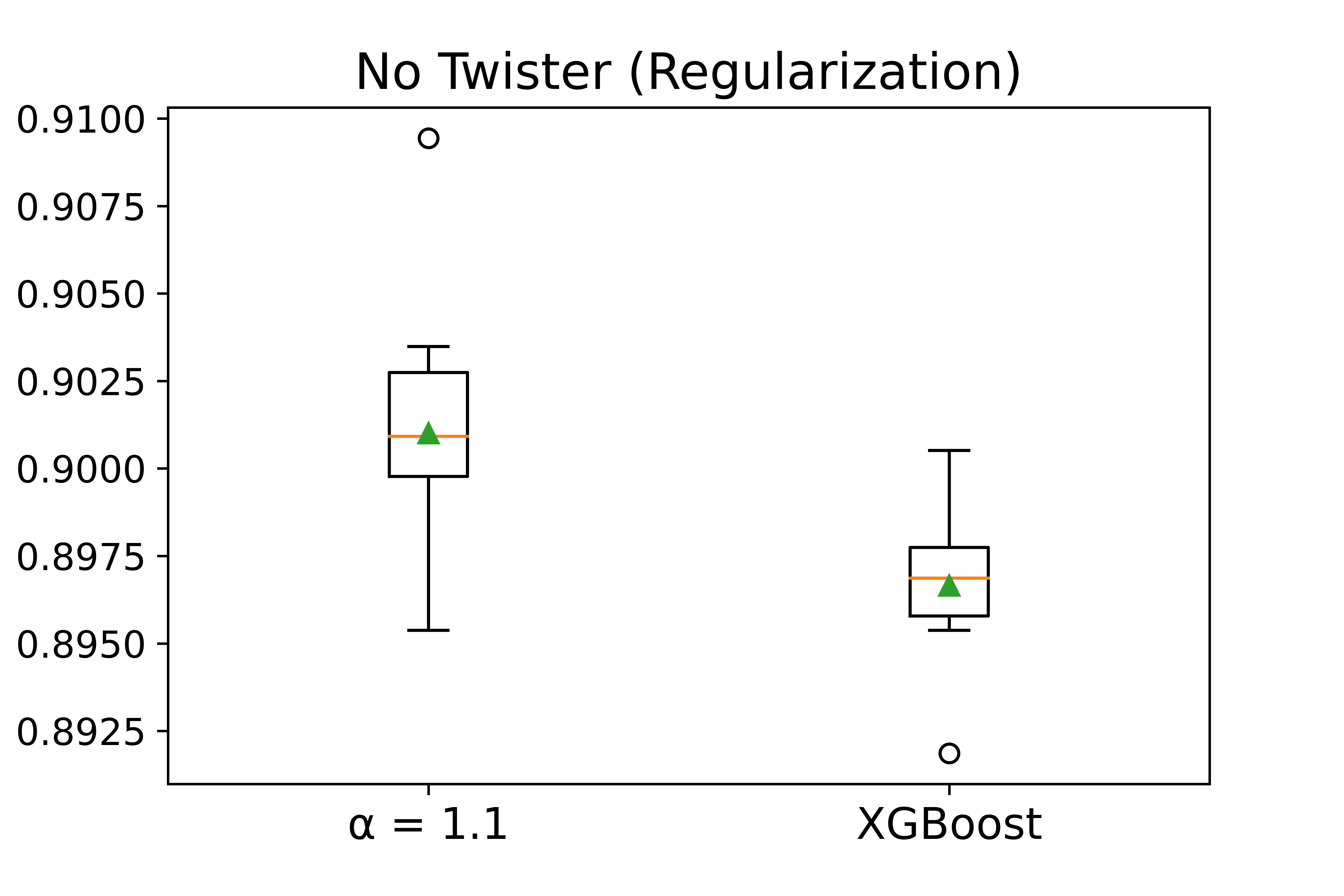}
  \hspace{-0.3cm} & \hspace{-0.2cm} \includegraphics[trim=0bp 0bp 0bp 0bp,clip,width=0.48\linewidth]{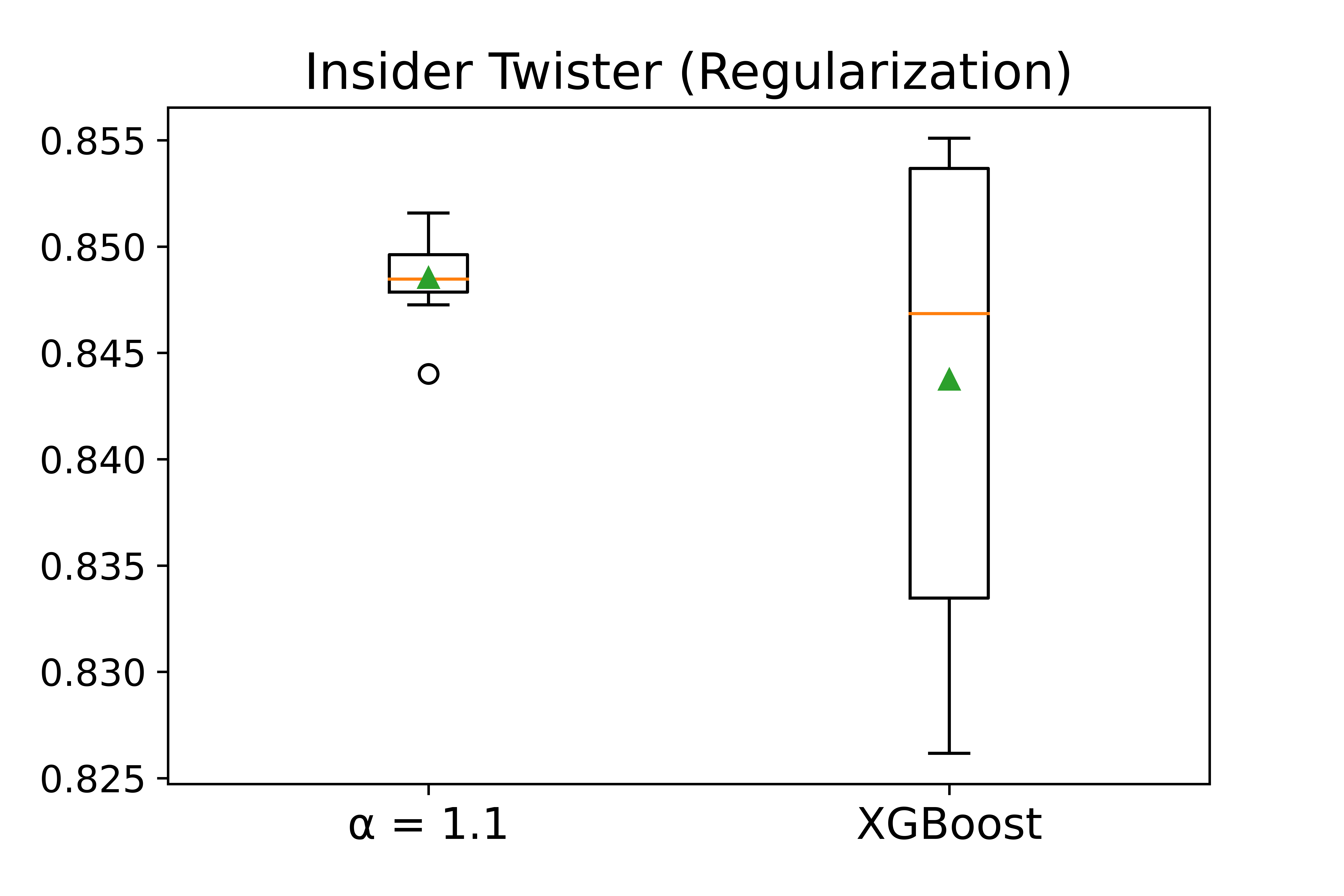}
\end{tabular}
\end{center}
\vspace{-0.2cm}
\caption{Scores associated with Figure~\ref{fig:insidertwisterregularizedfeatures}.}
  \label{fig:averagedearlystoppinginsidertwister}
\vspace{-0.2cm}
\end{figure}


\begin{figure}[h]
\begin{center}
\begin{tabular}{ccc}
\hspace{-0.3cm} \includegraphics[trim=0bp 0bp 30bp 15bp,clip,width=0.48\linewidth]{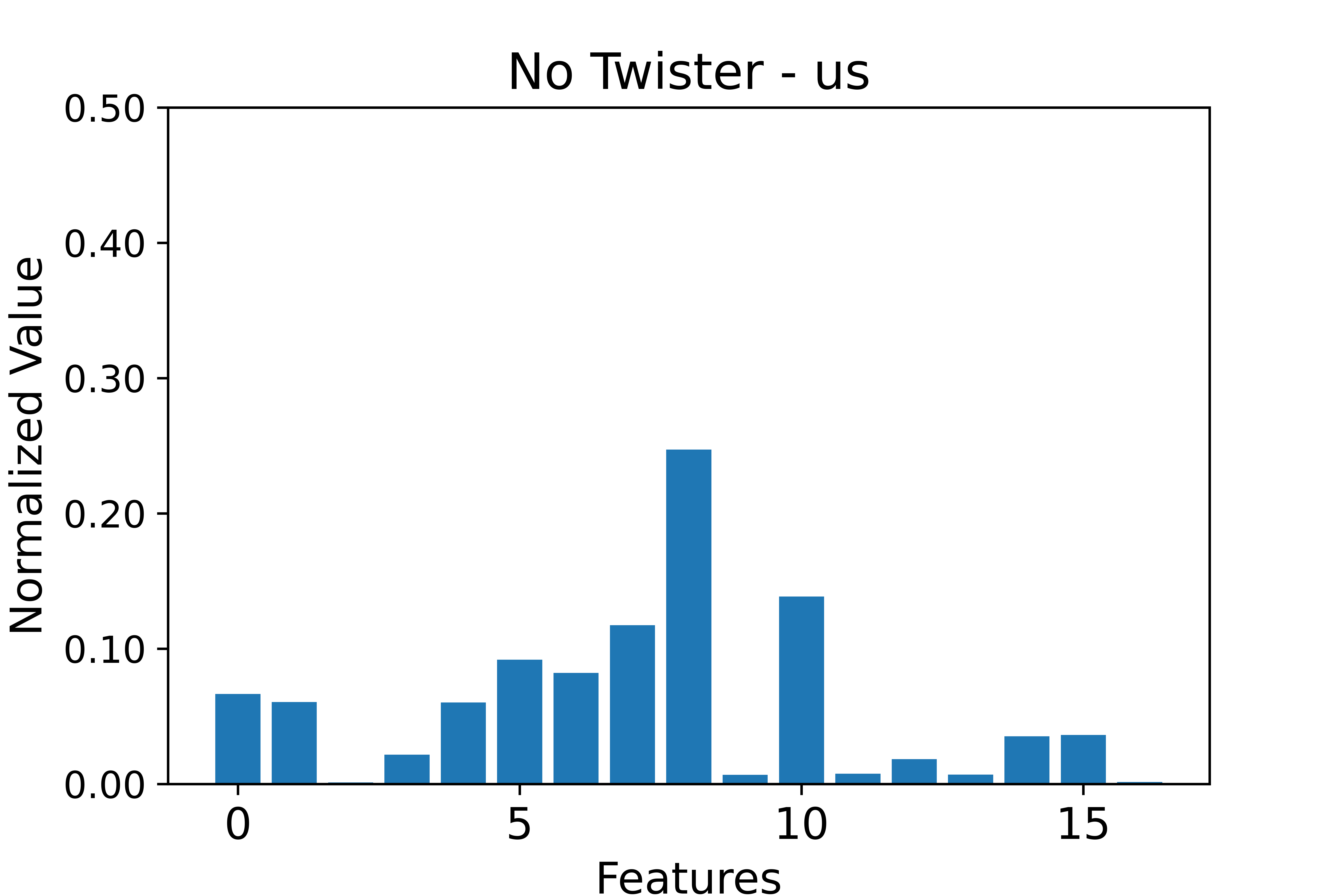}
  \hspace{-0.3cm} & \hspace{-0.2cm} \includegraphics[trim=0bp 0bp 30bp 15bp,clip,width=0.48\linewidth]{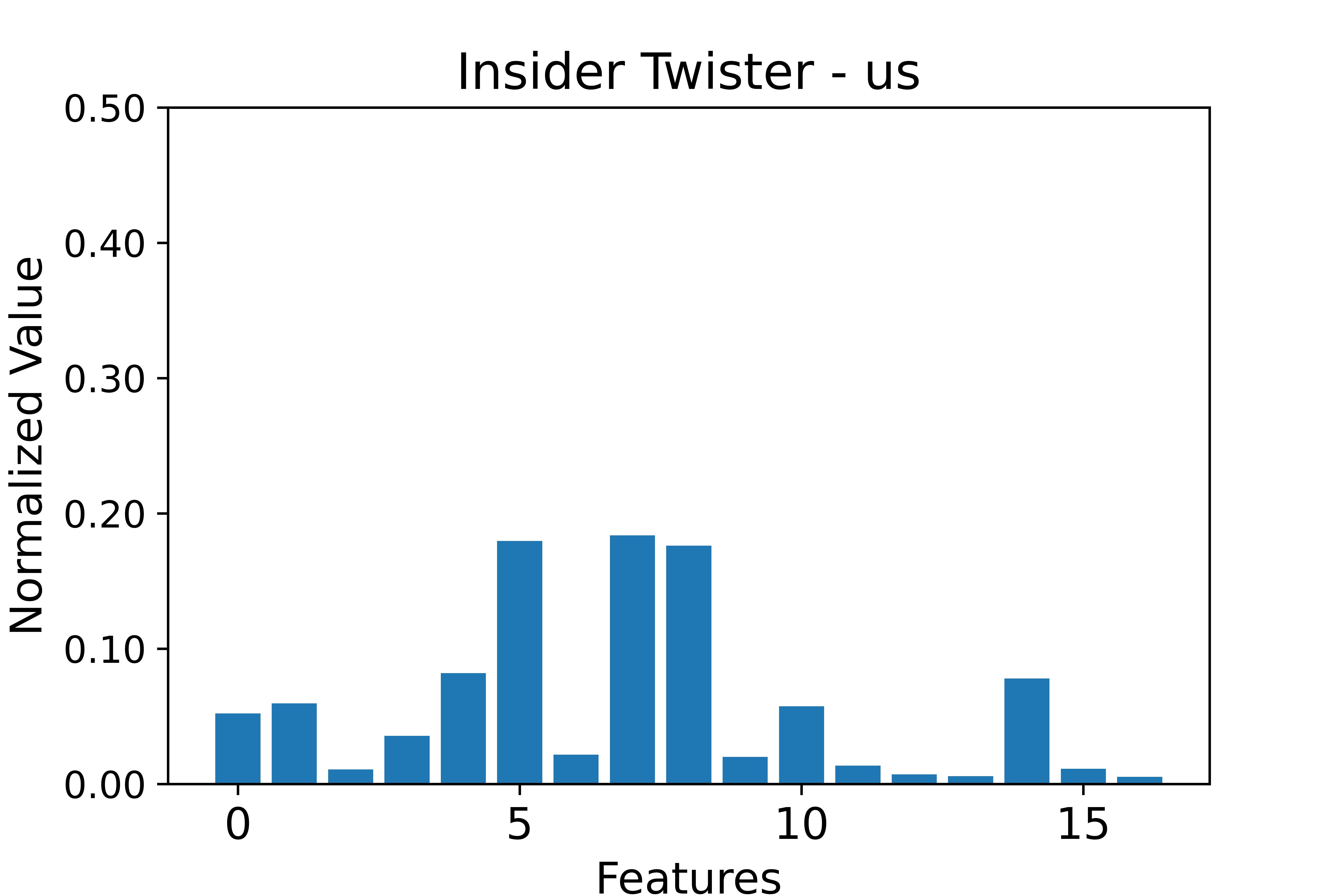}
  \\
\hline
  \hline
  \\
 \hspace{-0.3cm} \includegraphics[trim=0bp 0bp 30bp 15bp,clip,width=0.48\linewidth]{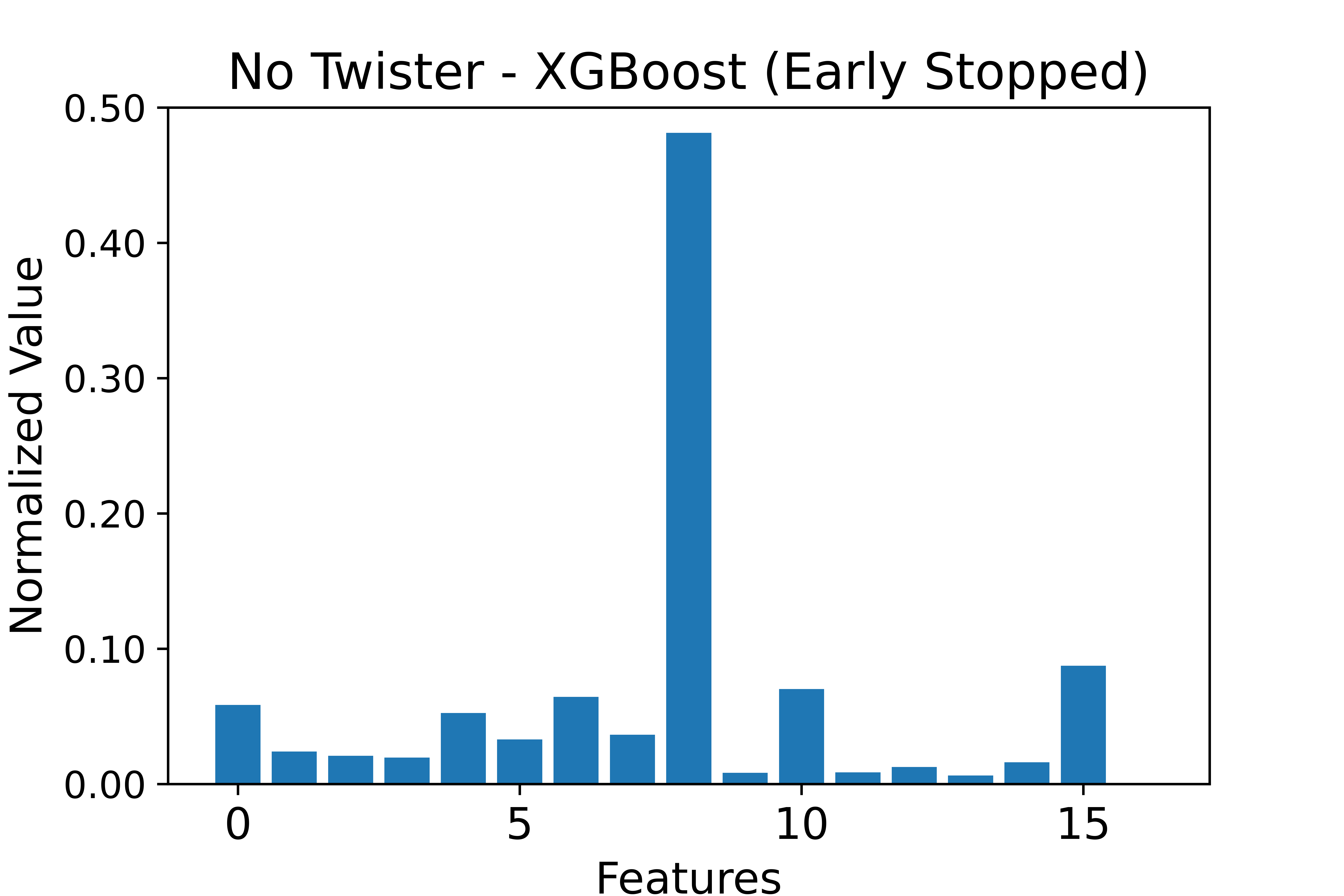}
  \hspace{-0.3cm} & \hspace{-0.2cm} \includegraphics[trim=0bp 0bp 30bp 15bp,clip,width=0.48\linewidth]{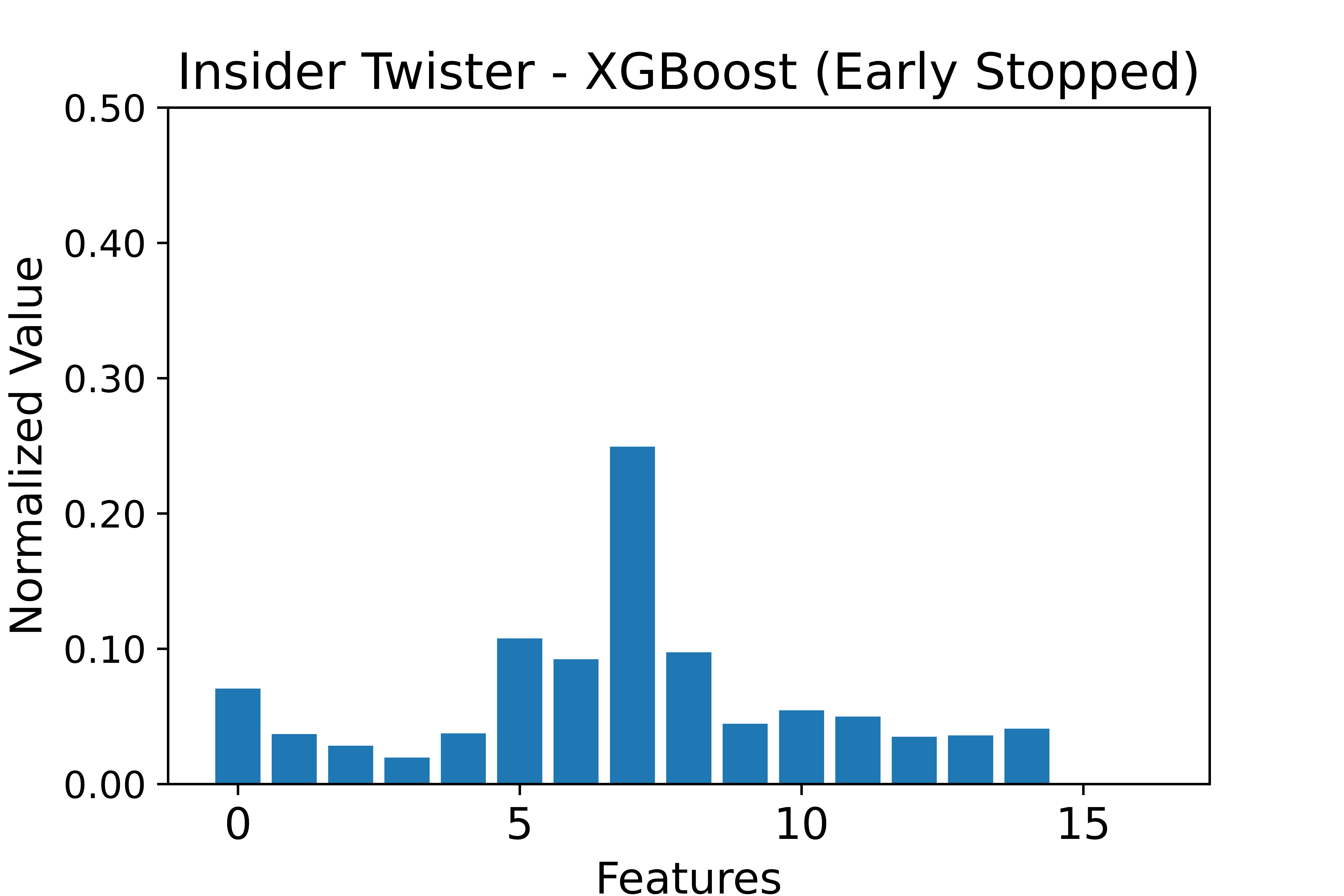}
\end{tabular}
\end{center}
\vspace{-0.2cm}
\caption{With early stopping (where XGBoost has access to clean validation data - cheating scenario), we still observe that the feature importance of XGBoost is perturbed. Note that \pmboost~is not early stopped.}
  \label{fig:insidertwisterearlystoppingfeatures}
\vspace{-0.2cm}
\end{figure}


\begin{figure}[t]
\begin{center}
\begin{tabular}{ccc}
\hspace{-0.3cm} \includegraphics[trim=0bp 0bp 0bp 0bp,clip,width=0.48\linewidth]{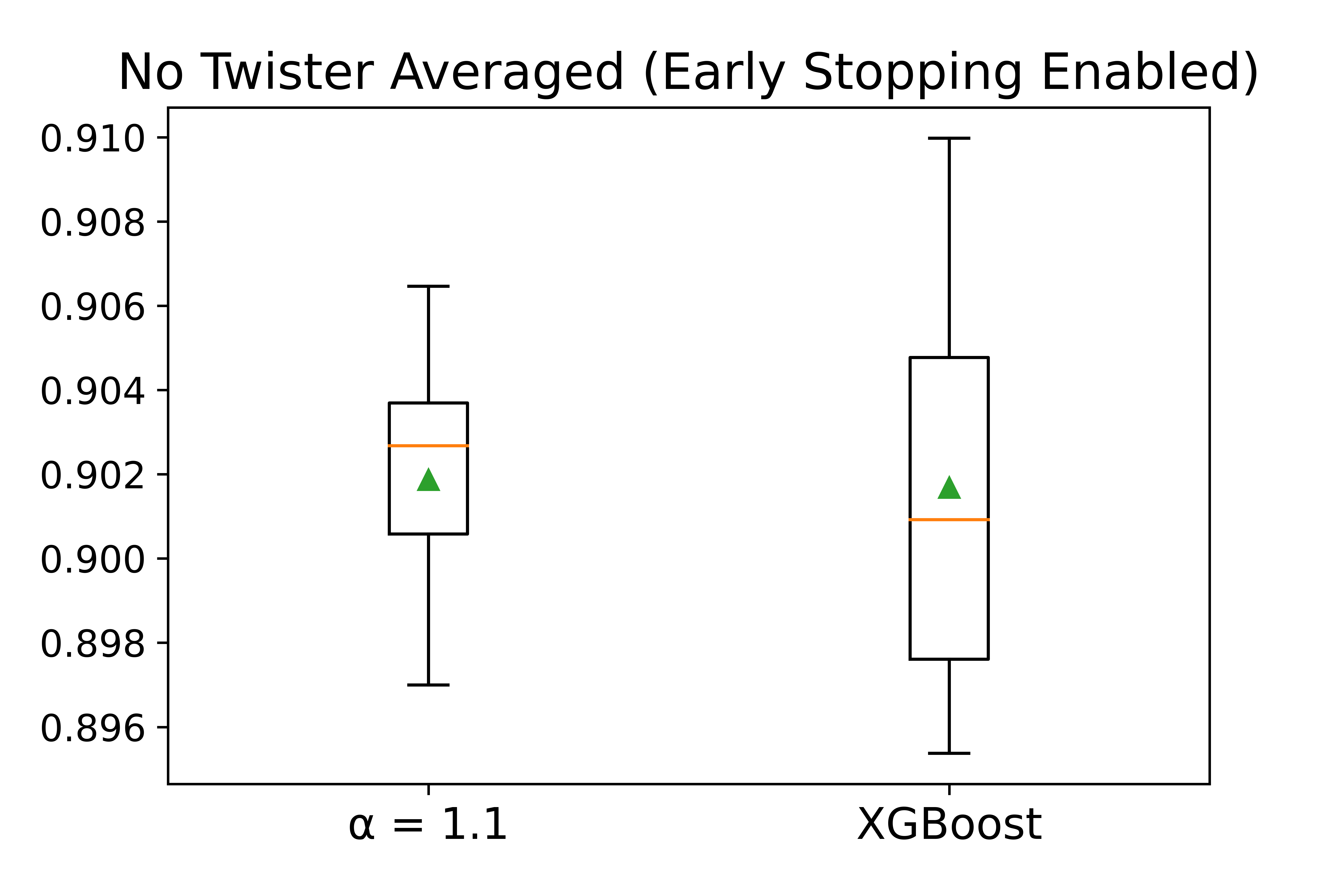}
  \hspace{-0.3cm} & \hspace{-0.2cm} \includegraphics[trim=0bp 0bp 0bp 0bp,clip,width=0.48\linewidth]{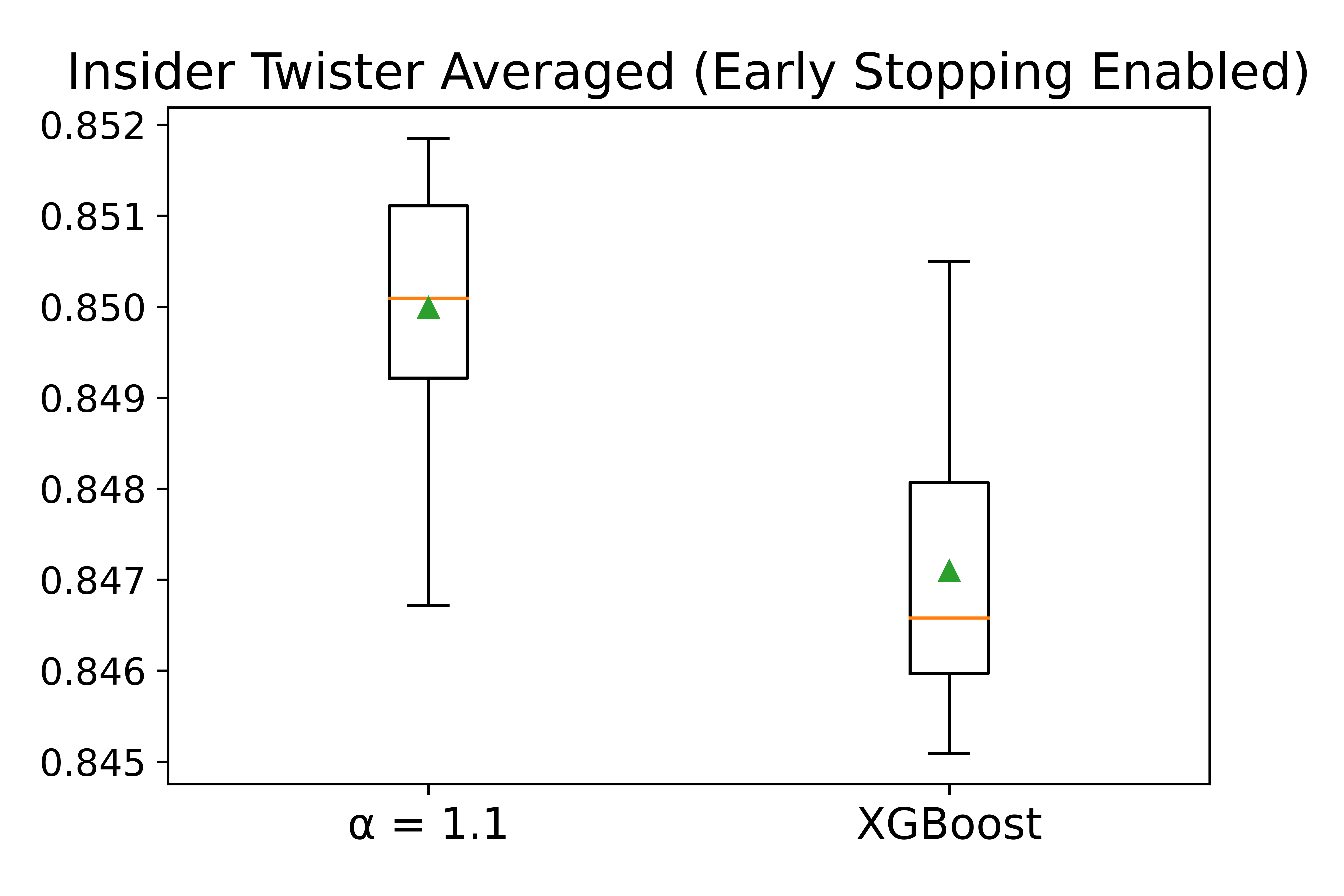}
\end{tabular}
\end{center}
\vspace{-0.2cm}
\caption{Scores associated with Figure~\ref{fig:insidertwisterearlystoppingfeatures}.}
  \label{fig:averagedearlystoppinginsidertwister2}
\vspace{-0.2cm}
\end{figure}

\subsection{Adaptive $\alpha$ Experiment} \label{sec:adaptivealphaexperiments}
\begin{figure}[t]
\bignegspace
  \begin{center}
\includegraphics[scale=.75]{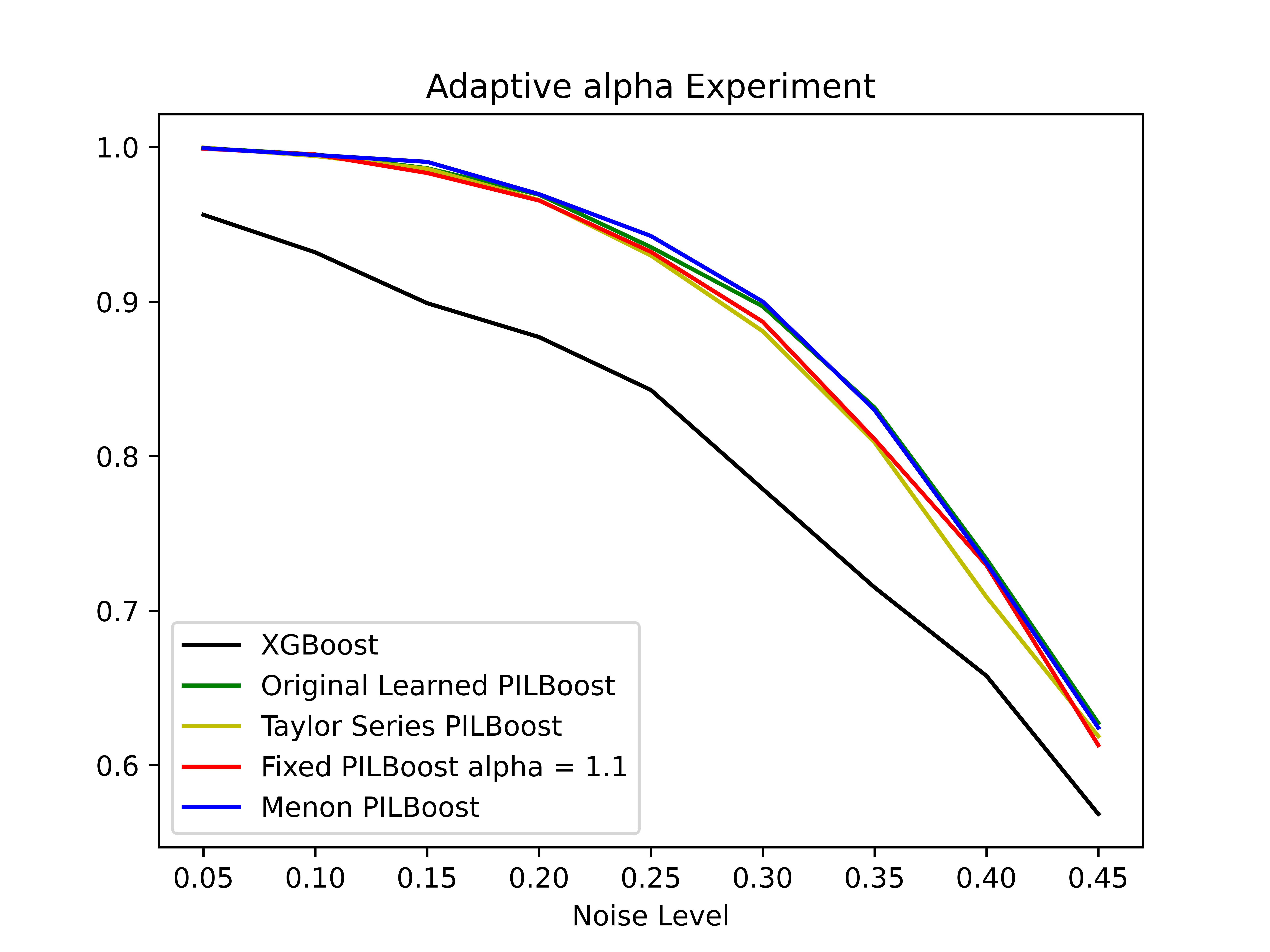}
\end{center}
\caption{Extended version of Figure~\ref{fig:ConfusionMatrix1} with two additional adaptive $\alpha$ methods. Original Learned~\pmboost~estimates its choice of $\alpha$ by using XGBoost as an oracle. 
That is, the method trains XGBoost on the noisy data, then computes its confusion matrix on a \textit{clean} validation set. 
From the confusion matrix, the label flip probability $p$ is estimated using $p = \text{avg}\left(\frac{\text{FP}}{\text{TP} + \text{FP}}, \frac{\text{FN}}{\text{FN} + \text{TN}} \right)$. 
Next, we estimate $\posclean$ and $\posnoise$ with $\posclean = \frac{\text{FN}+\text{TP}}{\text{FP}+\text{TP} + \text{FN} + \text{TN}}$ and $\posnoise = \frac{\text{FP}+\text{TP}}{\text{FP}+\text{TP} + \text{FN} + \text{TN}}$, respectively.
Lastly similar to Menon~\pmboost, using the estimates of $p$, $\posclean$, and $\posnoise$, we apply the formula in Lemma~\ref{lemPointwise}\textbf{(c)} and the SLN formula given just before Lemma~\ref{lemSymLabFlip} to obtain an estimate for $\alpha^{*}$.
Taylor Series~\pmboost~is identical to Original Learned~\pmboost~except at the last step, where a Taylor series approximation of the formula in Lemma~\ref{lemPointwise}\textbf{(c)} is used instead.
We find that Menon's Method also outperforms both of these methods on the xd6 dataset, except for when Original Learned~\pmboost~slightly outperforms Menon's Method in the very high noise regime.
Even stronger, note that both Original Learned~\pmboost~and Taylor Series~\pmboost~assume more information than ``Menon's Method'', which only uses the noisy training data, not a clean validation set.
}
  \label{fig:ConfusionMatrix2} 
     \vspace{-0.3cm}
\end{figure}



\begin{figure}[t]
\begin{center}
\begin{tabular}{ccc}
\hspace{-0.3cm} \includegraphics[trim=0bp 0bp 0bp 0bp,clip,width=0.48\linewidth]{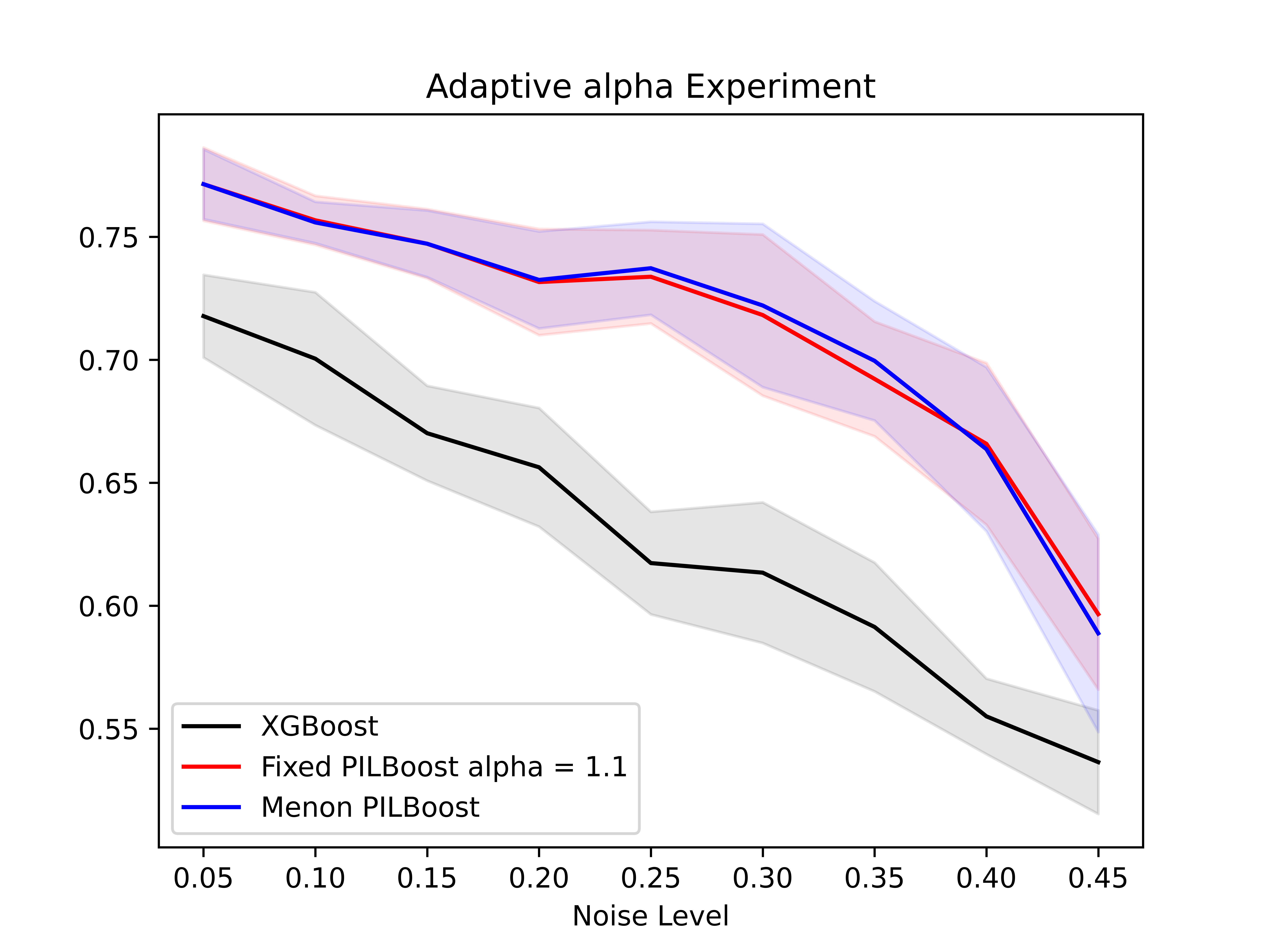}
  \hspace{-0.3cm} & \hspace{-0.2cm} \includegraphics[trim=0bp 0bp 0bp 0bp,clip,width=0.48\linewidth]{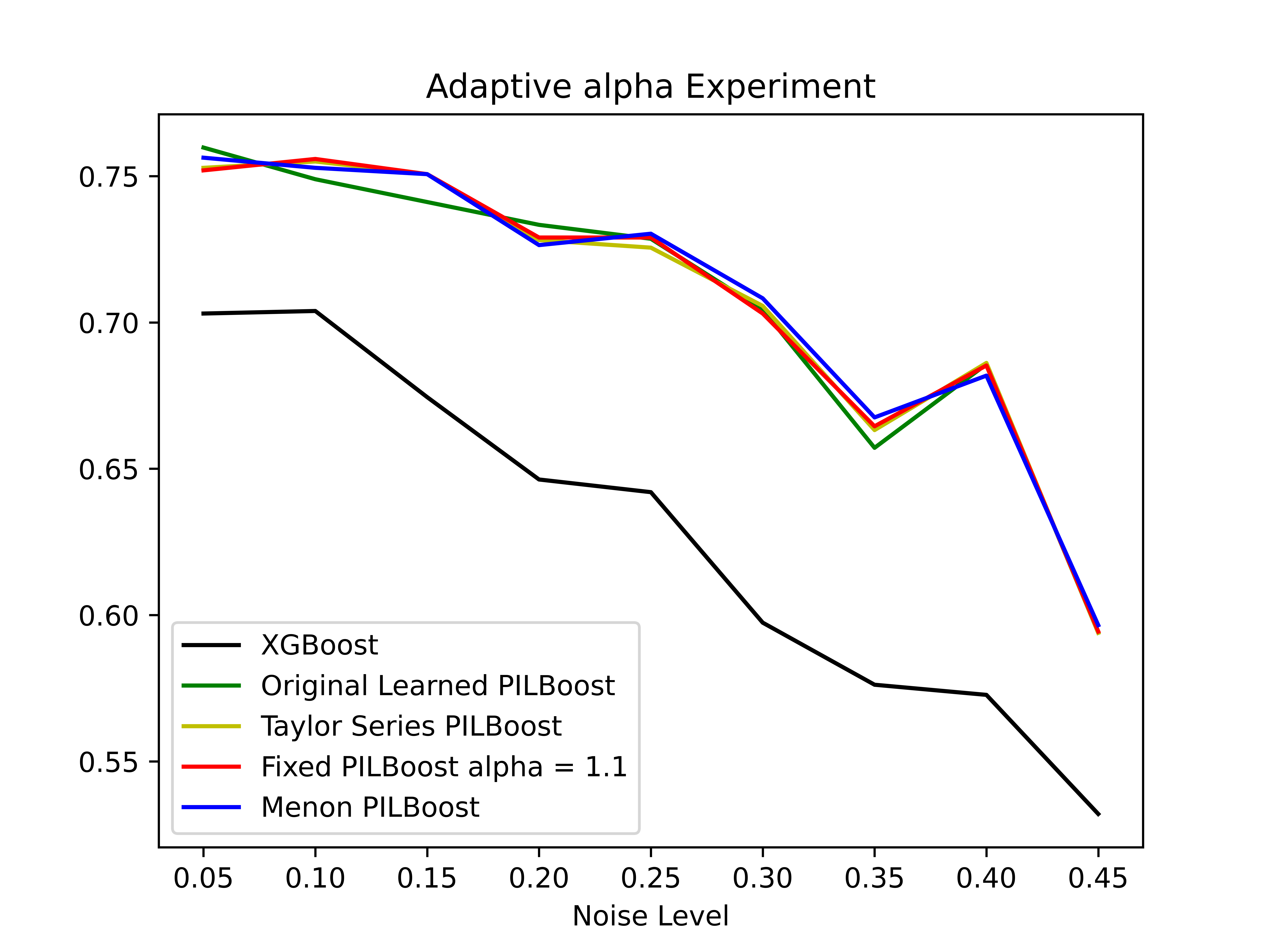}
\end{tabular}
\end{center}
\vspace{-0.2cm}
\caption{Companion Figure to Figures~\ref{fig:ConfusionMatrix1} and~\ref{fig:ConfusionMatrix2} on the \textit{diabetes} dataset.}
  \label{fig:ConfusionMatrix3}
\vspace{-0.2cm}
\end{figure}



\begin{figure}[t]
\begin{center}
\begin{tabular}{ccc}
\hspace{-0.3cm} \includegraphics[trim=0bp 0bp 0bp 0bp,clip,width=0.48\linewidth]{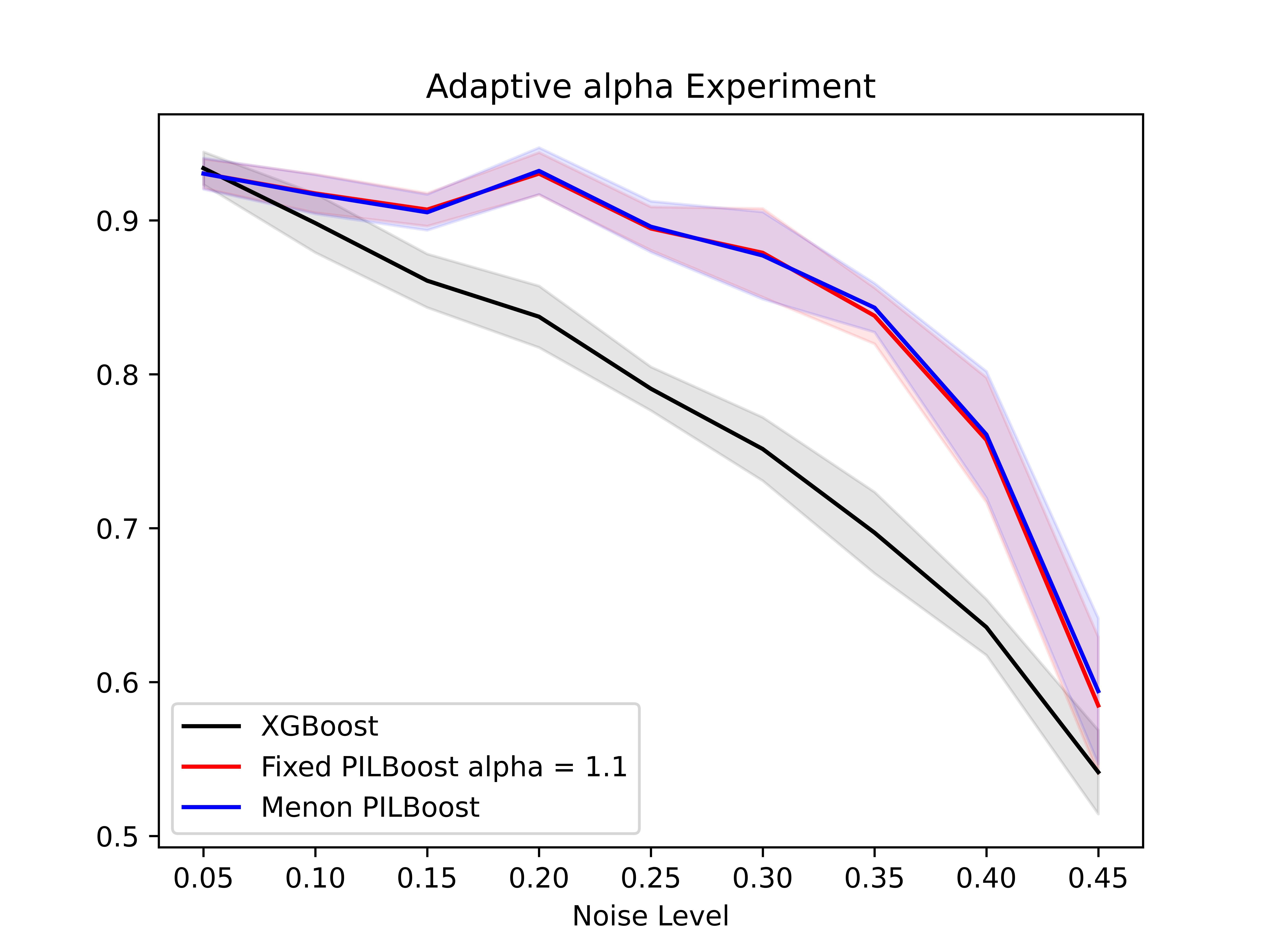}
  \hspace{-0.3cm} & \hspace{-0.2cm} \includegraphics[trim=0bp 0bp 0bp 0bp,clip,width=0.48\linewidth]{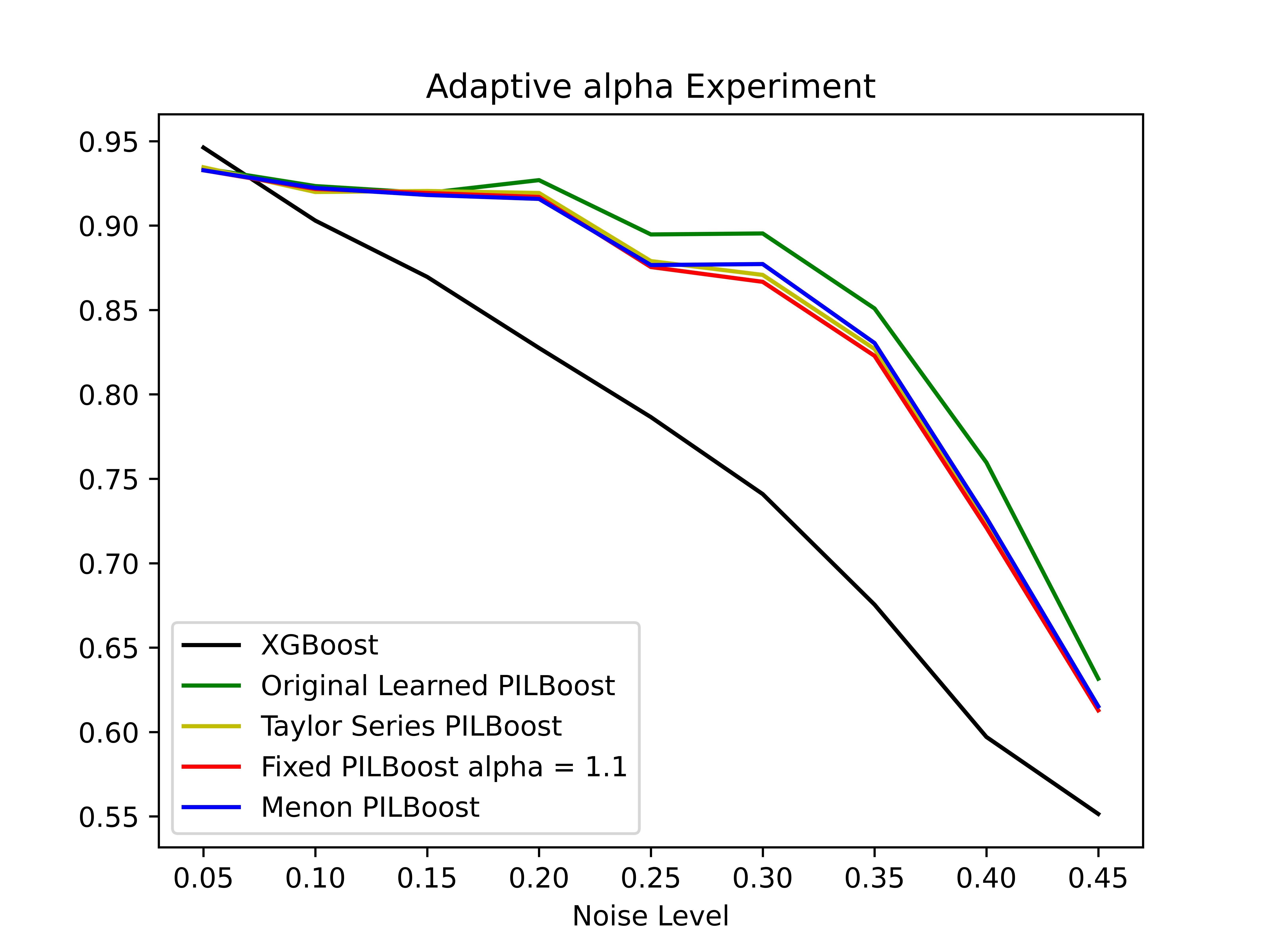}
\end{tabular}
\end{center}
\vspace{-0.2cm}
\caption{Companion Figure to Figures~\ref{fig:ConfusionMatrix1} and~\ref{fig:ConfusionMatrix2} on the \textit{cancer} dataset.}
  \label{fig:ConfusionMatrix4}
\vspace{-0.2cm}
\end{figure}

\end{appendices}


\end{document}